\documentclass{article}
\usepackage{iclr2027_conference,times}

\usepackage{amsmath,amsfonts,bm}

\def\eqref#1{equation~\ref{#1}}

\def\1{\bm{1}}

\DeclareMathAlphabet{\mathsfit}{\encodingdefault}{\sfdefault}{m}{sl}
\SetMathAlphabet{\mathsfit}{bold}{\encodingdefault}{\sfdefault}{bx}{n}

\newcommand{\R}{\mathbb{R}}

\usepackage{url}
\usepackage{hyperref}
\usepackage{subcaption}
\usepackage{graphicx}
\usepackage{booktabs}
\usepackage{multirow}
\usepackage{siunitx}
\usepackage{accents}
\usepackage{ulem}
\usepackage{enumitem}
\usepackage{mathrsfs}
\usepackage{amsmath}
\usepackage{amsthm}
\usepackage{wrapfig}
\usepackage{microtype}
\usepackage{xurl}
\usepackage{thmtools}

\usepackage{natbib}
\usepackage{mathtools}
\usepackage{cleveref}
\usepackage{float}
\usepackage{tikz}
\usetikzlibrary{arrows.meta,positioning}

\usepackage[scaled=.92]{helvet} 
\usepackage[export]{adjustbox}

\newif\ifappendixdraftfigs

\newcommand{\appendixfig}[2][]{%
  \ifappendixdraftfigs
    \appendixfigdraft{#1}{#2}%
  \else
    \includegraphics[#1]{#2}%
  \fi
}
\def\appendixfigend{}
\def\appendixfigdraft#1#2{\appendixfigsplit #2\appendixfigend{#1}}
\def\appendixfigsplit figs_#1\appendixfigend#2{\includegraphics[#2]{figs_draft_#1}}

\title{WavePrune: One period is often enough for RoPE}

\author{%
  Guancheng Du\footnotemark[1] \quad
  Luotian Huang\footnotemark[1] \quad
  Shaowen Wang\footnotemark[1] \quad
  Si Li \quad
  Kaifeng Lyu \\
  Tsinghua University \\
  \texttt{\{dgc24,huanglt24,wangsw23,s-li24\}@mails.tsinghua.edu.cn}\\
  \texttt{klyu@mail.tsinghua.edu.cn}
}

\iclrpreprintcopy 
\begin{document}

\maketitle

{
\renewcommand{\thefootnote}{\fnsymbol{footnote}}
\footnotetext[1]{Equal contribution}
}

\renewcommand{\thefootnote}{\arabic{footnote}}
\setcounter{footnote}{0}

\begin{abstract}
Rotary Position Embedding (RoPE) encodes token positions by rotating each two-dimensional channel of the query and key vectors at a channel-specific frequency, making the attention logits invariant to a common shift of positions. However, this rotation is periodic, and it leads to position aliasing where relative positions separated by a full rotation period become hard to tell apart. To address this, we propose \textit{WavePrune}, which restricts each channel to its \textit{first rotation period}. We show that it removes the distractions in attention maps created by position aliasing and improves overall long-context performance. Specifically, WavePrune raises the HELMET score on four of five models we test without any extra tuning (e.g., 35.7 → 40.0 on Qwen3-8B).
When pretraining models from scratch, WavePrune also achieves lower validation loss at extrapolated lengths than pretraining without it.
Because WavePrune restricts each channel to a sliding window, it induces a fine-grained sparsity that our hardware-aligned CUDA kernels exploit for 1.15× prefill and 1.24× decoding speedups over FlashAttention-2 at 32K context. Together, these results show that RoPE's periodic structure, widely regarded as essential, is largely redundant beyond the first rotation period.\footnote{\url{https://github.com/ducatiqwq/WavePrune}}

\end{abstract}

\section{Introduction}\label{sec:intro}
Rotary Position Embedding (RoPE)~\citep{su2024roformer} is commonly used in large language models (LLMs) to encode token positions by applying position-dependent rotations to the query and key vectors in attention layers.
A key advantage of RoPE is that it makes the attention logit invariant under a common shift of the token positions,
a property usually referred to as \textit{translation invariance}.

More formally, let $q, k  \in \R^{d_h}$ be the query and key vectors at positions $i$ and $i'$, respectively. 
To compute the attention logit $L_{i,i'}$ between positions $i$ and $i'$,
RoPE first breaks $q, k$ into 2D channels $\bar{q}_c \coloneqq (q_{2c+1},q_{2c+2})$, $\bar{k}_c \coloneqq (k_{2c+1},k_{2c+2})$, where $c = 0, \ldots, d_h/2-1$, and then sums up the inner products between corresponding channels after rotating them by position-dependent angles:
\begin{equation}
    L_{i, i'} = \sum_{c=0}^{d_h/2-1} \ell_c(i-i'), \qquad
    \ell_c(\delta) \coloneqq
    \bar{k}_c^{\top}
    \begin{pmatrix}
    \cos(\delta f_c)&-\sin(\delta f_c)\\[2pt]
    \sin(\delta f_c)&\phantom{-}\cos(\delta f_c)
    \end{pmatrix}
    \bar{q}_c.
\end{equation}
\begin{center}
\includegraphics[width=0.65\textwidth]{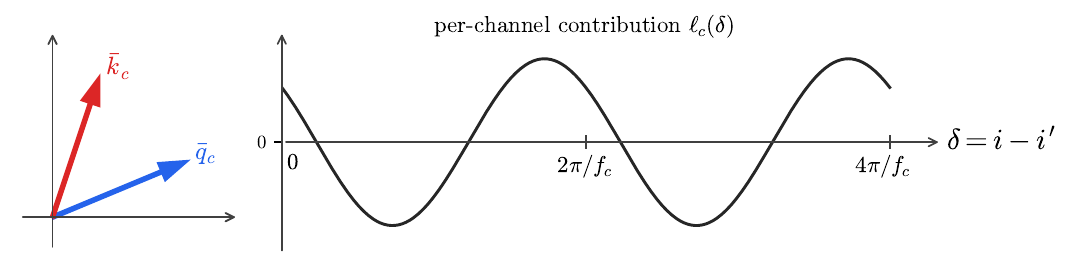}
\end{center}
Here $f_c$ is a fixed value that is called the \textit{rotation frequency} of channel $c$.
One can see that each per-channel contribution in RoPE is a sinusoidal function of the relative displacement $\delta \coloneqq i-i'$ with period $2\pi/f_c$ (for fixed $\overline{q}_c$ and $\overline{k}_c$), which is independent of the absolute positions $i$ and $i'$.
Consequently, the full attention logit $L_{i,i'}$ remains unchanged when both positions are shifted by the same amount, thus leading to translation invariance.

While this translation invariance is natural for language modeling, RoPE is not the only way to achieve it.
In fact, attention logits are translation invariant as long as each per-channel contribution is a function of the relative displacement $\delta \coloneqq i-i'$.
If so, \textit{why does the per-channel contribution in RoPE have to be periodic?}


There are two opposing views on the role of periodicity in RoPE.
On the one hand, these periodic components provide a Fourier-like representation of relative positions, which LLMs can exploit to construct specific positional attention patterns~\citep{barbero2025round,hua2025fourier,wang2026adarope}.
On the other hand, periodicity also introduces a fundamental ambiguity in positional encoding, as different relative positions can acquire similar rotary phases and consequently similar attention patterns~\citep{du2026ropedistinguishespositionstokens}.
This ambiguity can become especially problematic in long-context settings, where the model needs to distinguish relative positions over much longer distances. We refer to this phenomenon as \textit{position aliasing}, following the terminology in \citet{du2026ropedistinguishespositionstokens}.

In this paper, we support the second view by showing that repeated rotations beyond the first period are in fact unnecessary: \textbf{one period is often enough for RoPE}.



\textbf{Our Contributions.}
We propose to restrict each RoPE channel to its first rotation period, a method we call \textbf{WavePrune}.
Across several model families including Qwen, Gemma and Ministral, WavePrune does not harm long-context performance and often improves it (see \Cref{tab:main_results}).
To understand why this is the case, we visualize the attention maps of the original and WavePrune models (\Cref{fig:periodicity_attn_scores}),
and find that WavePrune eliminates many slash patterns that lie along sub-diagonals in the attention maps, which we refer to as \textit{``ghost'' sub-diagonals}.
While slash patterns have often been interpreted as useful attention structures for long-context modeling~\citep{yang2026why,cheng2026demystifying}, we demonstrate that some of them are instead a byproduct of position aliasing in RoPE, and that WavePrune may succeed by removing these distractions. Our key findings are summarized as follows:

\begin{figure}[t]
\centering
\begin{subfigure}[c]{0.3\textwidth}
\centering
\includegraphics[height=4.44cm]{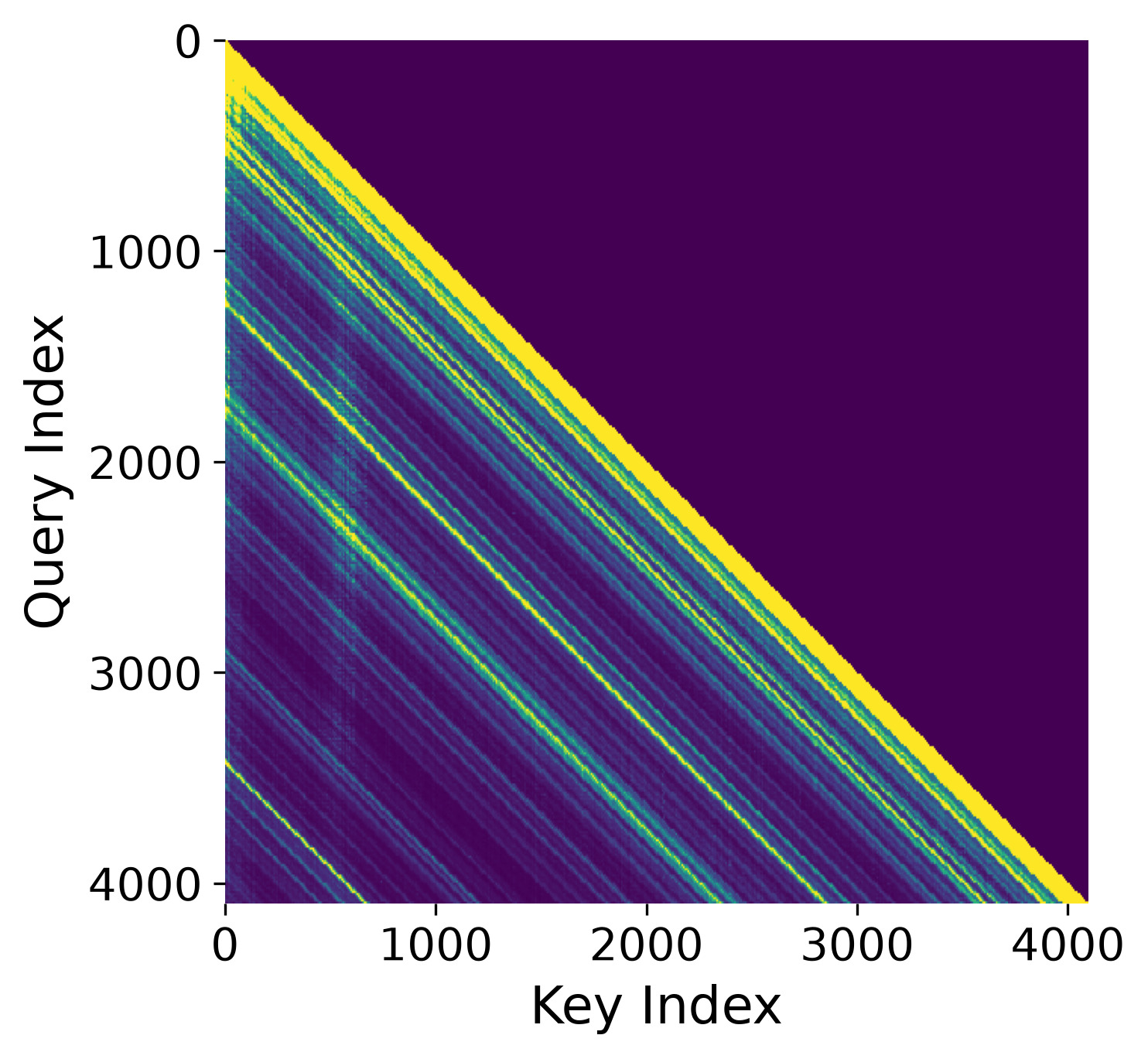}
\captionsetup{margin={1.3cm,0cm}}
\caption{Original}
\label{fig:periodicity_orig_head20}
\end{subfigure}%
\hspace{0.01\textwidth}%
\raisebox{0.4cm}{%
\makebox[0.38\textwidth][l]{\hspace{0.3cm}%
\begin{minipage}[c]{0.38\textwidth}
\centering
\includegraphics[width=4.3cm]{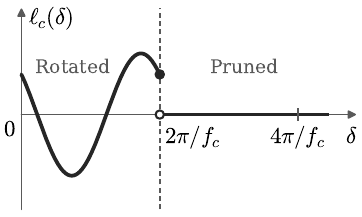}\\[-0.2em]
\begin{tikzpicture}
  \draw[-{Latex[length=2.4mm]}, thick] (0,0) -- (4.6cm,0);
\end{tikzpicture}
\end{minipage}%
}%
}%
\hspace{0.01\textwidth}%
\begin{subfigure}[c]{0.3\textwidth}
\centering
\includegraphics[height=4.44cm]{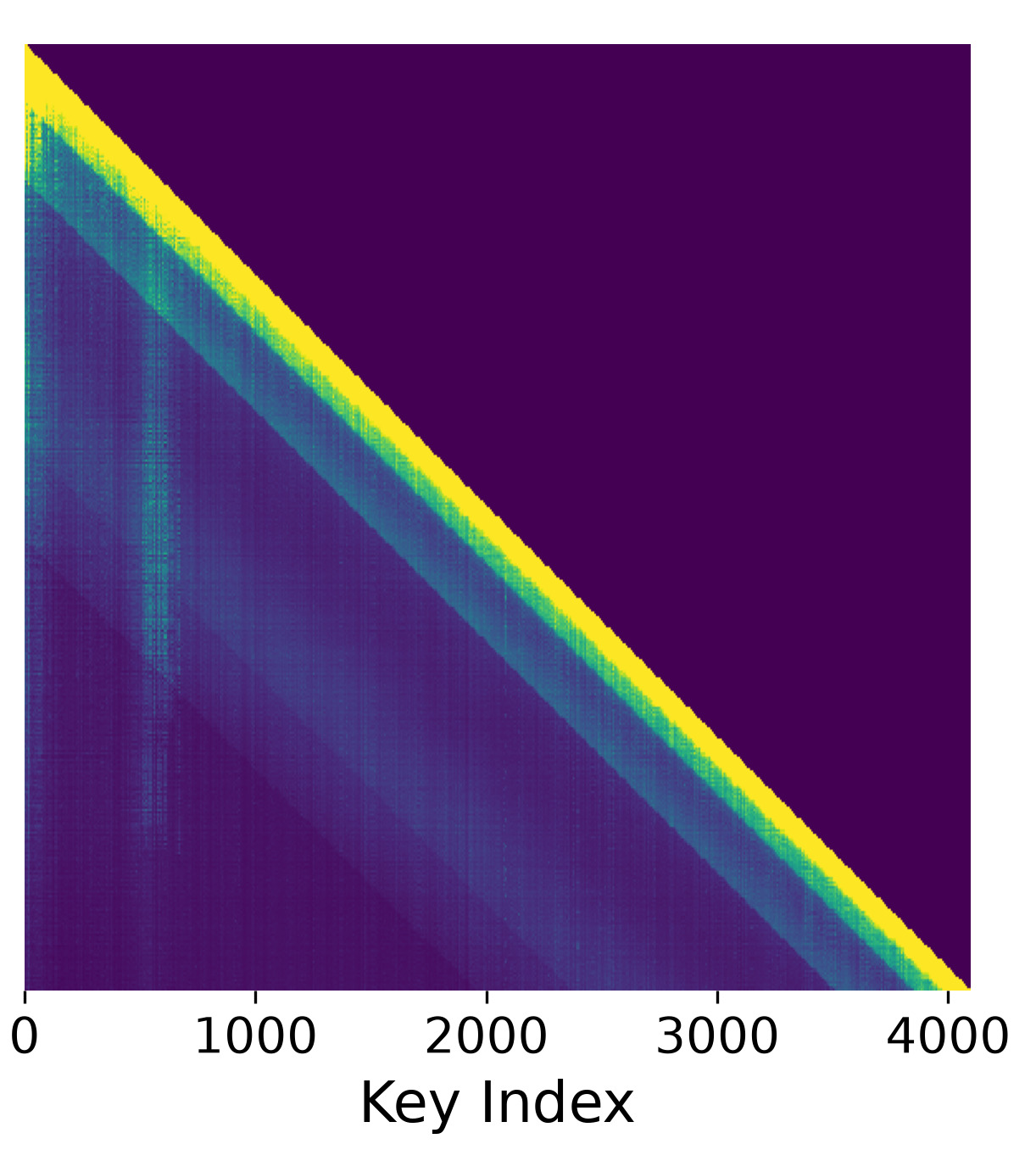}
\captionsetup{margin={0cm,0.16cm}}
\caption{WavePrune}
\label{fig:periodicity_wp_head20}
\end{subfigure}
\caption{In WavePrune, we restrict each channel to its first RoPE period, which removes several ``ghost'' sub-diagonals in the Qwen3-8B attention map.  See Appendix~\ref{app:attention_visualization_details} for details.}
\label{fig:periodicity_attn_scores}
\end{figure}


\begin{itemize}[leftmargin=2.4em,topsep=3.2pt,partopsep=3.2pt,parsep=3.2pt,itemsep=0pt]
    \item We propose WavePrune, a training-free method that restricts each RoPE channel to one rotation period, and show that it removes ``ghost'' sub-diagonals and improves long-context performance, raising the HELMET score on four of five models (e.g., $35.7 \to 40.0$ on Qwen3-8B).
    \item To assess the full potential of WavePrune, we also pretrain models with WavePrune from scratch, which consistently improves length extrapolation over pretraining without it. Controlled experiments on synthetic tasks suggest that WavePrune's success may be related to resolving two distinct failure modes of RoPE in handling short-range and long-range dependencies.
    \item We develop an efficient implementation of WavePrune, called \textbf{FlashWavePrune}, which reduces the computational cost and memory traffic of attention  by leveraging the sparse attention pattern of WavePrune.
    It achieves $1.15\times$ prefill and $1.24\times$ decoding speedups over FlashAttention-2 at 32K context.
\end{itemize}

\section{Preliminaries}\label{sec:preliminaries}
We first review the standard Transformer architecture \citep{NIPS2017_3f5ee243} and Rotary Position Embedding (RoPE) \citep{su2024roformer}, and introduce the notations used throughout the paper.

\textbf{Projecting tokens into QKV.} Consider a Transformer \citep{NIPS2017_3f5ee243} operating on a sequence of $n$ tokens, represented by an input matrix $X \in \mathbb{R}^{n \times d}$. For clarity, we describe a single attention head of dimension $d_h$; a multi-head attention layer simply runs multiple such heads in parallel and concatenates their outputs. The head first maps $X$ through three learned projection matrices $W_Q, W_K, W_V \in \mathbb{R}^{d \times d_h}$ to obtain the query, key, and value matrices:\vspace{0.25em}
\begin{equation}\label{eq:qkv}
Q \coloneqq X W_Q, \qquad K \coloneqq X W_K, \qquad V \coloneqq X W_V \;\in\; \mathbb{R}^{n \times d_h}.
\vspace{0.25em}\end{equation}
We write $q_i, k_i, v_i \in \mathbb{R}^{d_h}$ for the $i$-th rows of $Q, K, V$ (viewed as column vectors), and call them the \textit{query}, \textit{key}, and \textit{value} of token $i$, respectively.

\textbf{Scaled Dot-Product Attention (SDPA).} To determine how much every token attends to another, each query is compared against keys via scaled dot products. Under causal (autoregressive) masking, a token attends only to itself and to preceding tokens, so 
for $j \le i$ the \textit{attention logit} between query $i$ and key $j$ is defined as\vspace{0.25em}
\begin{equation}\label{eq:attention_logit_definition}
L_{i,j} \coloneqq 
\frac{1}{\sqrt{d_h}}\, q_i^{\top} k_j,
\vspace{0.25em}\end{equation}
and for $j > i$ the logit is defined as $L_{i,j} \coloneqq -\infty$.
Applying a row-wise softmax to the \textit{attention logit matrix} $L \in \mathbb{R}^{n \times n}$ yields the \textit{attention score matrix}
$A_{i,j} \coloneqq \exp(L_{i,j}) / \left(\sum_{k=1}^{n} \exp(L_{i,k})\right)$.
Consequently, every row of the resulting \textit{attention map} $A \in \mathbb{R}^{n \times n}$ forms a probability distribution over the keys. The head output then aggregates the value vectors as 
$Y \coloneqq A V \in \mathbb{R}^{n \times d_h}$, a convex combination weighted by the attention scores.

\textbf{Injecting positions by rotating queries and keys.} While the order of tokens is crucial for language modeling, the standard dot product $q_i^{\top} k_j$ in Equation~\ref{eq:attention_logit_definition} is oblivious to token position.
To inject positional information, Rotary Position Embedding (RoPE) \citep{su2024roformer} rotates the coordinates of queries and keys according to their token positions prior to the dot product.
More specifically, for a vector $x$ of dimension $d_h$, we split it into $d_h/2$ channels, where channel $c$ is the 2D vector formed by $(x_{2c+1}, x_{2c+2})$ ($c = 0, 1, \ldots, d_h/2-1$).
The rotated channel $c$ at position $i$ is defined as\vspace{0.25em}
\begin{equation}\label{eq:rope}
\varphi_i^{(c)}(x)
\coloneqq
\begin{pmatrix} \cos(i\,f_c) & -\sin(i\,f_c) \\[2pt] \sin(i\,f_c) & \phantom{-} \cos(i\,f_c) \end{pmatrix}
\begin{pmatrix} x_{2c+1} \\[2pt] x_{2c+2} \end{pmatrix}, \quad c = 0, 1, \ldots, d_h/2-1,
\vspace{0.25em}\end{equation}
where $f_c \in \R$ is a fixed frequency for channel $c$. We further define the position-dependent map $\varphi_i : \mathbb{R}^{d_h} \to \mathbb{R}^{d_h}$ as the vector formed by concatenating all the rotated channels, so that the channel $c$ of $\varphi_i(x)$ is $\varphi_i^{(c)}(x)$.
For unmasked tokens ($j \le i$), the attention logit in Equation~\ref{eq:attention_logit_definition} becomes
\begin{equation}\label{eq:rope_logit} 
L_{i,j} \coloneqq \frac{1}{\sqrt{d_h}}\, \varphi_i(q_i)^{\top} \varphi_j(k_j) = \frac{1}{\sqrt{d_h}} \! \sum_{c=0}^{d_h/2-1} \! \varphi_i^{(c)}(q_i)^{\top} \varphi_j^{(c)}(k_j).
\end{equation}
\textbf{Frequency schedule.} The sequence $\{f_c\}_{c=0}^{d_h/2-1}$ is called the \textit{frequency schedule}, which dictates how rapidly the queries and keys rotate with respect to their positions. A high frequency $f_c$ causes channel $c$ to rotate rapidly, making it ideal for distinguishing positions but unreliable for resolving long-range dependencies. Conversely, a low-frequency channel rotates slowly, making it better suited for tracking distant tokens \citep{barbero2025round,cheng2026demystifying}. In the standard RoPE formulation, the frequency schedule follows a geometric progression $f_c \coloneqq \theta^{-2c/d_h}$ governed by a large base $\theta$ (e.g., $\theta = 10^6$ for Qwen3~\citep{yang2025qwen3technicalreport}).
Several subsequent works have proposed alternative frequency schedules \citep{chen2023extendingcontextwindowlarge,peng2024yarn,barbero2025round} to extend the model's effective context length.



\section{WavePrune}


\subsection{WavePrune: Channel-wise Attention Truncation}\label{sec:channel_wise_attn_truncation}

In this paper, we propose WavePrune, which exploits RoPE's periodic structure to prune attention at \textit{channel level}. Within each RoPE channel $c$, WavePrune restricts the attention computation to a local window whose size equals the wavelength $w_c \coloneqq 2\pi / f_c$. The attention logit $L_{i,j}$ is then computed as
\begin{equation}\label{eq:waveprune}
L_{i,j} \coloneqq \frac{1}{\sqrt{d_h}} \! \sum_{c=0}^{d_h/2-1} \! \varphi_i^{(c)}(q_i)^{\top} \varphi_j^{(c)}(k_j) \cdot \mathbf{1}_{|i-j| \le w_c}
\end{equation}
Conceptually, WavePrune acts as a fine-grained mask over both the sequence and head dimensions. 
Compared to conventional attention masks that operate on tokens, such as those in sliding window attention (SWA)~\citep{beltagy2020longformerlongdocumenttransformer}, the key difference here is that WavePrune truncates the attention span in a channel-dependent manner (Figure~\ref{fig:waveprune_mask_concept}).

\begin{figure}[t]
    \centering
    \includegraphics[width=0.84\textwidth]{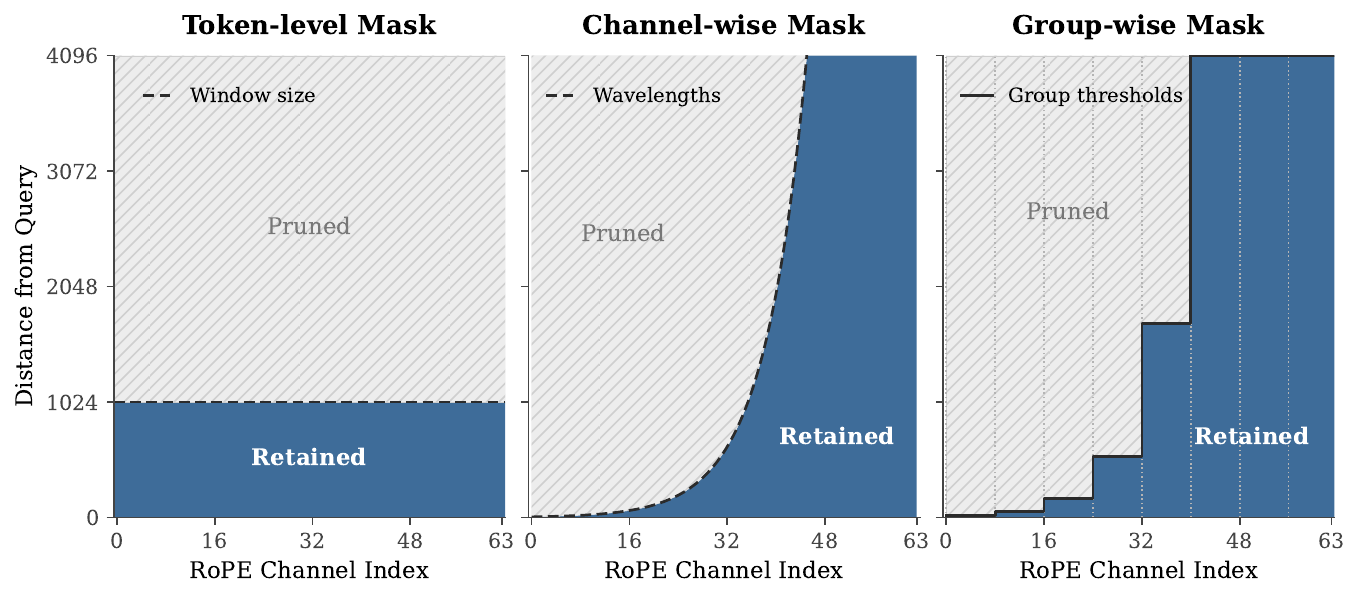}
    \caption{Conceptual comparison between the conventional token-level mask (e.g., sliding window mask), WavePrune's channel-wise mask, and its group-wise approximation. We use the group-wise approximation in our efficient implementation of WavePrune, FlashWavePrune (\Cref{sec:system_design}).}
    \label{fig:waveprune_mask_concept}
\end{figure}

\begin{wrapfigure}{r}{0.47\textwidth}
\vspace{-\intextsep}
\centering
\includegraphics[width=\linewidth]{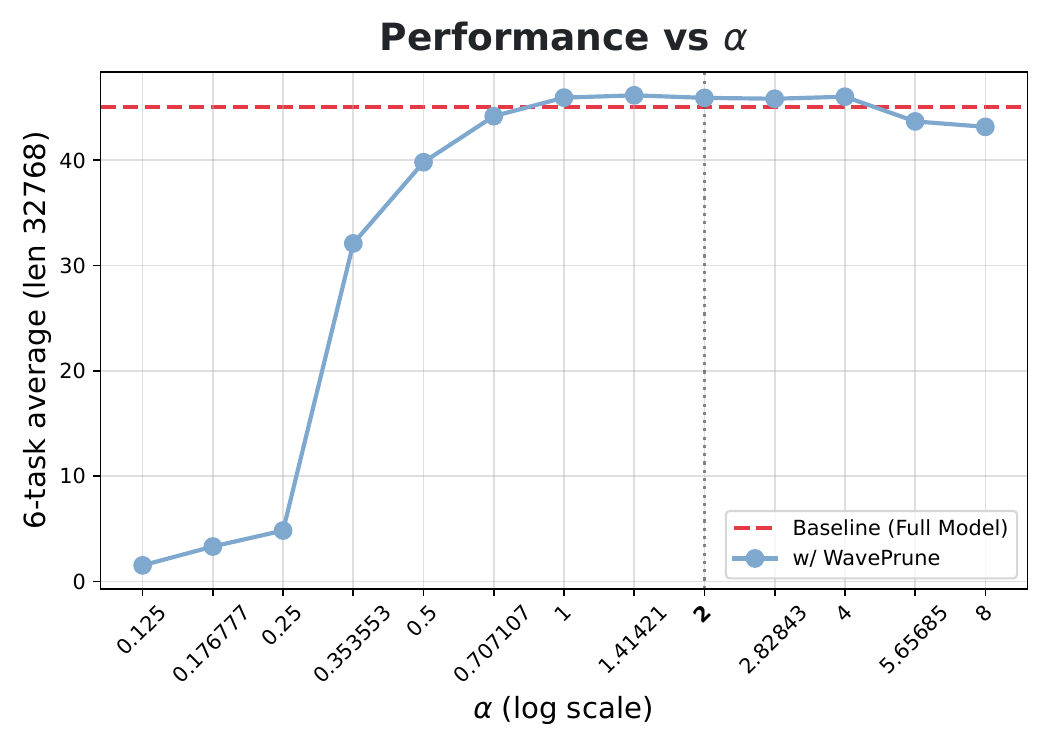}
\vspace{-2em}
\caption{
    When tuning the pruning coefficient $\alpha$ in the exact version of WavePrune, the HELMET score peaks slightly below $\alpha = 2$ (one period) and declines beyond it. The experiment is conducted on Qwen3-8B with 32K context. 
    }
\label{fig:alpha_exact}
\vspace{-0.1in}
\end{wrapfigure}

\textbf{One period is often enough for RoPE.} Since WavePrune discards all but the first rotation period of each channel, a natural question is whether such truncation loses useful context. To examine this, we scale each channel's threshold by a coefficient $\alpha$ (i.e., $w_c \coloneqq \alpha\pi/f_c$), where $\alpha=2$ corresponds to one period, the default setting.
As shown in Figure~\ref{fig:alpha_exact}, the HELMET score (Qwen3-8B, 32K context) peaks slightly below $\alpha=2$ and declines beyond it. This suggests that the first period already captures the required context information, while retaining too many periods hurts accuracy. Conversely, aggressive pruning at small $\alpha$ significantly degrades performance. Section~\ref{sec:experiments} details additional experimental results demonstrating WavePrune's benefits over the original model.

\textbf{Efficiency benefits of WavePrune.} If implemented properly, WavePrune can reduce the computational cost and memory traffic of attention. In Section~\ref{sec:system_design}, we develop an efficient implementation of WavePrune based on FlashAttention called \textbf{FlashWavePrune}, where the exact version of WavePrune is approximated by splitting the head dimensions into groups of size 16 and letting all dimensions in the same group share the same threshold for pruning (see Figure~\ref{fig:waveprune_mask_concept} for an illustration).
Beyond these computational benefits, WavePrune could also reduce KV cache memory by evicting cached key components once their distance from the current query exceeds the corresponding pruning thresholds. We leave this optimization for future work.

\subsection{Analysis: Periodicity of RoPE Distracts Attention}\label{sec:periodicity_distracts}

Having shown that WavePrune preserves model quality, a natural question is why attention beyond the first RoPE period can be safely discarded. We address this by examining LLMs' attention maps.

\begin{figure}[t]
    \centering
    \includegraphics[width=0.63\textwidth]{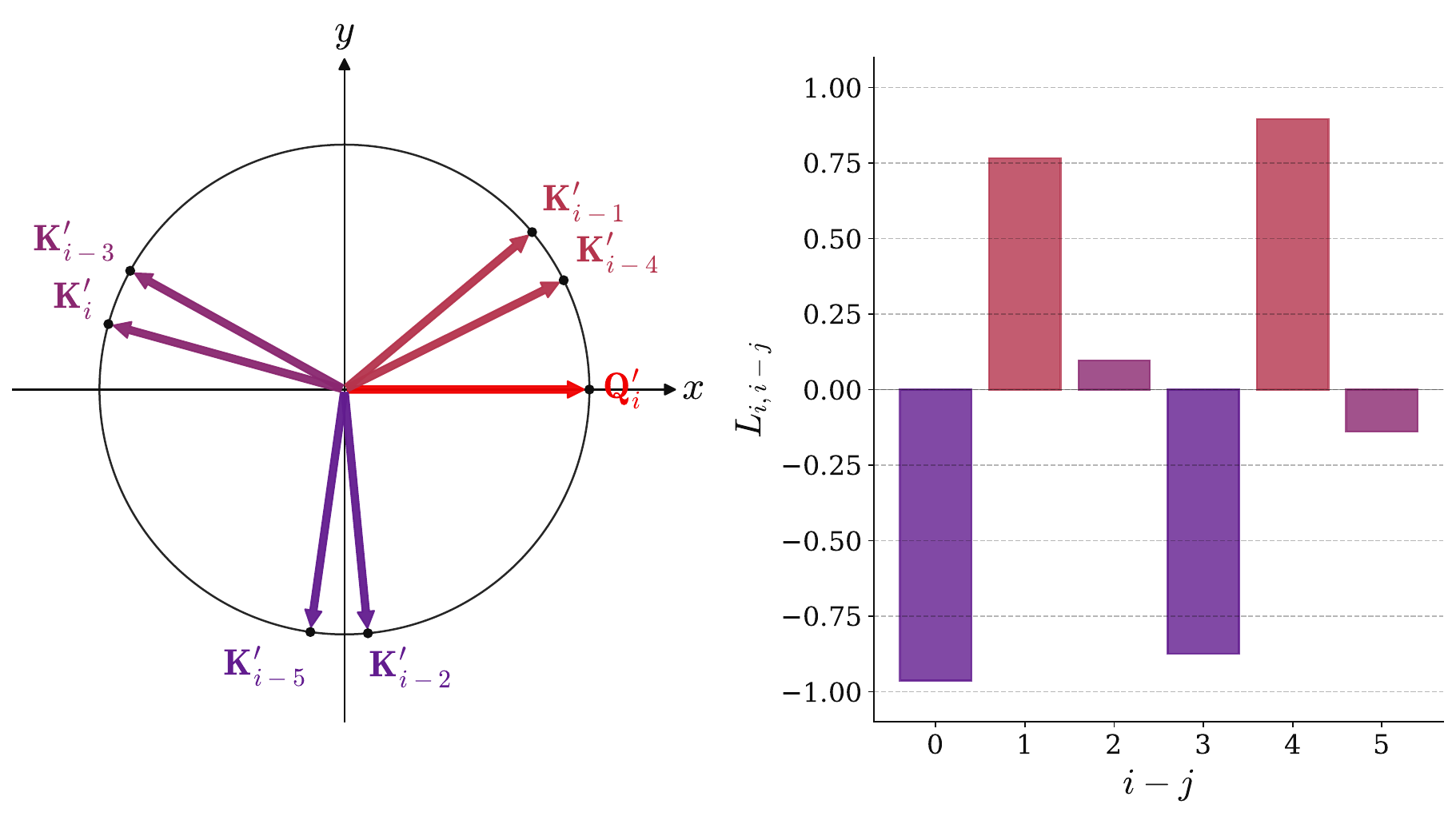}
    \vspace{-0.5em}
    \caption{Illustration of position aliasing in a single RoPE channel. \textbf{Left:} rotated query $\varphi_i(q)$ and keys $\varphi_j(k)$ on the unit circle; keys three positions apart nearly coincide (e.g., $\varphi_{i-1}(k) \approx \varphi_{i-4}(k)$) as their rotation angles differ by almost $2\pi$. \textbf{Right:} the attention logits $L_{i,j}$ are therefore near-periodic with period $3$ (e.g., $L_{i,i-1} \approx L_{i,i-4}$).}
    \label{fig:slash_dominant_head_rotation_collisions}
\end{figure}

\textbf{``Ghost'' sub-diagonals.} To reveal the underlying structural patterns of attention, we visualize the attention maps of LLMs, averaged over $100$ sequences from the PG19 dataset~\citep{rae2020compressive} with fixed sequence length $4096$. As a representative example, we plot the attention map of layer~0, head~20 of Qwen3-8B. As shown in Figure~\ref{fig:periodicity_orig_head20}, attention concentrates near the main diagonal, specifically on the lag-1 sub-diagonal $\{A_{i,i-1}\}_{i=2}^{n}$ (Table~\ref{tab:qwen3_attention_percentiles}), 
which is a typical pattern of a previous-token head \citep{elhage2021mathematical,olsson2022context,yedidia2023positional} where each query attends to the immediately preceding key. However, a considerable portion of attention also leaks onto many other sub-diagonals $\{A_{i, i-d}\}_{i=d+1}^{n}$ at larger lags $d > 1$. We refer to them as \textit{``ghost'' sub-diagonals}.


Importantly, these ``ghost'' sub-diagonals are not an isolated artifact but are prevalent across various heads and models (see Appendix~\ref{app:additional_attention_score_visualizations} for other examples), and analogous patterns have been observed in prior work \citep{yang2026why,shen2026ssa,cheng2026demystifying,du2026ropedistinguishespositionstokens}. As context length grows, the ``ghost'' sub-diagonals increasingly dilute attention away from the desired positions.

\textbf{Origin of the ``ghost'' sub-diagonals.} To identify their source, we decompose the attention logit matrix $L$ (Equation~\ref{eq:rope_logit}) into its per-channel contributions $L_{i,j}^{(c)} \coloneqq \varphi_i^{(c)}(q_i)^{\top} \varphi_j^{(c)}(k_j)$, the QK product restricted to individual 2D RoPE channels. As shown in Figure~\ref{fig:qwen3_layer0_head20_channel_visualization_original}, we observe that the logits $L_{i,j}^{(c)}$ of high-frequency channels oscillate rapidly between positive and negative values as the lag $d \coloneqq i-j$ increases. When the per-channel logits are summed and softmax-normalized, such oscillatory structure manifests as the ``ghost'' sub-diagonals (Figure~\ref{fig:periodicity_orig_head20}).

\textbf{Why per-channel attention logits oscillate.} Consider a single RoPE channel with constant queries and keys (i.e., $q_i=q \in \mathbb{R}^2$ and $k_i=k \in \mathbb{R}^2$ for all $1 \le i \le n$) and rotation frequency $\theta$. Ignoring scaling factors, the channel's attention logit $L_{i,j}^{(c)} (i \ge j)$ is the dot product between the rotated query and key, which is proportional to $\cos(\theta (i-j) + \omega)$, where $\omega$ is the initial angle between $q$ and $k$. This leads to \textit{position aliasing} \citep{du2026ropedistinguishespositionstokens}: different lags $i-j$ produce highly similar attention logits due to near-identical rotation angles $\theta(i-j)$ modulo $2\pi$. Figure~\ref{fig:slash_dominant_head_rotation_collisions} shows this on the unit circle, where keys several positions apart rotate to nearly the same angle, giving near-periodic attention logits and hence infinitely many ``ghost'' sub-diagonals.



The theorem below formalizes this intuition for multiple channels, extending the single-channel analysis in prior work \citep{yang2026why}. We consider the simplest case where all queries and all keys are both identical, so that the attention logit depends only on the lag. Even in this setting, attention is provably distracted by infinitely many ``ghost'' sub-diagonals; see Figure~\ref{fig:controlled_qk_attn} for an illustration. We prove it in Appendix~\ref{app:direct_proof} and give a stronger version in Appendix~\ref{app:theory}.

\begin{restatable}[Infinitely Many Ghost Sub-Diagonals]{theorem}{periodicitythm}\label{thm:periodicity}
Assume $q_i = q$ and $k_i = k$ for all $i$, then $L_{i,j}$ depends only on the lag $i-j$. Let $L^\star \coloneqq \sup_{i \ge j \ge 1} L_{i,j}$ be the largest logit in the attention map. For any tolerance $\tau > 0$ and frequency schedule $\{f_c\}$, there exist infinitely many lags $\Delta \ge 0$ such that
\begin{equation}
L_{i,i-\Delta} \ge L^\star - \tau, \qquad \text{for all } i > \Delta.
\end{equation}
\end{restatable}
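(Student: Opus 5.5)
The plan is to reduce the statement to simultaneous Diophantine approximation, via Dirichlet's theorem or Kronecker-type recurrence on the torus. With $q_i=q$ and $k_i=k$, each channel contributes $\varphi_i^{(c)}(q)^\top\varphi_j^{(c)}(k) = r_c\cos((i-j)f_c+\omega_c)$, where $r_c=\|\bar q_c\|\|\bar k_c\|$ and $\omega_c$ is a fixed phase. Hence
\begin{equation*}
L_{i,i-\Delta}=g(\Delta)\coloneqq \frac{1}{\sqrt{d_h}}\sum_{c} r_c\cos(\Delta f_c+\omega_c),
\end{equation*}
which depends only on the lag, as claimed. The quantity $L^\star$ is then $\sup_{\Delta\ge 0} g(\Delta)$. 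It is finite because $|g|\le \sum_c r_c/\sqrt{d_h}$, and it need not be attained.

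The first step uses the supremum to pick a lag $\Delta_0\ge 0$ with $g(\Delta_0)\ge L^\star-\tau/2$. The second step notes that $g$ is Lipschitz in the phase vector: if $\Delta f_c$ is within $\varepsilon$ of a multiple of $2\pi$ for every $c$, then
\begin{equation*}
|g(\Delta_0+\Delta)-g(\Delta_0)|\le \frac{\varepsilon}{\sqrt{d_h}}\sum_c r_c .
\end{equation*}
We choose $\varepsilon$ so that this bound is at most $\tau/2$. Every such "almost-period" $\Delta$ then gives a good lag $\Delta_0+\Delta$, since $g(\Delta_0+\Delta)\ge L^\star-\tau$.

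The third step, which is the key one, shows there are infinitely many positive integers $\Delta$ with $\|\Delta f_c/(2\pi)\|_{\mathbb{R}/\mathbb{Z}}\le \varepsilon/(2\pi)$ for all channels $c$ simultaneously. Here I would use the pigeonhole argument behind Dirichlet's simultaneous approximation theorem. Consider the $N^m+1$ points $(n f_c/2\pi \bmod 1)_{c}\in[0,1)^m$ for $n=0,\dots,N^m$, where $m=d_h/2$. Partition the cube into $N^m$ subcubes of side $1/N$. Two of the points, indexed $n<n'$, fall in the same subcube, so $\Delta=n'-n\ge 1$ satisfies every channel condition with error at most $1/N$. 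This already gives existence with $\varepsilon=2\pi/N$.

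For infinitude, I would note that the set $S$ of valid almost-periods (at tolerance $\varepsilon$) is infinite. If $f_c/2\pi$ is rational for every $c$, a common period $P$ gives all multiples $kP$ exactly. In general, suppose $S$ were finite with maximum $M$. Apply pigeonhole to points spaced by $M+1$, or equivalently run the argument on the sequence $n(M+1)$, to get a $\Delta$ that is a multiple of $M+1$. Working with error $\varepsilon/2$ then yields a valid $\Delta>M$, which is a contradiction. This supplies infinitely many lags $\Delta_0+\Delta$. The condition "for all $i>\Delta$" is automatic because the logit depends only on the lag.

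I expect the main obstacle to be bookkeeping rather than depth. One issue is handling the supremum being unattained, which the $\tau/2$ split resolves. The other is producing infinitely many almost-periods uniformly across incommensurable frequencies, which the restarted pigeonhole argument resolves. No use of the frequency schedule's structure is needed, which matches the "any frequency schedule" claim.
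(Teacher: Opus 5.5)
Your proposal is correct and follows essentially the same route as the paper. Both proofs reduce to a lag-only logit, pick $\Delta_0$ within $\tau/2$ of the supremum, bound logit shifts by the wrapped phases using the $1$-Lipschitz cosine, and find almost-periods by pigeonholing phase vectors into cells of $[-\pi,\pi)^{d_h/2}$. The only difference is how infinitely many almost-periods are obtained: the paper gets them in one step (infinitely many positions fall in a single cell, and every difference $j_s-j_1$ works), whereas you use finite Dirichlet pigeonholing plus a restart on multiples of $M+1$, which is also valid and does not need the reduction to error $\varepsilon/2$.
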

That is, no single sub-diagonal can dominate the attention map: the largest logit is approximately replicated onto infinitely many ``ghost'' sub-diagonals.

\textbf{WavePrune eliminates ``ghost'' sub-diagonals.} By confining each channel to its primary period, WavePrune ensures the relative rotation angle between two positions never wraps around $2\pi$, which removes position aliasing at its source. As shown in Figure~\ref{fig:qwen3_layer0_head20_channel_visualization_waveprune}, WavePrune suppresses the oscillations in every channel, restoring a clean attention map without any ``ghost'' sub-diagonals (Figure~\ref{fig:periodicity_wp_head20}).
Figure~\ref{fig:qwen3_nll_without_yarn} further shows that applying WavePrune in this case indeed does not hurt model performance.

\subsection{Efficient Implementation: FlashWavePrune}\label{sec:system_design}

We now present \textbf{FlashWavePrune}, an efficient implementation of WavePrune that leverages its sparse attention pattern to reduce the cost of attention computation. To fully utilize Tensor Cores in modern GPUs, it uses the group-wise approximation of WavePrune (\Cref{fig:waveprune_mask_concept}).



\textbf{Overview.} FlashWavePrune builds on FlashAttention (FA) \citep{dao2024flashattention} and only differs in its QK multiplication path.
A key challenge here is that WavePrune assigns a different pruning threshold $w_c$ to each channel~$c$ (\Cref{eq:waveprune}), so its logit matrix no longer corresponds to a standard General Matrix Multiplication (GEMM) and cannot fully utilize Tensor Cores.
A straightforward implementation would instead run on CUDA cores, but then the throughput would be significantly lower.
To solve this, we split the head dimension into groups of size 16 and let all dimensions in a group share one pruning threshold. Then, each group has a single mask for a given pair of $Q$ and $K$ blocks. We can therefore compute each group with a standard GEMM on Tensor Cores. For most groups, the mask is either all zeros or all ones: we skip fully masked groups altogether, saving FLOPs and, during decoding, key-cache loads. Fully retained groups follow the original FA path, and only the few remaining groups require an explicit mask after the GEMM. The kernel design is detailed below.

\textbf{Preliminaries: MMA atoms and head-dimension groups.} Efficient GEMM implementations tile the query ($M$), key ($N)$, and head ($K$) dimensions into small, fixed-size \textit{MMA atoms} (e.g., \texttt{M16N8K16}), where each atom consumes contiguous fragments along these three dimensions and executes an FMA (Fused-Multiply-Add) operation on Tensor Cores. Note that an MMA atom is indivisible: masking can only be applied to an entire group, not to individual elements within it. Since the head dimension $d_h$ typically exceeds the atom's $K$ size (typically $16$), the kernel splits it into $d_h/16$ \textit{head-dimension groups}, accumulating each group's partial GEMM result to form the final logits.

\textbf{Approximating WavePrune for Tensor Core MMA.} Because an MMA atom cannot be split, we propose an approximation for WavePrune to align the dimension-wise pruning thresholds $\{d_k\}$ derived from channel-wise thresholds $\{w_c\}$ with the head-dimension groups. Specifically, for each group $g$, we assign all head dimensions in the group a uniform threshold equal to the group-wise maximum $\max_{k \in \text{group} g} \{d_k\}$ (Figure~\ref{fig:waveprune_mask_concept}). Under this approximation, we can perform a standard GEMM for the group, and apply the distance predicate $\mathbf{1}_{|i-j| \le d_k}$ directly to the group's GEMM result. Formally, the GEMM result $L_{i,j}^{g}$ for the $g$-th head-dimension group is added to the logit $L_{i,j}$ iff
\begin{equation}\label{eq:waveprune_approximation}
\begin{aligned}
|i-j| \le d'_g
\coloneqq \max_{k \in \text{group } g} \{d_k\}
= d_{\mkern 1.5mu \text{last dimension in group } g}.
\end{aligned}
\end{equation}
In summary, by coalescing per-channel distance predicates into a single predicate per head-dimension group, we preserve the integrity of each MMA atom, enabling efficient execution on Tensor Cores. Empirically, this approximation has minimal effect on model performance (Section~\ref{sec:ablation_studies}).

\textbf{Reducing FLOPs and memory traffic.} FA partitions $Q,K \in \mathbb{R}^{n \times d_h}$ into blocks along the sequence dimension and multiplies each pair of $Q$- and $K$-blocks. Let $I_Q,I_K \subseteq \{1,\ldots,n\}$ denote the sequence indices of a $Q$-block and a $K$-block, respectively, and let $D(I_Q, I_K) \coloneqq \{|i-j| \mid i \in I_Q, j \in I_K\}$ denote their pairwise distances. Since block indices are contiguous, $D(I_Q, I_K)$ is also contiguous and forms an integer interval $[l_D, r_D]$. For each head-dimension group $g$, we compare its group-wise threshold $d'_g$ (Equation~\ref{eq:waveprune_approximation}) with $l_D$ and $   r_D$ to classify it into one of the three categories:
\begin{itemize}[leftmargin=2.4em,topsep=1.6pt,partopsep=1.6pt,parsep=3.2pt,itemsep=0pt]
\item \textit{Fully masked groups:} $d'_g < l_D$. The entire head-dimension group is masked.
\item \textit{Fully retained groups:} $d'_g \ge r_D$. The entire head-dimension group is retained.
\item \textit{Critical groups:} $l_D \le d'_g < r_D$. The head-dimension group is partially masked.
\end{itemize}

Since $d_k$ increases monotonically with $k$, head-dimension groups are partitioned by two split points: \texttt{splitL}, the first critical group, and \texttt{splitR}, the first fully retained group.

Since groups before \texttt{splitL} are fully masked, we skip all their QK multiplications and, during decode, bypass their key-cache loads. This reduces both FLOPs and memory traffic. For groups at or after \texttt{splitR}, we use FA2's standard FMA path; only the narrow critical region [\texttt{splitL}, \texttt{splitR}) requires group-wise GEMM and masking. Figure~\ref{fig:waveprune_split_design} summarizes this design.

\begin{figure}[t]
\centering
\begin{tikzpicture}[
  font=\small,
  group/.style={draw, rounded corners=1.5pt, minimum height=0.42in, align=center},
  arrow/.style={-{Latex[length=2mm]}, thick}
]
\node[group, fill=gray!15, minimum width=2.0in] (skip) {fully masked groups\\[-0.3pt]$[0,\texttt{splitL})$};
\node[group, fill=red!50, minimum width=0.8in, right=0.05in of skip] (boundary) {critical groups\\[-0.3pt]$[\texttt{splitL},\texttt{splitR})$};
\node[group, fill=green!18, minimum width=2.0in, right=0.05in of boundary] (suffix) {fully retained groups\\[-0.3pt]$[\texttt{splitR},d_h/16)$};
\node[below=0.24in of skip, align=center] (zero) {no QK MMA\\no K load (decode)};
\node[below=0.24in of boundary, align=center] (mask) {per-group QK MMA\\explicit masks};
\node[below=0.24in of suffix, align=center] (gemm) {bulk QK GEMM\\standard FA path};
\draw[arrow] (skip) -- (zero);
\draw[arrow] (boundary) -- (mask);
\draw[arrow] (suffix) -- (gemm);
\end{tikzpicture}
\caption{Head-dimension groups are partitioned into three regions with distinct compute paths.}
\label{fig:waveprune_split_design}
\end{figure}

\textbf{Other optimizations.} To further improve efficiency, we conduct compile-time specialization for \texttt{splitL} and \texttt{splitR} while carefully controlling branching and SASS code size. We defer these techniques and the backward kernel implementation to Appendix~\ref{app:kernel_implementation}.


\section{Experiments}\label{sec:experiments}
In this section, we evaluate WavePrune's effectiveness in two primary settings: training-free integration (Section~\ref{sec:pretrained_integration}) and training from scratch (Section~\ref{sec:pretraining_experiments}). We then examine its system efficiency (Section~\ref{sec:system_efficiency}) and test our design choices with ablation studies (Section~\ref{sec:ablation_studies}). Unless specified otherwise, all experiments use FlashWavePrune (\Cref{sec:system_design}), our efficient implementation of WavePrune.




\subsection{Training-Free Integration}\label{sec:pretrained_integration}


We first apply WavePrune \textit{training-free} to diverse LLMs, evaluating its performance on long-context and general benchmarks.

\textbf{Long-context evaluation.} We evaluate WavePrune on HELMET \citep{yen2025helmet}, a comprehensive benchmark suite extending to sequence lengths as long as 128K. To ensure broad applicability, we test a diverse range of long-context models: Qwen3-8B \citep{yang2025qwen3technicalreport}, Qwen3-32B, Gemma3-12B \citep{gemmateam2025gemma3technicalreport}, Ministral3-3B \citep{liu2026ministral3}, and Llama3.1-8B \citep{grattafiori2024llama3herdmodels}. These selections span multiple model scales, underlying architectures, and model families (Table~\ref{tab:pretrained_integration_models}). Note that these models rescale their RoPE frequencies at long contexts (e.g., via YaRN \citep{peng2024yarn}); WavePrune derives each channel's wavelength from the updated frequencies. Appendix~\ref{app:pretrained_integration} provides additional details on the evaluation setup.

\textbf{Long-context results.} As shown in Table~\ref{tab:main_results}, WavePrune improves the HELMET score on four of the five models, all without any fine-tuning (separate 64K and 128K results are in Tables~\ref{tab:helmet_64k} and~\ref{tab:helmet_128k}). The gain is largest on Qwen3-8B ($35.7 \to 40.0$), where Recall and ICL improve by $6.0$ and $10.0$ points respectively; its negative log-likelihoods (NLL) are also reduced (Figure~\ref{fig:qwen3_nll_yarn}). Ministral3-3B, Gemma3-12B, and Qwen3-32B improve by smaller yet consistent margins. Conversely, WavePrune fails on Llama3.1-8B; Appendix~\ref{app:llama31_failure} describes our diagnosis of the failure and attempts to fix it.

\newcommand{\mstd}[1]{{\textcolor{gray}{\tiny$\,\pm\,$#1}}}
\begin{table}[t]
\centering
\caption{Results on the HELMET benchmark; metrics are averaged across the 64K and 128K context splits. We report cross-task averages and standard deviations (gray) to account for task-level variance. Despite pruning a significant portion of the attention computation, WavePrune maintains or improves performance on most models (Ministral3-3B, Qwen3-8B, Gemma3-12B, Qwen3-32B). Conversely, performance degrades when applying WavePrune to Llama3.1-8B.}
\label{tab:main_results}
{\fontsize{9}{10.8}\selectfont
\setlength{\tabcolsep}{3pt}
\renewcommand{\arraystretch}{1.1}
{%
\begin{tabular}{l@{\hspace{5pt}}|@{\hspace{5pt}}l@{\hspace{5pt}}|@{\hspace{5pt}}cccccc@{\hspace{5pt}}|@{\hspace{5pt}}c}
\toprule
\multicolumn{2}{c}{} & \multicolumn{6}{c}{{\textbf{HELMET}}} & \multicolumn{1}{c}{} \\
\cmidrule(lr){3-8}
{\textbf{Model}} & {\textbf{Method}} & {\textbf{Recall}} & {\textbf{RAG}} & {\textbf{Re-rank}} & {\textbf{Cite}} & {\textbf{LongQA}} & {\textbf{ICL}} & {\textbf{Avg.}} \\
\midrule
\multirow{2}{*}{Ministral3-3B} & Baseline & 85.7\mstd{1.2} & 58.0\mstd{1.8} & 23.8\mstd{1.7} & 9.7\mstd{0.9} & 36.9\mstd{1.7} & 20.5\mstd{1.3} & 39.1 \\
 & WavePrune & 86.1\mstd{1.2} & 57.8\mstd{1.8} & 23.2\mstd{1.6} & 9.8\mstd{0.9} & 36.7\mstd{1.7} & 24.8\mstd{1.4} & \textbf{39.7} \\
\midrule
\multirow{2}{*}{Qwen3-8B} & Baseline & 63.7\mstd{1.7} & 54.3\mstd{1.8} & 14.7\mstd{1.6} & 15.7\mstd{1.2} & 42.7\mstd{1.7} & 22.9\mstd{1.3} & 35.7 \\
 & WavePrune & 69.7\mstd{1.6} & 55.6\mstd{1.8} & 21.6\mstd{1.8} & 15.2\mstd{1.2} & 44.8\mstd{1.7} & 32.9\mstd{1.5} & \textbf{40.0} \\
\midrule
\multirow{2}{*}{Gemma3-12B} & Baseline & 31.5\mstd{1.5} & 54.1\mstd{1.8} & 23.5\mstd{1.8} & 0.8\mstd{0.3} & 43.2\mstd{1.8} & 76.4\mstd{1.3} & 38.2 \\
& WavePrune & 33.1\mstd{1.5} & 53.0\mstd{1.8} & 23.7\mstd{1.8} & 1.1\mstd{0.4} & 44.5\mstd{1.8} & 77.0\mstd{1.3} & \textbf{38.7} \\
\midrule
\multirow{2}{*}{Qwen3-32B} & Baseline & 79.1\mstd{1.4} & 61.4\mstd{1.7} & 34.0\mstd{1.8} & 27.1\mstd{1.7} & 49.5\mstd{1.7} & 79.5\mstd{1.3} & 55.1 \\
 & WavePrune & 86.5\mstd{1.2} & 61.3\mstd{1.7} & 34.4\mstd{1.9} & 23.9\mstd{1.5} & 50.1\mstd{1.7} & 81.6\mstd{1.2} & \textbf{56.3} \\
\midrule\\[-1.55em]
\midrule
\multirow{2}{*}{Llama3.1-8B} & Baseline & 84.9\mstd{1.2} & 56.4\mstd{1.8} & 20.0\mstd{1.7} & 2.2\mstd{0.0} & 39.3\mstd{1.7} & 26.5\mstd{1.4} & \textbf{38.2} \\
 & WavePrune & 75.5\mstd{1.5} & 55.6\mstd{1.8} & 16.2\mstd{1.7} & 2.2\mstd{0.4} & 39.3\mstd{1.7} & 22.5\mstd{1.3} & 35.2 \\
\bottomrule
\end{tabular}%
}}
\renewcommand{\arraystretch}{1}
\end{table}

\textbf{General evaluation.} We also evaluate the models on a comprehensive suite of general benchmarks (see Appendix~\ref{app:pretrained_integration} for details), using the lm-evaluation-harness framework \citep{eval-harness} with default settings. As shown in Table~\ref{tab:short_context_results}, WavePrune maintains and often improves model performance across these diverse tasks, confirming no loss of general capability.

\subsection{Pretraining Experiments}\label{sec:pretraining_experiments}

Beyond applying WavePrune to existing models, to assess its full potential, we explore training with WavePrune from scratch. We focus on the length extrapolation capability, i.e., how well the model generalizes zero-shot to sequences longer than those seen during training \citep{press2022train}.

\textbf{Experimental setup.} We train models using the GPT-2 architecture \citep{radford2019language} at three scales: GPT2-base (117M), GPT2-medium (345M), and GPT2-large (774M). We replace the original absolute positional embedding with RoPE \citep{su2024roformer} and set the RoPE base $\theta$ to $10^4$. All pretraining runs use the nanoGPT codebase \citep{Karpathy2022} and the OpenWebText dataset\footnote{\url{https://huggingface.co/datasets/openwebtext}}. To measure length extrapolation capabilities during training, we use \textit{different context lengths for the training and validation sets}: the training data has a fixed sequence length of 2048, while the validation data has fixed length 4096. For each model scale, we conduct paired training runs, either enabling or disabling WavePrune. Appendix~\ref{app:pretraining_experiments} summarizes all pretraining hyperparameters.


\textbf{Pretraining results.} Figure~\ref{fig:pretrain_results_val} shows the validation loss curves during pretraining. Across all model scales, WavePrune consistently improves length extrapolation by a large margin, as evidenced by the substantially lower loss on the length-extrapolated validation set, suggesting that WavePrune encourages the model to learn more length-generalizable patterns rather than overfit to the training sequence length. These gains over the baseline persist when increasing the RoPE base $\theta$ after training (Figure~\ref{fig:ppl_vs_length}). Since the in-distribution validation loss (length 2048) with WavePrune is comparable to the baseline (Table~\ref{tab:gpt2_train_eval_2048}), WavePrune preserves the model's ability to fit in-distribution data.

\begin{figure}[t]
\centering
\includegraphics[width=0.96\textwidth]{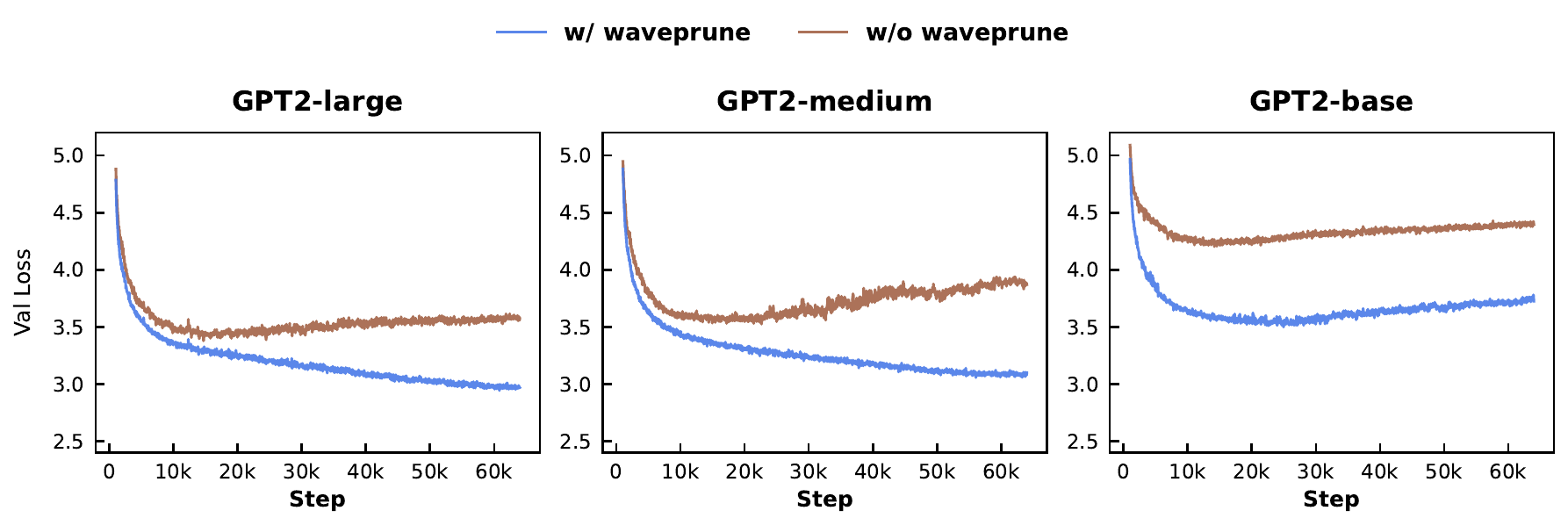}
\caption{Length-extrapolated validation loss of GPT2-large (left), GPT2-medium (middle), and GPT2-base (right) on OpenWebText during pretraining. Training data has length 2048 and validation data has length 4096. Across all model scales, the models trained and evaluated with WavePrune consistently outperform the baselines at the extrapolated context length of 4096.}
\label{fig:pretrain_results_val}
\end{figure}


\textbf{Comparison with training-free integration.}
Since WavePrune can be independently enabled or disabled during pretraining and at inference, we evaluate all four resulting combinations on GPT2-medium and GPT2-large. As shown in Table~\ref{tab:gpt2_train_eval_4096}, training-free integration already improves over the baseline, yet pretraining with WavePrune consistently outperforms this plug-and-play setting: at the extrapolated length of 4096, WavePrune achieves 3.0953 vs.\ 3.3314 for GPT2-medium and 2.9536 vs.\ 3.1797 for GPT2-large. This indicates that WavePrune is most effective when incorporated into pretraining rather than applied only at inference.

\subsection{Controlled Experiments}\label{sec:controlled_experiments}

\textbf{Where does the improvement come from?} We hypothesize that WavePrune improves length extrapolation by resolving two distinct failure modes of standard RoPE, one affecting short-range dependencies and one affecting long-range dependencies. We state each hypothesis below and test it with a controlled experiment on GPT2-base, using the same training length of 2048 and validation length of 4096 as in Section~\ref{sec:pretraining_experiments}.

\textbf{Hypothesis 1: Attention to nearby tokens is diluted by ``ghost'' sub-diagonals, and the dilution grows with context length.} A longer context contains more ``ghost'' sub-diagonals (Section~\ref{sec:periodicity_distracts}), so a head attending to nearby tokens loses more attention mass at the extrapolated length. WavePrune removes them, so the attention mass should be better preserved.

\textit{Tetranacci experiment.} To isolate short-range dependencies, we train GPT2-base on the Tetranacci sequence, defined by $a_n \coloneqq (a_{n-1} + a_{n-2} + a_{n-3} + a_{n-4}) \bmod 1000$ from random initial values $a_0, a_1, a_2, a_3$ (construction details in Appendix~\ref{app:tetranacci_construction}); each term depends only on the four preceding terms. Figure~\ref{fig:toy_experiments} shows that WavePrune significantly reduces the length-extrapolated validation loss. Furthermore, we find that WavePrune assigns higher attention mass to nearby tokens, and when extrapolating to longer sequences, this attention mass declines less (82.8\% retained vs.\ 76.7\% for the baseline; Table~\ref{tab:att_scores_tetranacci}), which confirms Hypothesis 1.

\textbf{Hypothesis 2: Attention to distant tokens is perturbed by high-frequency channels.} Long-range retrieval relies on content matching in low-frequency channels \citep{barbero2025round}. However, under standard RoPE, the contributions of high-frequency channels highly depend on the exact distance to the target and do not vanish after completing many rotations, causing attention logits to fluctuate chaotically with distance (Figure~\ref{fig:long_term_decay}). WavePrune removes the contribution beyond each channel's first rotation period, offering a stable attention pattern that should support more robust long-range modeling \citep{xu2024base}.

\textit{NIAH experiment.} To isolate long-range dependencies, we train GPT2-base on a needle-in-a-haystack (NIAH) task \citep{needle-in-haystack}, in which a seven-digit ``magic number'' is randomly inserted within a long passage of irrelevant text and must be recalled at the end (construction details in Appendix~\ref{app:niah_construction}). WavePrune substantially reduces the length-extrapolated validation loss (Figure~\ref{fig:toy_experiments}). Table~\ref{tab:att_scores_niah} reports the attention that each retrieval head assigns to the magic number across validation samples: while baseline heads attend to it on some samples but assign near-zero attention on others, WavePrune heads attend to it much more consistently, which confirms Hypothesis 2.

\begin{figure}[t]
    \centering
    \includegraphics[width=0.72\textwidth]{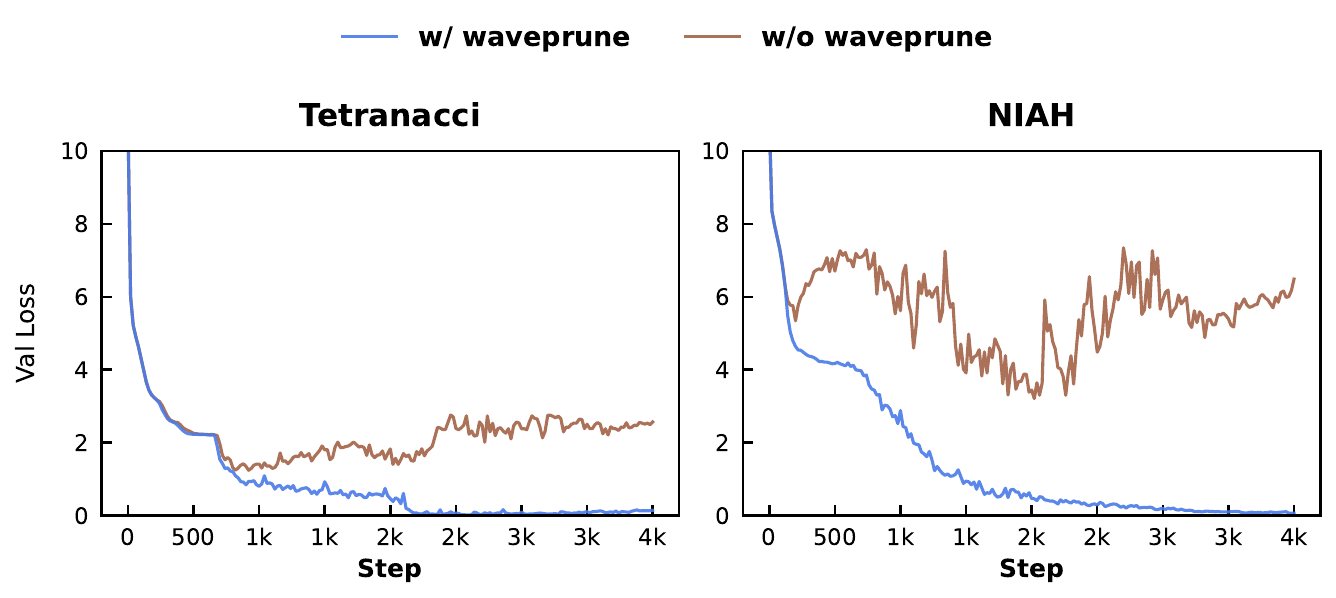}
    \caption{Length-extrapolated validation loss curves for the Tetranacci (left, for short-range dependencies) and NIAH (right, for long-range dependencies) tasks. Training set has length 2048 and validation set has length 4096. WavePrune significantly improves length extrapolation in both tasks.}
    \label{fig:toy_experiments}
\end{figure}

\subsection{System Efficiency}\label{sec:system_efficiency}

\textbf{Experimental setup.} We benchmark our WavePrune kernel (Section~\ref{sec:system_design}) against FlashAttention-2 (FA2) \citep{dao2024flashattention} in terms of inference latency. Unless specified otherwise, we use \texttt{bf16} precision, a batch size of 32, Grouped-Query Attention (GQA) with 32 query heads and 8 KV heads, causal attention, head dimension $d_h = 128$, and sequence lengths $n \in \{2^{13}, 2^{14}, 2^{15}, 2^{16}, 2^{17}\}$. All measurements are collected on an NVIDIA A100 GPU as the median of 10 (prefill) / 50 (decode) timed iterations following 3 / 10 warmup runs, respectively. We report the speedup as the ratio of FA2 latency to WavePrune latency; values greater than $1$ indicate WavePrune is faster.


\textbf{Kernel speed.} We compare FA2 and WavePrune across prefill and decode phases. As illustrated in Table~\ref{tab:system_efficiency_full}, WavePrune consistently outperforms FA2 in terms of latency. This efficiency gain stems directly from our design: skipping QK multiplications decreases FLOPs crucial for prefilling, while bypassing key-cache loads reduces memory traffic crucial for decoding.

\textbf{System ablations.} In Table~\ref{tab:system_efficiency_ablation}, we ablate each kernel optimization: suffix-only QK GEMM, suffix-only K loading, and split points specialization (Appendix~\ref{app:kernel_implementation}). The results demonstrate that removing any of these optimizations increases latency, confirming that each is necessary.

\subsection{Ablation Studies}\label{sec:ablation_studies}

\textbf{Ablation on pruning thresholds and approximation.} 
Recall that all experiments in this section are based on FlashWavePrune, which approximates WavePrune with its group-wise approximation.
To see how this approximation affects the performance of WavePrune, we conduct experiments with Qwen3-8B on the HELMET benchmark at a 32K context length.
Figure~\ref{fig:ablation_alpha} shows four observations: (i) the gap between FlashWavePrune and the exact version is marginal for moderately large $\alpha$; (ii) aggressive pruning at small $\alpha$ significantly degrades performance; (iii) excessively large $\alpha$ brings no further gains ($\alpha\to\infty$ corresponds to no pruning); and (iv) FlashWavePrune peaks at $\alpha=2$, where its gap to exact channel-wise pruning is smallest.

\textbf{Ablation on the number of pruned channels.} To isolate the effect of pruning different RoPE channels, we apply FlashWavePrune to only the first $n \in \{0,8,16,\ldots\}$ highest-frequency channels. Since WavePrune yields its largest gains on In-Context Learning (ICL) (Table~\ref{tab:main_results}), we use the HELMET ICL subset for a clearer signal. As shown in Figure~\ref{fig:ablation_pruned_chans}, performance generally improves as more channels are pruned, with notable jumps at $n=8,16,24$, indicating that WavePrune's benefits extend across a broad range of high- and mid-frequency channels. Beyond $n=40$, where wavelengths exceed the context length (Table~\ref{tab:channel_wavelengths}), pruning offers no further benefit.

\section{Related Work}\label{sec:related}
\begin{figure}[t]
\centering
\begin{subfigure}{0.48\textwidth}
\centering
\includegraphics[width=\textwidth]{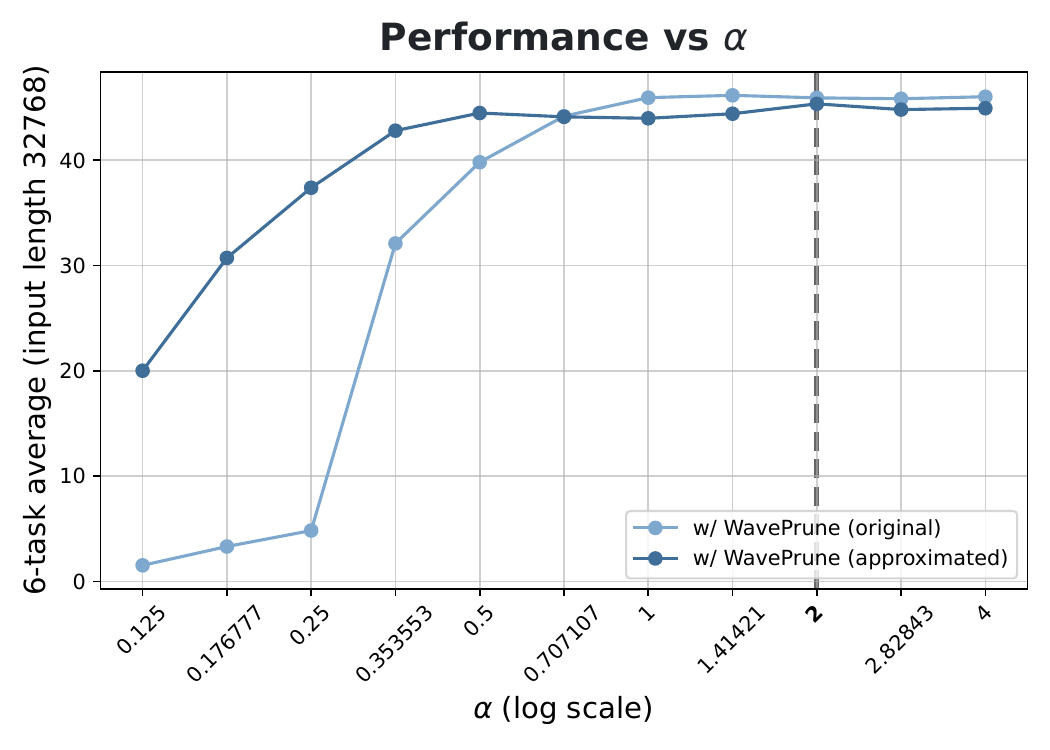}
\caption{Ablation on the pruning coefficient $\alpha$.}
\label{fig:ablation_alpha}
\end{subfigure}
\begin{subfigure}{0.48\textwidth}
\centering
\includegraphics[width=\textwidth]{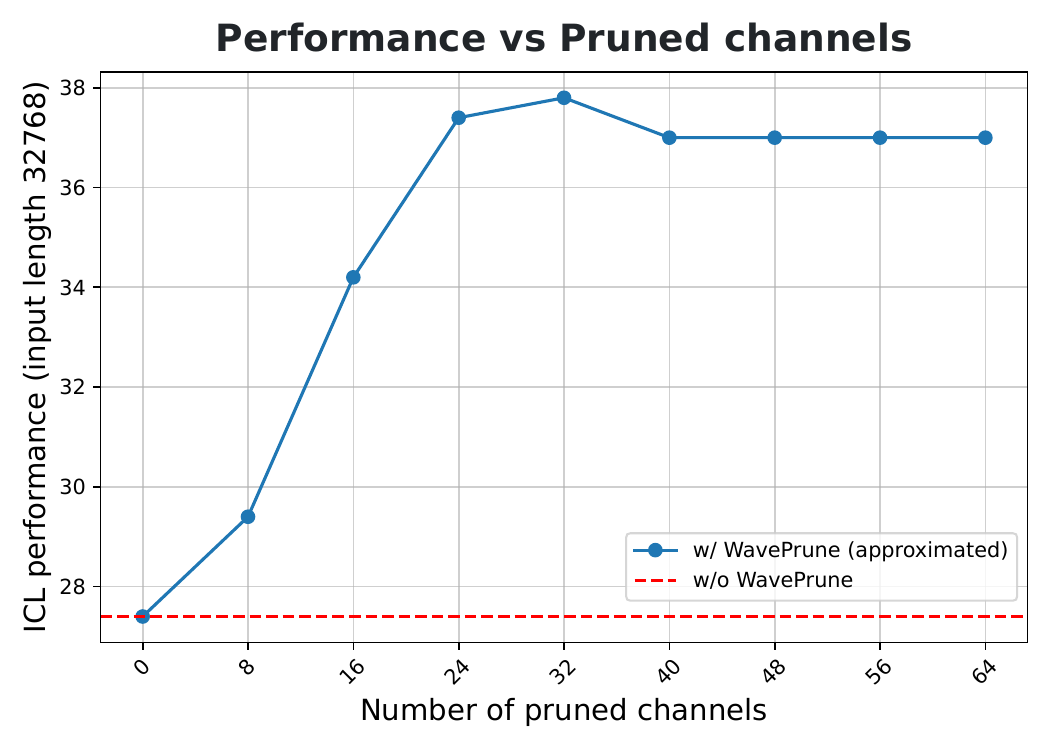}
\caption{Ablation on the number of pruned channels.}
\label{fig:ablation_pruned_chans}
\end{subfigure}
\caption{Ablation studies of WavePrune on HELMET using Qwen3-8B at 32K context length. \textbf{(a)} Effect of the pruning coefficient $\alpha$ and comparison between the exact and approximated versions of WavePrune. \textbf{(b)} Effect of pruning different numbers of channels on ICL performance.}
\label{fig:ablations}
\end{figure}

\textbf{Understanding Rotary Position Embeddings.} RoPE~\citep{su2024roformer} encodes relative positions by rotating queries and keys at different frequencies along the head dimension. Mechanistic studies reveal that models partition these frequencies for distinct purposes. \citet{barbero2025round} show that low frequencies retrieve semantic context, whereas high frequencies produce position-based attention patterns with massive activation values~\citep{jin2025massive}. These patterns support various specialized attention mechanisms, such as retrieval heads~\citep{wu2025retrieval} and previous-token heads~\citep{elhage2021mathematical,olsson2022context,yedidia2023positional}. \citet{cheng2026demystifying} further identify ``slash-dominant heads'' (SDHs), which leverage RoPE's high-frequency channels to create strong sub-diagonal lines at specific lags $\Delta$ in the attention map. They categorize these lags as small-$\Delta$ and large-$\Delta$, the latter of which remain largely unexamined and are implicitly presumed necessary. \citet{yang2026why} likewise observe these periodically repeated attention peaks; \citet{du2026ropedistinguishespositionstokens} prove that such patterns are an inherent consequence of RoPE and can induce positional ambiguity and inversion. Building on these findings, WavePrune prunes such distractions.

\textbf{Rotary Position Embeddings and Length Extrapolation.} To enhance length extrapolation \citep{press2022train}, one line of work modifies RoPE frequencies. The central motivation is that RoPE produces out-of-distribution rotation angles under extended contexts, particularly for low-frequency channels that never complete a full $2\pi$ rotation within the training length \citep{chen2023extendingcontextwindowlarge}. Accordingly, \citet{peng2024yarn,ding2024longrope,wang-etal-2024-resonance,tian2026mrrope,li2026copeclippedropescalable} rescale RoPE frequencies to keep rotation angles in-distribution while preserving high-frequency channels for position sensitivity. A second line of work \citep{sun-etal-2023-length,xiao2024infllm,jin2024llm,han-etal-2024-lm} instead modifies attention mechanics, e.g., via position-dependent query scaling or altered attention masks. Unlike these methods, WavePrune's attention mechanism is tied to RoPE: it eliminates RoPE's periodicity without altering the frequency spectrum, and can be combined with frequency-rescaling methods to further improve length extrapolation.

\textbf{Efficient Attention Mechanisms.} Efficient attention mechanisms reduce the $\mathcal{O}(N^2D)$ cost along sequence length $N$ and head dimension $D$. Along the sequence dimension, some approaches \citep{child2019generatinglongsequencessparse,beltagy2020longformerlongdocumenttransformer,NEURIPS2020_c8512d14,ding2023longnetscalingtransformers1000000000,xiao2024efficient} design fixed sparse patterns to limit the evaluated token pairs, while others \citep{zhang2023ho,jiang2024minference,yuan-etal-2025-native} dynamically select tokens at inference. Recent works prune the sparsely utilized head dimension \citep{barbero2025round,wang2026adarope}. FASA \citep{wang2026fasa} uses a small subset of dominant RoPE frequencies to guide token selection. ThinK \citep{xu2025think} prunes the key cache in a query-dependent manner, while LeanK \citep{zhang-etal-2025-leank} learns static channel sparsity. WavePrune unifies sequence- and head-wise pruning by tailoring local attention patterns to each RoPE frequency, opening an underexplored design space for attention mechanisms.

\textbf{Concurrent Work.} Concurrently, ATFlash \citep{hayashi2026atflashperropewavelengthattentionwindows} also restricts each RoPE channel to a local window proportional to its wavelength. Both works apply this channel-wise truncation training-free to pretrained models, and use head-dimension grouping to keep the QK product on Tensor Cores. Our work differs in three aspects: (i) we find that a single rotation period per channel often suffices, whereas their default window spans two. They keep the second period because it adds little computation, assuming wider windows only improve quality; we instead find quality peaks near one period (Figure~\ref{fig:ablation_alpha}); (ii) we identify the ``ghost'' sub-diagonals induced by RoPE's periodicity (\Cref{thm:periodicity}) as the concrete motivation for pruning, whereas they describe the pruned terms only as ``noise''; and (iii) we also apply WavePrune in pretraining, showing it improves length extrapolation.


\section{Conclusion and Future Work}\label{sec:conclusion}

In this work, we demonstrate that RoPE's periodicity makes attention leak onto ``ghost'' sub-diagonals, which causes distractions at long contexts. WavePrune removes it by restricting each channel to its first rotation period. Without any training, this simple change improves long context performance for most LLMs we evaluate and boosts length extrapolation in pretraining.
Its sparsity pattern also lets our CUDA kernel, FlashWavePrune, deliver practical speedups. Overall, our results suggest that one period is often enough for RoPE.

We outline the following limitations and directions for future work:
\begin{itemize}[leftmargin=2.4em,topsep=3.2pt,partopsep=3.2pt,parsep=3.2pt,itemsep=0pt]
\item \textbf{Performance variations.} WavePrune's training-free performance varies across different models. Future work can investigate the underlying causes through mechanistic analysis, and explore whether advanced fine-tuning strategies can bridge the performance gap.
\item \textbf{Head-specific thresholds.} WavePrune uses the same per-channel pruning thresholds for all attention heads, but different heads may play different roles and future work can design or even learn different thresholds for different heads.
\item \textbf{Generalizing the fine-grained attention mask.} WavePrune's fine-grained attention mask over both the sequence and head dimensions is based on RoPE. Future work can generalize this masking principle to other positional encoding methods and model architectures \citep{wang2026adarope,thinkingmachines2026inkling,lin2025forgetting}, which can potentially improve long-context processing and accelerate inference.
\end{itemize}
\newif\ifwaveack
\ificlrpreprint
\waveacktrue
\else
\ificlrfinal
\waveacktrue
\else
\waveackfalse
\fi
\fi
\ifwaveack
\section*{Acknowledgments}
We thank Haofeng Huang for insightful discussions on this project.
\fi

\ificlrfinal\else\ificlrpreprint\else
\section*{AI Use Statement}
We used generative AI tools to implement auxiliary code (plotting, data processing) and to polish writing and assist in retrieving related work. Our core CUDA kernels, training code, and all other tasks requiring disclosure under the ICLR 2027 AI Policy for Authors were performed without AI. All AI-assisted code, wording, and references were manually reviewed and verified by the authors, who take full responsibility for the final content.

\section*{Reproducibility Statement}
Complete proofs and assumptions for our theoretical results are in Appendix~\ref{app:theory}. Our synthetic dataset construction, pretraining hyperparameters, kernel implementation, and evaluation protocol are detailed in Appendices~\ref{app:toy_experiments}, \ref{app:pretraining_experiments}, \ref{app:kernel_implementation}, and \ref{app:pretrained_integration}, respectively. We will release our code, including the CUDA kernels and training and evaluation scripts, upon publication.

\fi\fi

\bibliography{iclr2027_conference}
\bibliographystyle{iclr2027_conference}

\newpage
\appendix
\section{\texorpdfstring{A Direct Proof of \Cref{thm:periodicity}}{A Direct Proof of Theorem 1}}\label{app:direct_proof}

In this section, we give a short and self-contained proof of \Cref{thm:periodicity}, independent of the machinery developed in Appendix~\ref{app:theory}. We first restate the theorem.

\periodicitythm*

\textbf{Proof.} Let $m_c^Q, m_c^K$ and $\omega_c^Q, \omega_c^K$ denote the magnitudes and angles of channel $c$ ($0 \le c \le \frac {d_h} {2}-1$) of $q$ and $k$, and let $\omega_c \coloneqq \omega_c^Q - \omega_c^K$ be their angle difference. Since RoPE rotates the channel $c$ of $q_i$ by $i \, f_c$ and the channel $c$ of $k_j$ by $j \, f_c$, 
for all $i > \Delta$,
\begin{equation}\label{eq:direct_logit_lag}
L_{i,i-\Delta} = \frac 1 {\sqrt {d_h}} \sum_{c=0}^{d_h/2-1} m_c^Q m_c^K \cos\left(\Delta \, f_c + \omega_c\right).
\end{equation}
We denote by $L_{\Delta} := L_{i,i-\Delta}$ the attention logit at lag $\Delta$ without explicitly writing the position index $i$ since it depends only on $\Delta$.
Then $L^\star = \sup_{\Delta \ge 0} L_{\Delta}$, which is finite since
\begin{equation}
|L_{\Delta}| \le \frac 1 {\sqrt {d_h}} \sum_c m_c^Q m_c^K \eqqcolon M.
\end{equation}

First, we derive a simple upper bound to show that lags with close rotation phases have close logits. Let $\text{wrap}(x) \coloneqq x + 2k\pi$, where $k \in \mathbb{Z}$ is the unique integer for which $x + 2k\pi \in [-\pi,\pi)$. Because $\cos$ is $2\pi$-periodic and $1$-Lipschitz, shifting the lag $\Delta$ by $P \in \mathbb{Z}$ perturbs $\cos\left(\Delta \, f_c + \omega_c\right)$ by at most $|\text{wrap}(P f_c)|$. Hence
\begin{equation}\label{eq:direct_lipschitz}
\bigg|\,L_{\Delta+P} - L_{\Delta}\,\bigg| \le
\frac 1 {\sqrt {d_h}} \sum_{c=0}^{d_h/2-1} m_c^Q m_c^K \,\big|\text{wrap}(P \, f_c)\big|
\eqqcolon r(P),
\quad \forall \Delta, P \in \mathbb{N}.
\end{equation}
We call $P$ an $\epsilon$-\textit{almost period} if $r(P) \le \epsilon$, so that such a $P$ shifts every logit by at most $\epsilon$.

Now we fix $\epsilon > 0$ and show that 
there are actually infinitely many $\epsilon$-almost periods.
Let $x_j$ be a vector that collects the per-channel rotation phases of position $j$, where $x_{j,c} \coloneqq \text{wrap}(j \, f_c)$.
Note that $x_j \in V \coloneqq [-\pi,\pi)^{d_h/2}$ for every $j$. Fix any integer $N \ge \frac {2\pi M} {\epsilon}$ and split each coordinate of $V$ into $N$ intervals of length $\frac {2\pi} N$, which means it partitions $V$ into $N^{d_h/2}$ cells (hyperrectangles). Since there are finitely many cells but infinitely many positions, the pigeonhole principle implies that there is a single cell that contains $x_{j_1}, x_{j_2}, \ldots$ for some infinite increasing sequence $j_1 < j_2 < \cdots$.

Then for any $s \ge 2$, we have $\text{wrap}(x_{j_s,c} - x_{j_1,c}) = \text{wrap}((j_s - j_1) f_c)$. Since $x_{j_s}$ and $x_{j_1}$ lie in the same cell, they differ by less than $\frac {2\pi} N$ in every coordinate, so $|\text{wrap}((j_s - j_1) f_c)| \le |x_{j_s,c} - x_{j_1,c}|$. Therefore,
\begin{equation}
r(j_s - j_1) \le \frac 1 {\sqrt {d_h}} \sum_c m_c^Q m_c^K \,|\,x_{j_s,c} - x_{j_1,c}\,| \le \frac {2\pi M} {N} \le \epsilon.
\end{equation}
This means that $\{j_s - j_1\}_{s \ge 2}$ is an infinite set of $\epsilon$-almost periods.

Finally, by the definition that $L^\star$ is a supremum,
for any tolerance $\tau > 0$,
there exists a lag $\Delta_0 \ge 0$ such that $L_{\Delta_0} \ge L^\star - \frac {\tau} 2$. Applying Step 2 with $\epsilon \coloneqq \frac {\tau} 2$ and then Equation~\ref{eq:direct_lipschitz}, every $\frac {\tau} 2$-almost period $P$ satisfies $L_{\Delta_0 + P} \ge L_{\Delta_0} - \frac {\tau} 2 \ge L^\star - \tau$. As there are infinitely many such $P$, the lags $\Delta \coloneqq \Delta_0 + P$ form an infinite set, and Equation~\ref{eq:direct_logit_lag} turns the above into\vspace{0.3em}
\begin{equation}
L_{i,i-\Delta} \ge L^\star - \tau, \quad \forall i > \Delta,
\end{equation}
which completes the proof.

\textbf{Remark.} In the above proof, we partition all $d_h/2$ channels together, so the periods it yields can be as large as $N^{d_h/2}$, which is vacuous at practical head dimensions. In Appendix~\ref{app:theory}, we improve this analysis by taking into account the sparsity of the channels. Also, we allow the queries and keys to be random rather than identical across positions (Theorem~\hyperref[thm:main]{2}).



\section{RoPE and Attention Periodicity: Theory and Implications}\label{app:theory}

\subsection{Slash-Dominant Attention Pattern}

After training, the model develops various structures in the attention map $A$ to implement diverse functionalities. In this section, we focus on a specific structure known as the \textbf{slash-dominant pattern} \citep{cheng2026demystifying}.

We say that the attention score matrix $A \in \mathbb{R}^{n \times n}$ exhibits a slash-dominant pattern if there exists a constant $\Delta \in \mathbb{N}$ such that $A$ has large values along the $\Delta$-offset sub-diagonal $\{A_{i,i-\Delta}\}_{i=\Delta+1}^n$. Intuitively, this means each token $i (i>\Delta)$ attends strongly to the token at position $i-\Delta$, which enables information to flow in a position-wise ``slash-like" manner. Such patterns have been empirically observed in various LLMs and play a critical role in tasks such as in-context learning (ICL) \citep{cheng2026demystifying,elhage2021mathematical,olsson2022context}.

We formalize the notion of slash-dominant attention patterns as follows:

\textbf{Definition 1 (Slash-Dominant Pattern).}\label{def:slash_dominant_pattern} \textit{For any lag $\Delta \in \mathbb{N}$ and $\epsilon,\eta > 0$, the attention score matrix $A \in \mathbb{R}^{n \times n}$ (or, equivalently, the attention head) is said to be $(\Delta, \epsilon, \eta)$-slash-dominant, if}
\begin{equation}
\sum_{i=\Delta+1}^n \mathbf{1}\left\{A_{i,i-\Delta} \ge 1 - \epsilon\right\} \ge (1-\eta)(n-\Delta)
\end{equation}

\subsection{Main Theoretical Result}

\subsubsection{Assumptions and Definitions}

Before stating our main theorem, we introduce the following assumptions and definitions.

\textbf{Assumption 1 (Symmetric i.i.d. Distributions of Queries and Keys).}\label{assumption:query_key_distribution} \textit{There exist distributions $\mathcal{Q},\mathcal{K}$ over $\mathbb{R}^{d_h}$ such that the queries $q_i$ are i.i.d. samples from $\mathcal{Q}$ and the keys $k_i$ are i.i.d. samples from $\mathcal{K}$. Furthermore, $\mathcal{Q}$ and $\mathcal{K}$ are symmetric distributions centered at their respective means $q$ and $k$ (e.g., multivariate Gaussians); i.e., for any measurable set $S \subseteq \mathbb{R}^{d_h}$, let $\widetilde{S} \coloneqq \{2q - x \mid x \in S\}$ be the reflection of $S$ about $q$, then $\mathrm{Pr}(q_i \in S) = \mathrm{Pr}(q_i \in \widetilde{S})$. The same holds for $\mathcal{K}$ with mean $k$.}
\vspace{0.2em}

\textbf{Definition 2 (Channel Sensitivity Measures and $\boldsymbol{\tau}$-Truncation Sparsity).}\label{def:channel_sensitivity_and_truncation_sparsity} \textit{For $c=0,1,\ldots,d_h/2-1$, let $m^Q_c \coloneqq \sqrt{q_{2c+1}^2 + q_{2c+2}^2}$ and $m^K_c \coloneqq \sqrt{k_{2c+1}^2 + k_{2c+2}^2}$ denote the magnitudes of channel $c$ of the mean vectors $q$ and $k$, respectively. We define the \textbf{sensitivity measure} of channel $c$ as}\vspace{0.2em}
\begin{equation}\label{eq:channel_sensitivity_measure}
t_c \coloneqq \frac {2\pi m^Q_c m^K_c} {\sqrt{d_h}} \quad (c=0,1,\ldots,d_h/2-1)
\vspace{0.2em}
\end{equation}
\textit{For any $\tau > 0$, the \textbf{$\boldsymbol{\tau}$-truncation sparsity} of the sensitivity measures $t$ is defined as}\vspace{0.2em}
\begin{equation}
\sigma_{\tau}(t) \coloneqq \min\left\{\|z\|_0 \mid \|z-t\|_1 \le \tau, \, z \in \mathbb{R}^{d_h/2}\right\}
\vspace{0.2em}
\end{equation}
\subsubsection{Main Theorem}

With Assumption~\hyperref[assumption:query_key_distribution]{1} and Definition~\hyperref[def:channel_sensitivity_and_truncation_sparsity]{2} in place, we are now ready to state our main theorem. Intuitively, it shows that the attention logit matrix $L$ (Equation~\ref{eq:attention_logit_definition}) is approximately periodic\footnote{Here, we use the term ``periodic'' to loosely refer to the \textit{similarity} of attention logits at positions separated by certain lags, rather than the strict mathematical property.} across different position lags (see Figure~\ref{fig:slash_dominant_head_rotation_collisions} for an illustration). Consequently, the $\Delta$-offset sub-diagonal (the intended lag) competes with its periodic repetitions, causing attention weight to distribute across multiple ghost sub-diagonals (sub-diagonals with unexpectedly high attention scores) rather than concentrating on the desired lag $\Delta$.

\newpage
\textbf{Theorem 2 (Periodicity of Attention Logits, Formal Statement).}\label{thm:main} \textit{Let the query and key distributions $\mathcal{Q}, \mathcal{K}$ satisfy Assumption~\hyperref[assumption:query_key_distribution]{1}. For any $\tau > 0$, there exist infinitely many periods $P \in \mathbb{N}^+$ such that}\vspace{0.2em}
\begin{equation}
\forall \Delta \ge 0, \, \forall i > \Delta + P, \quad
\mathrm{Pr}\left(L_{i,i-\Delta} - L_{i,i-\Delta-P} \le 2\tau\right) \ge \frac 1 4
\vspace{0.2em}
\end{equation}
\textit{Let $t^s$ be the sensitivity measures (Definition~\hyperref[def:channel_sensitivity_and_truncation_sparsity]{2}) sorted in descending order. Then, the minimum period $P_{\min}$ satisfying the above condition is upper bounded by a sparsity-dependent quantity:}\vspace{0.2em}
\begin{equation}\label{eq:periodicity_bound}
P_{\min} \le \prod_{i=1}^{\sigma_{\tau}(t)} \left\lceil \frac {\sigma_{\tau}(t) \cdot t_i^s} {\tau} \right\rceil
\vspace{0.2em}
\end{equation}
\textit{If $t_i^s = \mathcal{O}(1)$ for all $i$\footnote{This assumption is reasonable since $X$ is typically normalized by LayerNorm, which means $m^Q_c$ and $m^K_c$ are bounded by constants.} and $\sigma_{\tau}(t) \le S$ for some constant $S \ge 0$, the above bound can be simplified to}\vspace{0.2em}
\begin{equation}
P_{\min} = \mathcal{O}\left(\left(\frac {1} {\tau}\right)^S\right)
\vspace{0.2em}
\end{equation}


\subsubsection{Implication on Attention Scores}

According to the definition of attention score matrix $A_{i,j} \coloneqq \frac{\exp(L_{i,j})}{\sum_{k=1}^{n} \exp(L_{i,k})}$,
which is a softmax of the attention logit matrix, the periodicity of attention logits established in Theorem~\hyperref[thm:main]{2} directly implies that attention is prevented from concentrating exclusively on the desired sub-diagonal. We formalize this implication in the following corollary.

\textbf{Corollary 1 (Upper Bound on Attention Scores).}\label{corollary:attention_score_bound} \textit{Under the same conditions as Theorem~\hyperref[thm:main]{2}, for any $\epsilon \in \left(0,\frac 1 2\right)$, let $\tau_{\epsilon} = \ln\left(\frac {1-\epsilon} {\epsilon}\right)$ be the log-odds associated with $\epsilon$. Then, there exists $P_{\min} \in \mathbb{N}^+$ such that}\vspace{0.2em}
\begin{equation}\label{eq:attention_score_bound}
\forall \Delta \ge 0, \, \forall i > \Delta + P_{\min}, \quad
\mathrm{Pr}\left(A_{i,i-\Delta} \le 1-\epsilon\right) \ge \frac 1 4
\vspace{0.2em}
\end{equation}
where $P_{\min}$ is upper bounded by the same quantity as in Equation~\ref{eq:periodicity_bound} with $\tau = \frac {\tau_{\epsilon}} {2}$.

\textbf{Remark (Fragility of Slash-Dominant Patterns).} Corollary~\hyperref[corollary:attention_score_bound]{1} indicates that for sufficiently small $\epsilon$ and $\eta$ in Definition~\hyperref[def:slash_dominant_pattern]{1}, any slash-dominant pattern will eventually collapse as the sequence length increases. Note that this does not contradict the empirical observations in \citet{cheng2026demystifying}: their study identifies slash-dominant heads using loose thresholds (e.g., $0.1$ instead of $1-\epsilon$) and evaluates them on relatively short context lengths (up to $6000$ tokens).

\subsection{Empirical Evidence and Analysis}

\textbf{Sparsity of channel utilization and the periodicity issue.} As illustrated in Theorem~\hyperref[thm:main]{2}, the sparsity of channel utilization is a dominant factor determining the period of attention logits. Extensive prior work has observed sparsity in real models \citep{xu2025think,barbero2025round,chiang-yogatama-2025-rotary,wang2026fasa,wang2026adarope}, and we also empirically verify this. For instance, as shown in Figure~\ref{fig:qwen3_layer0_head20_channel_visualization_original}, only 7 out of 64 channels are actively utilized in Qwen3-8B's layer 0 head 20 (a slash-dominant head with $\Delta=1$; as evidenced by the attention score statistics in Table~\ref{tab:qwen3_attention_percentiles}). Another example is shown in Figure~\ref{fig:qwen3_layer0_head6_channel_visualization_original}. This can indeed be understood as a low $\tau$-truncation sparsity $\sigma_{\tau}(t)$ (Theorem~\hyperref[thm:main]{2}), which often leads to a small period $P_{\min}$ by Theorem~\hyperref[thm:main]{2}.\footnote{While channel sparsity is empirically observed, the mathematically defined $\sigma_\tau(t)$ may not be always small because inactive channels can still have non-negligible sensitivity values (note that two vectors with large magnitudes can yield a near-zero dot product). This gap primarily arises from omitting the rotation frequencies $f_i$ in Definition~\hyperref[def:channel_sensitivity_and_truncation_sparsity]{2} for technical tractability; readers may intuitively think of these inactive channels as having zero query/key magnitudes. We leave tighter analysis to future work.}

As evidenced in Figure~\ref{fig:periodicity_orig_head20}, Figure~\ref{fig:qwen3_original_attn_scores} and Figure~\ref{fig:ministral_original_attn_scores}, such periodicity causes attention logits of various models to spike at multiple ghost sub-diagonals. We argue that this behavior is detrimental to model performance on non-periodic tasks (including almost all NLP tasks), since it prevents the model from losslessly passing signals along a single desired lag.

\textbf{Eliminating ghost sub-diagonals through WavePrune.} The periodicity issue identified in Theorem~\hyperref[thm:main]{2} can be resolved via WavePrune by truncating attention beyond each channel's wavelength. As can be seen in Figures \ref{fig:qwen3_layer0_attn_scores_comparison} and \ref{fig:ministral_layer0_attn_scores_comparison}, the ghost sub-diagonals are effectively eliminated after applying WavePrune. This improvement is particularly evident in the channel-wise decomposition of attention logits (Figure~\ref{fig:qwen3_layer0_head20_channel_visualization_original} and Figure~\ref{fig:qwen3_layer0_head20_channel_visualization_waveprune}), where periodic patterns are suppressed across all channels.

\textbf{WavePrune breaks Theorem~\hyperref[thm:main]{2}.} Beyond empirical observations, we also theoretically show that WavePrune breaks the attention logits periodicity established in Theorem~\hyperref[thm:main]{2}. Consider a minimal 2D example comprising a single channel with rotation frequency $\theta \in (0, 2\pi)$ and wavelength $w \coloneqq 2\pi/\theta$. Suppose the query and key are unit vectors $q, k \in \mathbb{R}^2$ with an initial relative angle $\omega = -\theta$. Ignoring the $1/\sqrt{d_h}$ scaling, the attention logit under WavePrune (Equation~\ref{eq:waveprune}) at lag $\Delta$ is given by $L_{\Delta} = \cos(\Delta\theta + \omega) \cdot \mathbf{1}_{\Delta \le w} = \cos(\theta(\Delta - 1)) \cdot \mathbf{1}_{\Delta \le w}$. At $\Delta = 1$, the RoPE rotation exactly cancels the initial relative angle, yielding a maximum logit $L_1 = \cos(0) = 1$. For $1 < \Delta \le w$, $\theta(\Delta - 1)$ cannot reach any non-zero multiple of $2\pi$, so $L_{\Delta} < 1$; for $\Delta > w$, the indicator evaluates to zero, yielding a constant logit $L_{\Delta} = 0$. By permanently forcing the logit to zero beyond the wavelength, WavePrune prevents any periodic spikes, invalidating Theorem~\hyperref[thm:main]{2}.

Finally, note that the periodicity issue is not exclusive to slash-dominant heads but is observable across numerous other types of attention heads (Figure \ref{fig:qwen3_layer32_attn_scores_comparison} and \ref{fig:ministral_layer1_attn_scores_comparison}), as Theorem~\hyperref[thm:main]{2} relies on no assumptions regarding the attention pattern. Nonetheless, since the elimination of ghost sub-diagonals is particularly beneficial for slash-dominant heads, we use them as a representative example.

\begin{table}[!t]
    \centering
    \caption{Percentile-based attention score statistics for Qwen3-8B layer 0 head 20 across different position lags ($\Delta$). We compute attention scores for each of the 100 PG19 samples, extract the percentiles for each sub-diagonal, and then average these percentiles. The $\Delta=1$ sub-diagonal exhibits significantly higher attention scores, confirming a slash-dominant pattern with $\Delta=1$.}
    \label{tab:qwen3_attention_percentiles}
    {\fontsize{9}{10.8}\selectfont
    \begin{tabular}{c|ccccc}
        \toprule
        \textbf{Percentile} & $\boldsymbol{\Delta=0}$ & $\boldsymbol{\Delta=1}$ & $\boldsymbol{\Delta=2}$ & $\boldsymbol{\Delta=3}$ & $\boldsymbol{\Delta=4}$ \rule{0pt}{2.2ex} \\[0.3ex]
        \midrule\\[-1.35em]
        75th & 0.0126 & \textbf{0.0611} & 0.0178 & 0.0065 & 0.0037 \rule{0pt}{2.3ex} \\[0ex]
        90th & 0.0255 & \textbf{0.1420} & 0.0399 & 0.0182 & 0.0112 \rule{0pt}{2.3ex} \\[0ex]
        95th & 0.0392 & \textbf{0.1573} & 0.0634 & 0.0295 & 0.0185 \rule{0pt}{2.3ex} \\[0ex]
        \bottomrule
    \end{tabular}}
\end{table}

\subsection{Proof of Theorem 2}

\textbf{Proof sketch.} The proof has two main steps. In the first step, we treat the deterministic case, where every query and key equals its mean. Note that RoPE turns each attention logit into a weighted sum of per-channel cosines of the relative position. Because cosine is Lipschitz, the gap between two logits is bounded by a weighted $\ell_1$ distance between the two key positions' rotation phases across channels. Since these phases all lie in a bounded box $(-\pi,\pi]^{d_h/2}$, we can cover the box with finitely many small cells of diameter $2\tau$; by pigeonhole, two key positions must ultimately fall into the same cell, and their offset yields a period \textbf{due to RoPE's periodicity}. Hence $P_{\min}$ is bounded by the minimum number of cells required to cover the box. By absorbing low-sensitivity channels into the truncation budget $\tau$, this number depends only on the few dominant channels. In the second step, where queries and keys are random, we note that rotation and subtraction preserve symmetry, and a simple reflection argument shows that the logit difference falls below its mean with probability at least $\frac 1 4$.

\subsubsection{Step 1: Periodicity under Point Mass Distributions}

\textbf{Lemma 1 (Periodicity under Point Mass Distributions).}\label{lemma:periodicity_deterministic} \textit{Let $\mathcal{Q} = \delta_q$ and $\mathcal{K} = \delta_k$ be point mass distributions concentrated at $q$ and $k$, respectively. Then, there exist infinitely many periods $P \in \mathbb{N}^+$ such that}\vspace{0.2em}
\begin{equation}\label{eq:periodicity_deterministic}
\forall \Delta \ge 0, \, \forall i > \Delta + P, \quad
L_{i,i-\Delta} - L_{i,i-\Delta-P} \le 2\tau.
\vspace{0.2em}
\end{equation}
\textit{Let $t^s$ be the sensitivity measures (Definition~\hyperref[def:channel_sensitivity_and_truncation_sparsity]{2}) sorted in descending order. Then, the minimum period $P_{\min}$ satisfying the above condition is upper bounded by a sparsity-dependent quantity:}\vspace{0.2em}
\begin{equation}\label{eq:periodicity_bound_deterministic}
P_{\min} \le \prod_{i=1}^{\sigma_{\tau}(t)} \left\lceil \frac {\sigma_{\tau}(t) \cdot t_i^s} {\tau} \right\rceil.
\vspace{0.2em}
\end{equation}
\textbf{Proof.} Let $m_c^Q, m_c^K, \omega_c^Q, \omega_c^K$ denote the magnitudes and angles of channel $c$ of $q$ and $k$ for $0 \le c \le \frac {d_h} {2}-1$, respectively. i.e.,\vspace{0.2em}
\begin{equation}
\begin{aligned}
&q_{2c+1} = m_c^Q \cos(\omega_c^Q), \ \; q_{2c+2} = m_c^Q \sin(\omega_c^Q) \\
&k_{2c+1} = m_c^K \cos(\omega_c^K), \ k_{2c+2} = m_c^K \sin(\omega_c^K).
\end{aligned}
\vspace{0.2em}
\end{equation}
Denote $\omega_c \coloneqq \omega_c^Q - \omega_c^K$ as the angle difference between channel $c$ of $q$ and $k$, then the attention logit $L_{i,j}(i \ge j)$ can be expressed as:\vspace{0.2em}
\begin{equation}
\begin{aligned}
L_{i,j}
&=
\frac 1 {\sqrt {d_h}} \varphi_i(q_i)^{\top} \varphi_j(k_j)
\\ &=
\frac 1 {\sqrt {d_h}} \sum_{c=0}^{d_h/2-1} m_c^Q m_c^K \cos\left((i-j)\,f_c + \omega_c\right).
\end{aligned}
\vspace{0.2em}
\end{equation}
Let $\text{wrap}(x) \coloneqq x + 2k\pi$, where $k \in \mathbb{Z}$ is the unique integer for which $x + 2k\pi \in [-\pi,\pi)$. Then, the attention logit difference $L_{i,j_1} - L_{i,j_2} (i \ge j_1 > j_2)$ can be bounded as follows:\vspace{0.2em}
\begin{equation}
\begin{aligned}
L_{i,j_1} - L_{i,j_2}
&=
\frac 1 {\sqrt {d_h}} \sum_{c=0}^{d_h/2-1} m_c^Q m_c^K \bigg(\!\cos\left((i-j_1)\,f_c + \omega_c\right) - \cos\left((i-j_2)\,f_c + \omega_c\right)\!\bigg)
\\ &\le
\frac 1 {\sqrt {d_h}} \sum_{c=0}^{d_h/2-1} m_c^Q m_c^K \,\bigg|\cos\left((i-j_1)\,f_c + \omega_c\right) - \cos\left((i-j_2)\,f_c + \omega_c\right)\,\!\bigg|
\\ &\le
\frac 1 {\sqrt {d_h}} \sum_{c=0}^{d_h/2-1} m_c^Q m_c^K \,\bigg|\,\text{wrap}\left(j_1 \, f_c - j_2 \, f_c\right) \,\bigg|,
\end{aligned}
\vspace{0.2em}
\end{equation}
where the last inequality holds because $\cos$ is $2\pi$-periodic and $1$-Lipschitz.

For any $1 \le j \le i$, let $x_j \in [-\pi,\pi)^{d_h/2}$ be the vector defined as $x_{j,c} \coloneqq \text{wrap}(j\, f_c)$, then\vspace{0.2em}
\begin{equation}
\begin{aligned}
L_{i,j_1} - L_{i,j_2}
&\le
\frac 1 {\sqrt {d_h}} \sum_{c=0}^{d_h/2-1} m_c^Q m_c^K \,\bigg|\,\text{wrap}(x_{j_1,c} - x_{j_2,c})\,\bigg|
\\ &\le
\frac 1 {\sqrt {d_h}} \sum_{c=0}^{d_h/2-1} m_c^Q m_c^K \,\bigg|\,x_{j_1,c} - x_{j_2,c}\,\bigg|.
\end{aligned}
\vspace{0.2em}
\end{equation}
We define the metrics $\rho_c, \rho$ on $V \coloneqq [-\pi,\pi)^{d_h/2}$ as follows:\vspace{0.2em}
\begin{equation}\label{eq:rho_c_definition}
\begin{aligned}
\rho_c(x,y) \coloneqq \frac 1 {\sqrt {d_h}} \sum_{c=0}^{d_h/2-1} m_c^Q m_c^K \,\bigg|\,\text{wrap}(x_c - y_c)\,\bigg|, \\
\end{aligned}
\end{equation}
\begin{equation}
\rho(x,y) \coloneqq \frac 1 {\sqrt {d_h}} \sum_{c=0}^{d_h/2-1} m_c^Q m_c^K \,\bigg|\,x_c - y_c\,\bigg|.
\vspace{0.2em}
\end{equation}
Then\vspace{0.2em}
\begin{equation}
L_{i,j_1} - L_{i,j_2} \le \rho_c(x_{j_1}, x_{j_2}) \le \rho(x_{j_1}, x_{j_2}).
\vspace{0.2em}
\end{equation}

Suppose $V$ can be covered by $M$ sets $C_1, C_2, \ldots, C_M$ with $\rho$-diameter at most $2\tau$, i.e.,\vspace{0.2em}
\begin{equation}
\bigcup_{m=1}^M C_m = V, \quad \forall m, \, \forall x,y \in C_m, \, \rho(x,y) \le 2\tau.
\vspace{0.2em}
\end{equation}
Then, for any $\Delta \ge 0$ and $i > \Delta + M$, there must exist $i-\Delta-M \le j_2 < j_1 \le i-\Delta$ such that $x_{j_1}, x_{j_2} \in C_m$ for some $m$ because of the pigeonhole principle. Consequently, we have $\rho_c(x_{j_1}, x_{j_2}) \le \rho(x_{j_1}, x_{j_2}) \le 2\tau$. Note that $\rho_c(x_{j_1}, x_{j_2})$ only depends on the position difference $j_1-j_2$ rather than the specific values of $j_1$ and $j_2$, Thus, we can set $P \coloneqq j_1 - j_2$ and yield the desired periodicity (Equation~\ref{eq:periodicity_deterministic}):\vspace{0.2em}
\begin{equation}
L_{i,i-\Delta} - L_{i,i-\Delta-P} \le \rho_c(x_{i-\Delta}, x_{i-\Delta-P}) = \rho_c(x_{j_1}, x_{j_2}) \le 2\tau,
\vspace{0.2em}
\end{equation}
which holds for all $i > \Delta + P$ since $L_{i,i-\Delta} - L_{i,i-\Delta-P}$ is independent of $i$.

If $\rho_c(x_{i-\Delta}, x_{i-\Delta-P}) = 0$, then $\rho_c(x_{i-\Delta}, x_{i-\Delta-kP}) = 0$ for all $k \in \mathbb{N}^+$, which means any integer multiple of $P$ can serve as a period. Otherwise, we can set $\tau' \coloneqq \frac 1 3 \rho_c(x_{i-\Delta}, x_{i-\Delta-P}) > 0$ as the new threshold and repeat the above argument to find another period $P'$ such that $\forall \Delta \ge 0, \, \forall i > \Delta + P', \, L_{i,i-\Delta} - L_{i,i-\Delta-P'} \le 2\tau' < 2\tau$. By iterating this process, we can find infinitely many periods that satisfy Equation~\ref{eq:periodicity_deterministic}.

Now, it suffices to show that the covering number $M$ is upper bounded by the quantity in Equation~\ref{eq:periodicity_bound_deterministic}. Note that the metric space $\left(V, \rho\right)$ is isometric to $V_0 \coloneqq [0, t_0] \times [0, t_1] \times \ldots \times [0, t_{d_h/2-1}]$ equipped with the $\ell_1$ metric, where $t_c$ is the sensitivity measure defined in Equation~\ref{eq:channel_sensitivity_measure}. This reduces the problem to covering $V_0$ with sets of $\ell_1$-diameter at most $2\tau$.

We use a simple grid-based approach to cover $V_0$. Specifically, let $g_0, g_1, \ldots, g_{d_h/2-1} > 0$ denote the grid sizes for each dimension, then we partition $V_0$ into axis-aligned hyperrectangles by placing hyperplanes at intervals of $g_c$, ensuring each hyperrectangle has $\ell_1$-diameter at most $\sum\nolimits_c g_c \le 2\tau$. The total number of hyperrectangles is then
\begin{equation}
M = \prod_{c=0}^{d_h/2-1} \left\lceil \frac {t_c} {g_c} \right\rceil,
\vspace{0.2em}
\end{equation}
which leads to the following optimization problem for minimizing $M$:\vspace{0.2em}
\begin{equation}
\begin{aligned}
\min &\quad \prod_{c=0}^{d_h/2-1} \left\lceil \frac {t_c} {g_c} \right\rceil \\
\text{s.t.} &\quad \sum_{c=0}^{d_h/2-1} g_c \le 2\tau \\
&\quad g_c > 0 \quad (c=0,1,\ldots,d_h/2-1).
\end{aligned}
\vspace{0.2em}
\end{equation}
Denote $\kappa \coloneqq \sigma_{\tau}(t)$ as the $\tau$-truncation sparsity of $t$ (Definition~\hyperref[def:channel_sensitivity_and_truncation_sparsity]{2}), and $T \coloneqq \{c_1,c_2,\ldots,c_{\kappa}\}$ as the indices of the top $\kappa$ sensitivity measures. By setting $g_{c} \coloneqq \frac {\tau} {\kappa}$ for $c \in T$ and $g_c \coloneqq t_c$ for $c \notin T$, we have\vspace{0.2em}
\begin{equation}
\sum_{c} g_c =
\sum_{c \in T} \frac {\tau} {\kappa} + \sum_{c \notin T} t_c \le
2 \tau,
\end{equation}
\begin{equation}
M =
\prod_{c \in T} \left\lceil \frac {t_c} {\frac {\tau} \kappa} \right\rceil \cdot \prod_{c \notin T} \left\lceil \frac {t_c} {t_c} \right\rceil =
\prod_{i=1}^{\kappa} \left\lceil \frac {\kappa \cdot t_i^s} {\tau} \right\rceil,
\vspace{0.2em}
\end{equation}
which completes the proof.

\subsubsection{Step 2: Extension to General Symmetric Distributions}

\textbf{Lemma 2 (Symmetric Distributions and Probability Bounds).}\label{lemma:symmetric_distribution} \textit{Let the query and key distributions $\mathcal{Q}, \mathcal{K}$ satisfy Assumption~\hyperref[assumption:query_key_distribution]{1}. Denote $L^0$ as the attention logit matrix when $q_i=q$ and $k_i=k$ for all $i$. Then, for any $j_1<j_2 \le i$, we have}\vspace{0.2em}
\begin{equation}
\mathrm{Pr}\left(L_{i,j_1} - L_{i,j_2} \le L^0_{i,j_1} - L^0_{i,j_2}\right) \ge \frac 1 4.
\vspace{0.2em}
\end{equation}
To prove Lemma~\hyperref[lemma:symmetric_distribution]{2}, we need the following lemmas regarding symmetric distributions.

\textbf{Lemma 3.}\label{lemma:symmetric_difference} \textit{Let $(\Omega, \mathscr{F}, \mathrm{P})$ be a probability space, $V$ be a vector space, and $X,Y: \Omega \to V$ be random variables. Assume $X$ and $Y$ are independent and symmetric about their respective means $q$ and $k$. Then, $X-Y$ is symmetric about $q-k$.}

\textbf{Proof.} For any measurable set $\mathcal{S} \subseteq V$, we denote the translation of $\mathcal{S}$ by $x$ as $\mathcal{S}+x \coloneqq \{s+x \mid s \in \mathcal{S}\}$ and other set operations similarly. Then\vspace{0.2em}
\begin{equation}
\begin{aligned}
\mathrm{Pr}(X-Y \in \mathcal{S})
&=
\int_Y \mathrm{Pr}(X \in \mathcal{S}+Y) \,d\mathrm{P}
\\ &=
\int_Y \mathrm{Pr}(X \in 2q-\mathcal{S}-Y) \,d\mathrm{P}
\\ &=
\int_X \mathrm{Pr}(Y \in 2q-\mathcal{S}-X) \,d\mathrm{P}
\\ &=
\int_X \mathrm{Pr}(Y \in 2k-2q+\mathcal{S}+X) \,d\mathrm{P}
\\ &=
\int_X \mathrm{Pr}(X-Y \in 2(q-k)-\mathcal{S}) \,d\mathrm{P}
\\[5pt] &=
\mathrm{Pr}(X-Y \in 2(q-k)-\mathcal{S}).
\end{aligned}
\vspace{0.2em}
\end{equation}

\textbf{Lemma 4.}\label{lemma:symmetric_linear_image} \textit{Let $(\Omega, \mathscr{F}, \mathrm{P})$ be a probability space, $L: V \to W$ be a linear operator, and $X: \Omega \to V$ be a random variable. Assume $X$ is symmetric about its mean $q$. Then, $L(X)$ is symmetric about $L(q)$.}

\textbf{Proof.} For any measurable set $\mathcal{S} \subseteq W$, we have\vspace{0.2em}
\begin{equation}
\begin{aligned}
\mathrm{Pr}(L(X) \in \mathcal{S})
&=
\mathrm{Pr}(X \in L^{-1}(\mathcal{S}))
\\ &=
\mathrm{Pr}(2q-X \in L^{-1}(\mathcal{S}))
\\ &=
\mathrm{Pr}(L(2q-X) \in \mathcal{S})
\\ &=
\mathrm{Pr}(L(X) \in 2L(q)-\mathcal{S}).
\end{aligned}
\vspace{0.2em}
\end{equation}

\textbf{Lemma 5.}\label{lemma:inner_product_probability} \textit{Let $(\Omega, \mathscr{F}, \mathrm{P})$ be a probability space, $V$ be an inner product space, and $X,Y: \Omega \to V$ be random variables. Assume $X$ and $Y$ are independent and symmetric about their respective means $q$ and $k$. Then, $\mathrm{Pr}(\langle X, Y \rangle \le \langle q, k \rangle) \ge \frac 1 4$.}

\textbf{Proof.} We define four sets based on reflections about $q$ and $k$:\vspace{0.2em}
\begin{equation}
\begin{aligned}
T_0 &\coloneqq \{(x,y) \in V \times V \mid \langle x, y \rangle \le \langle q, k \rangle\} \\
T_1 &\coloneqq \{(x,y) \in V \times V \mid \langle x,2k-y \rangle \le \langle q, k \rangle\} \\
T_2 &\coloneqq \{(x,y) \in V \times V \mid \langle 2q-x,y \rangle \le \langle q, k \rangle\} \\
T_3 &\coloneqq \{(x,y) \in V \times V \mid \langle 2q-x,2k-y \rangle \le \langle q, k \rangle\}.
\end{aligned}
\vspace{0.2em}
\end{equation}
By symmetry, these four sets have equal probability.

Observe that for any $(x,y) \in V \times V$, we have\vspace{0.2em}
\begin{equation}
\left\langle x, y \right\rangle + \left\langle x, 2k-y \right\rangle + \left\langle 2q-x, y \right\rangle + \left\langle 2q-x, 2k-y \right\rangle = 4 \langle q, k \rangle,
\vspace{0.2em}
\end{equation}
which implies at least one of the four pairs must belong to $T_0$.

Therefore, $T_0 \cup T_1 \cup T_2 \cup T_3 = V \times V$. Hence $\mathrm{Pr}((X,Y) \in T_0) + \mathrm{Pr}((X,Y) \in T_1) + \mathrm{Pr}((X,Y) \in T_2) + \mathrm{Pr}((X,Y) \in T_3) \ge 1$ and $\mathrm{Pr}((X,Y) \in T_0) \ge \frac 1 4$ follows immediately.
\vspace{0.2em}

\textbf{Proof of Lemma 2.} We express the attention logit difference $L_{i,j_1} - L_{i,j_2}$ in terms of $Q$ and $K$:\vspace{0.2em}
$$
L_{i,j_1} - L_{i,j_2} = \frac 1 {\sqrt {d_h}} \left\langle \varphi_i(q_i), \varphi_{j_1}(k_{j_1}) - \varphi_{j_2}(k_{j_2}) \right\rangle.
\vspace{0.2em}
$$
By Lemma~\hyperref[lemma:symmetric_linear_image]{4}, $\varphi_i(q_i), \varphi_{j_1}(k_{j_1}), \varphi_{j_2}(k_{j_2})$ are symmetric about $\varphi_i(q), \varphi_{j_1}(k), \varphi_{j_2}(k)$, respectively. By Lemma~\hyperref[lemma:symmetric_difference]{3}, the difference $\varphi_{j_1}(k_{j_1}) - \varphi_{j_2}(k_{j_2})$ is symmetric about $\varphi_{j_1}(k) - \varphi_{j_2}(k)$. Finally, by Lemma~\hyperref[lemma:inner_product_probability]{5}, we have\vspace{0.2em}
\begin{equation}
\mathrm{Pr}\left(\left\langle \varphi_i(q_i), \varphi_{j_1}(k_{j_1}) - \varphi_{j_2}(k_{j_2}) \right\rangle \le \left\langle \varphi_i(q), \varphi_{j_1}(k) - \varphi_{j_2}(k) \right\rangle\right) \ge \frac 1 4.
\vspace{0.2em}
\end{equation}
Hence\vspace{0.2em}
\begin{equation}
\mathrm{Pr}\left(L_{i,j_1} - L_{i,j_2} \le L^0_{i,j_1} - L^0_{i,j_2}\right) \ge \frac 1 4,
\vspace{0.2em}
\end{equation}
which completes the proof.

\section{Kernel Implementation Details}\label{app:kernel_implementation}

This section details the kernel design of FlashWavePrune (Section~\ref{sec:system_design}).

\subsection{Forward Kernel}

\begin{figure}[t]
\centering
\begin{subfigure}[t]{0.4\linewidth}
\centering
\begin{tikzpicture}[x=0.288in,y=0.288in,font=\scriptsize]
\foreach \l in {0,...,4} {
  \foreach \r in {0,...,4} {
    \draw[fill=gray!10, draw=white, line width=0.45pt] (\r,-\l) rectangle ++(1,-1);
  }
}
\foreach \l/\r in {0/1,1/1,1/2,2/2,2/3,3/3,3/4,4/4} {
  \draw[fill=blue!32, draw=blue!55!black, line width=0.45pt] (\r,-\l) rectangle ++(1,-1);
}
\draw[black, line width=0.45pt] (0,0) rectangle (5,-5);
\foreach \i in {0,...,4} {
  \node[above] at (\i+0.5,0.06) {\i};
  \node[left] at (-0.06,-\i-0.5) {\i};
}
\node at (2.5,0.9) {\texttt{splitR}};
\node[rotate=90] at (-0.9,-2.5) {\texttt{splitL}};
\node[below] at (2.5,-5.15) {$d_h=64,\theta=10^4$};
\end{tikzpicture}
\end{subfigure}
\begin{subfigure}[t]{0.4\linewidth}
\centering
\begin{tikzpicture}[x=0.16in,y=0.16in,font=\scriptsize]
\foreach \l in {0,...,8} {
  \foreach \r in {0,...,8} {
    \draw[fill=gray!10, draw=white, line width=0.35pt] (\r,-\l) rectangle ++(1,-1);
  }
}
\foreach \l/\r in {0/2,0/3,1/3,2/3,3/3,3/4,4/4,4/5,5/5,5/6,6/6,6/7,7/7,7/8,8/8} {
  \draw[fill=blue!32, draw=blue!55!black, line width=0.35pt] (\r,-\l) rectangle ++(1,-1);
}
\draw[black, line width=0.45pt] (0,0) rectangle (9,-9);
\foreach \i in {0,...,8} {
  \node[above] at (\i+0.5,0.05) {\i};
  \node[left] at (-0.05,-\i-0.5) {\i};
}
\node at (4.5,1.45) {\texttt{splitR}};
\node[rotate=90] at (-1.45,-4.5) {\texttt{splitL}};
\node[below] at (4.5,-9.15) {$d_h=128,\theta=10^4$};
\end{tikzpicture}
\end{subfigure}
\caption{Two example grids of (\texttt{splitL,splitR}) branches. Blue squares denote reachable branches; gray squares denote unreachable ones. Note that the reachable set depends on $\{d'_g\}$; we set the model's RoPE base $\theta$ to $10^4$ here for illustration.}
\label{fig:split_pair_grid}
\end{figure}


\textbf{Improving ILP via compile-time specialization.} If \texttt{splitL} and \texttt{splitR} are purely treated as runtime variables, the compiler has to preserve the data-dependent branches and cannot fully unroll the loop over head-dimension groups. This reduces Instruction-Level Parallelism (ILP) and prevents the efficient scheduling of shared-memory loads and tensor-core MMA instructions. To resolve this, we enumerate the (splitL, splitR) pairs and pass each pair as arguments to the kernel template, so the split points become compile-time constants. The retained suffix is thus compiled into a highly optimized static GEMM, and the masking overhead is strictly confined to the boundary region.

\textbf{Controlling branching and code size.} Naively specializing each (\texttt{splitL,splitR}) branch creates separate template instances that scale quadratically with the number of head-dimension groups (e.g., $8$ for $d_h=128$ and $16$ for $d_h = 256$). This bloats the SASS code size and exhausts the compiler's inlining budget, forcing helper functions into out-of-line device calls (\texttt{CALL.FUNC}). We also observe that this hinders the compiler's ability to conduct interprocedural optimizations across the entire kernel. To avoid these issues, we specialize only the \textit{reachable} (\texttt{splitL,splitR}) branches for a given model, which are far fewer than the full Cartesian product (Figure~\ref{fig:split_pair_grid}). Although this necessitates recompiling the kernel for each distinct model, it is an acceptable tradeoff, as the inlining of hot paths and the simplified control flow outweigh the one-time compilation cost.

\subsection{Backward Kernel}

\textbf{Asymmetry with the forward kernel.} We first name the computation performed by the forward kernel. In the standard MMA notation, the $M$ and $N$ axes index the rows and columns of the output matrix, and the $K$ axis is the contraction axis that is summed over to produce the output. For the forward attention computation, these axes correspond to the query dimension ($i$), the key dimension ($j$), and the head dimension ($k$), respectively. In WavePrune, the head dimension is grouped into 16-dimensional head-dimension groups, where each head-dimension group is assigned a pruning threshold $d'_g$ (Equation~\ref{eq:waveprune_approximation}).

Let $Q \in \mathbb{R}^{m \times k}, K \in \mathbb{R}^{n \times k}$ denote the RoPE-encoded query and key matrices. The core computation of the forward kernel is $P \coloneqq \mathrm{WMM}_P(Q,K)$ (WMM denotes WavePrune matrix multiplication):\vspace{-0.8em}
\begin{equation}\label{eq:wmm}
P = \left[\mathrm{WMM}_P(X,Y)\right]_{i,j} \coloneqq \sum_{k=1}^{d_h} X_{i,k} Y_{j,k} \cdot \mathbf{1}_{|i-j| \le d'_{g(k)}}.
\end{equation}
Crucially, the backward pass has a different axis arrangement. Given $\mathrm{d}P$ and $K$, $\mathrm{d}Q$ is computed as
$$
\mathrm{d}Q = \left[\mathrm{WMM}_Q(\mathrm{d}P,K)\right]_{i,k} \coloneqq \sum_{j=1}^{n} \mathrm{d}P_{i,j} K_{j,k} \cdot \mathbf{1}_{|i-j| \le d'_{g(k)}},
$$
where the $M,N,K$ axes correspond to the query dimension ($i$), the head dimension ($k$), and the key dimension ($j$), respectively. The computation of $\mathrm{d}K$ is similar:
$$
\mathrm{d}K = \left[\mathrm{WMM}_K(\mathrm{d}P,Q)\right]_{j,k} \coloneqq \sum_{i=1}^{m} \mathrm{d}P_{i,j} Q_{i,k} \cdot \mathbf{1}_{|i-j| \le d'_{g(k)}},
$$
where the $M,N,K$ axes correspond to the key dimension ($j$), the head dimension ($k$), and the query dimension ($i$), respectively.

Unlike forward computation ($\mathrm{WMM}_P$), the head dimension for $\mathrm{WMM}_K$ and $\mathrm{WMM}_Q$ is the $N$ axis rather than the shared $K$ axis.

\begin{table}[t]
    \centering
    \caption{Ablation of WavePrune kernel optimizations. Each entry reports the runtime ratio relative to the final optimized kernel at sequence length 32768; larger values indicate slower execution. Since our prefill implementation does not use suffix-only K loading, these entries are marked as N/A.}
    \label{tab:system_efficiency_ablation}
    {\fontsize{9}{10.8}\selectfont
    \setlength{\tabcolsep}{3.5pt}
    \begin{tabular}{l|cccc}
        \toprule
        \textbf{Removed Optimizations} & \textbf{Prefill H64} $\downarrow$ & \textbf{Prefill H128} $\downarrow$ & \textbf{Decode H64} $\downarrow$ & \textbf{Decode H128} $\downarrow$ \rule{0pt}{2.2ex} \\
        \midrule\\[-1.35em]
        w/o splits specialization & 1.26$\times$ & 1.47$\times$ & 1.09$\times$ & 1.04$\times$ \rule{0pt}{2.3ex} \\[0ex]
        w/o splitR specialization & 1.25$\times$ & 1.54$\times$ & 1.04$\times$ & 1.08$\times$ \rule{0pt}{2.3ex} \\[0ex]
        w/o suffix-only K load & \textcolor{gray}{N/A} & \textcolor{gray}{N/A} & 0.99$\times$ & 1.14$\times$ \rule{0pt}{2.3ex} \\[0ex]
        w/o suffix-only QK GEMM & 1.38$\times$ & 1.46$\times$ & 1.01$\times$ & 1.08$\times$ \rule{0pt}{2.3ex} \\[0ex]
        \bottomrule
    \end{tabular}}
\end{table}

\textbf{Implementation overview.} We modify only the $\mathrm{d}Q$ and $\mathrm{d}K$ computations as $\mathrm{d}V$ and $\mathrm{d}P$ follow FA2's standard path. Despite the axis mismatch above, both gradients can still be computed using the same idea as the forward kernel. Under the Ampere MMA atom \texttt{M16N8K16}, an atom spans $8$ elements along the $N$ axis (head dimension), which divides the group size of $16$. No atom therefore straddles two head-dimension groups, so the pruning threshold $d'_g$ can be applied at the atom level, and the three-way split of the forward kernel carries over (Figure~\ref{fig:waveprune_split_design}). The primary difference is that we must mask the operand $\mathrm{d}P$ \textit{before} the MMA rather than the accumulator afterwards.

\textbf{Joint $\boldsymbol{\mathrm{d}Q}$ and $\boldsymbol{\mathrm{d}K}$ computation.} For a given head-dimension group $g$, the two gradients $\mathrm{d}Q$ and $\mathrm{d}K$ apply the same predicate $\mathbf{1}_{|i-j| \le d'_g}$ on $\mathrm{d}P$, so we can mask $\mathrm{d}P$ once and use the result twice by jointly computing $\boldsymbol{\mathrm{d}Q}$ and $\boldsymbol{\mathrm{d}K}$. Specifically, we enumerate all critical groups (Section~\ref{sec:system_design}) \textit{at runtime}, load a short-lived $\mathrm{d}P$ fragment from shared memory, apply the predicate on the fragment, and then issue MMAs for both gradients. This joint computation halves the masking work and addresses the ALU bottleneck. For the fully retained suffix $[\texttt{splitR},d_h/16)$, we directly consume the unmasked tiles and template-specialize $\texttt{splitR}$ in the same way as the forward kernel.

\textbf{Why per-pair specialization fails.} Surprisingly, the $(\texttt{splitL,splitR})$ compile-time specialization that accelerates the forward kernel is counterproductive here. The backward kernel body is massive: it contains the score recomputation, the $\mathrm{d}V$ and $\mathrm{d}P$ GEMM, and the joint $\mathrm{d}Q, \mathrm{d}K$ GEMM. Cloning the joint GEMM per (\texttt{splitL}, \texttt{splitR}) pair replicates a far larger body, which prevents the compiler from generating efficient machine code. The cost does not grow gradually but appears as a discrete cliff: with $m_{\text{block}} = 64, n_{\text{block}} = 128, d_h=128$, specializing even a single (\texttt{splitL,splitR}) pair grows the per-thread stack frame from 120 to 512 bytes, the static local load/store counts from 34/38 to 275/239, and the kernel from 89,856 to 122,752 bytes of SASS. Once \texttt{ptxas} crosses its inlining and allocation budgets, it holds program state in local memory and moves code out of line, adding control overhead that outweighs the pruning savings.

\textbf{Results.} Table~\ref{tab:backward_kernel_latency} reports backward kernel latencies under the same setup as Section~\ref{sec:system_efficiency}. The overhead of WavePrune backward kernel is $8.58\%$ at 8K tokens, which narrows to $3.81\%$ at 16K and $0.18\%$ at 32K. Comparing with standard FA2, the overall training-time gap is negligible.

\begin{table}[htbp]
\centering
\caption{Backward kernel latencies. Overhead refers to the latency increase of WavePrune over FA2.}
\label{tab:backward_kernel_latency}
\setlength{\tabcolsep}{6pt}
{\fontsize{9}{10.8}\selectfont
\renewcommand{\arraystretch}{1.1}
{%
\begin{tabular}{c|rrr}
\toprule
\textbf{Length} & \textbf{FA2 (ms)} & \textbf{WavePrune (ms)} & \textbf{Overhead (\%)} \\
\midrule
8K & 93.30 & 101.31 & 8.58\% \\
16K & 363.24 & 377.08 & 3.81\% \\
32K & 1434.34 & 1436.88 & 0.18\% \\
\bottomrule
\end{tabular}%
}}
\renewcommand{\arraystretch}{1}
\end{table}

\section{Training-Free Integration}\label{app:pretrained_integration}

\textbf{HELMET benchmark.} HELMET \citep{yen2025helmet} is a comprehensive benchmark comprising diverse long-context tasks and extending to sequence lengths as long as 128K. Specifically, it includes the following \textit{tasks}, each of which consists of multiple \textit{subtasks}:
\begin{itemize}
    \item \textbf{Recall:} Retrieving hidden information from a large context (inherited from RULER \citep{hsieh2024ruler} and JSON KV \citep{liu-etal-2024-lost}).
    \item \textbf{RAG:} Answering open-domain questions by retrieving relevant information from a provided noisy corpus.
    \item \textbf{Re-rank:} Ranking a set of passages based on their relevance to a given query.
    \item \textbf{Cite:} Generating long-form answers to multi-faceted questions while providing correct inline passage citations.
    \item \textbf{LongQA:} Answering questions based on long documents like books or movie scripts.
    \item \textbf{ICL:} Solving new classification tasks using thousands of in-context examples.
    \item \textbf{Summ:} Synthesizing and summarizing information from extensive contexts.
\end{itemize}

Due to the substantial cost of LLM-as-a-Judge (which requires proprietary models such as GPT-4o in the original HELMET evaluation), we switch to ROUGE-1 F1-score for NarrativeQA evaluation (a subtask of LongQA). In Summ, where the ground-truth and model-generated summaries are often more than 1,000 tokens long, we find F1 score fails to capture the quality of summaries, so we drop Summ from our evaluation and report only the remaining six tasks. For all subtasks where LLM-as-a-Judge is not required, we use the same evaluation protocol as HELMET.

During evaluation, we find the maximum output tokens are insufficient for RULER MV and JSON KV (subtasks of Recall), which leads to extremely unstable results due to early truncation of model responses. Therefore, we increase the maximum output tokens of RULER MV and JSON KV to 200 and 300 respectively.

\textbf{General benchmarks.} To ensure a comprehensive evaluation, we select benchmarks from four core capabilities:
\begin{itemize}
\item \textbf{Language modeling}: MMLU \citep{hendryckstest2021}, CMMLU \citep{li2023cmmlu}
\item \textbf{General reasoning}: BBH \citep{suzgun2022challenging}, DROP \citep{dua2019drop}
\item \textbf{Code}: MBPP \citep{austin2021program}, HumanEval \citep{chen2021codex}
\item \textbf{Math}: GSM8K \citep{cobbe2021training}, MATH \citep{hendrycksmath2021}
\end{itemize}
All evaluations are conducted using the \texttt{lm-evaluation-harness} framework \citep{eval-harness} under default configurations.

\textbf{Evaluated LLMs.} Table~\ref{tab:pretrained_integration_models} summarizes the architectural properties of the LLMs evaluated in Section~\ref{sec:pretrained_integration}. Following \citet{yang2025qwen3technicalreport}, we use YaRN \citep{peng2024yarn} for Qwen3-8B and Qwen3-32B at context lengths above 32K, and native RoPE otherwise. To reduce evaluation cost, we disable the thinking mode of Qwen3-8B and Qwen3-32B.

\begin{table}[htbp]
    \centering
    {\fontsize{8}{9.6}\selectfont
    \setlength{\tabcolsep}{3.5pt}
    \resizebox{\textwidth}{!}{%
    \begin{tabular}{lcccccc}
        \toprule
        \textbf{Model} & \textbf{Context length} & \textbf{RoPE $\theta$} & \textbf{RoPE scaling} & \textbf{Attention type} & \textbf{Release date} \rule{0pt}{2.2ex} \\
        \midrule\\[-1.35em]
        Qwen3-8B & 32K (128K w/ YaRN) & $10^6$ & None (YaRN optional) & Standard full attention & Apr 2025 \rule{0pt}{2.3ex} \\[0.15ex]
        Qwen3-32B & 32K (128K w/ YaRN) & $10^6$ & None (YaRN optional) & Standard full attention & Apr 2025 \rule{0pt}{2.3ex} \\[0.15ex]
        Gemma3-12B & 128K & $10^4$ / $10^6$ & Linear ($\times 8$) & Sliding window + full attention & Mar 2025 \rule{0pt}{2.3ex} \\[0.15ex]
        Ministral3-3B & 256K & $10^6$ & YaRN ($\times 16$) & Standard full attention & Dec 2025 \rule{0pt}{2.3ex} \\[0.15ex]
        Llama3.1-8B & 128K & $5 \times 10^5$ & Llama~3 ($\times 8$) & Standard full attention & Jul 2024 \rule{0pt}{2.3ex} \\[0.15ex]
        \bottomrule
    \end{tabular}%
    }}
    \caption{Properties of LLMs used in long-context evaluation.}
    \label{tab:pretrained_integration_models}
\end{table}

\section{Failed Attempts on Llama3.1-8B}\label{app:llama31_failure}

\textbf{Diagnosis: high-frequency channels carry long-range retrieval signals.} Because Llama3.1-8B's degradation (Table~\ref{tab:main_results}) is concentrated on retrieval tasks, we inspect a 32K-context NIAH example in which the needle is placed near the middle of the context. In the original model, layer~16, head~21 attends strongly to the needle, which is driven by a RoPE channel whose wavelength is approximately $571$. Surprisingly, this channel stably transmits the needle's information to the final tokens despite a query--key distance far exceeding its wavelength. This behavior contradicts the conventional view that high-frequency channels encode local positional information \citep{barbero2025round}. WavePrune truncates exactly this signal, causing the model to fail at retrieving the answer. Similar patterns are observed in other heads of Llama3.1-8B, sometimes driven by different RoPE channels.

\textbf{Failed attempts.} We attempted the following remedies; none of them recovers the performance of the unpruned baseline.
\begin{itemize}[leftmargin=2.4em,topsep=3.2pt,partopsep=3.2pt,parsep=3.2pt,itemsep=0pt]
\item Increasing the pruning coefficient $\alpha$ (Section~\ref{sec:channel_wise_attn_truncation}) to postpone truncation.
\item Truncating only the first few highest-frequency channels.
\item Applying WavePrune only to the first or last few layers.
\item Continuing long-context training with WavePrune under the ProLong recipe~\citep{gao-etal-2025-train}, on either the base or the instruct models.
\end{itemize}

\section{Controlled Experiment Details}\label{app:toy_experiments}

\subsection{Tetranacci Data Construction}\label{app:tetranacci_construction}

\textbf{Data Construction.} In this task, each sample encodes a Tetranacci sequence, which is defined by $a_n \coloneqq (a_{n-1} + a_{n-2} + a_{n-3} + a_{n-4}) \bmod 1000$. The initial values $a_0, a_1, a_2, a_3$ are drawn uniformly at random from $\{0, \ldots, 999\}$. Values are written in decimal, tokenized with the GPT-2 tokenizer \citep{radford2019language}, and each is followed by an EOT token that separates consecutive values. The four initial values, the EOT tokens between them, and all padding are excluded from the training loss; every subsequent value and EOT token are supervised. Since the recurrence is deterministic given the four most recent values, a sequence becomes periodic as soon as any block of four consecutive values repeats; we discard and regenerate such samples to guarantee that no training or validation sequence is periodic. The dataset comprises $500{,}000$ samples, split $99{:}1$ into training and validation sets, with sequence lengths of $2{,}048$ and $4{,}096$ tokens, respectively.

\textbf{Example.} With initial values $a_0 = 123$, $a_1 = 456$, $a_2 = 789$, and $a_3 = 12$, the next values are $a_4 = (123+456+789+12) \bmod 1000 = 380$ and $a_5 = (456+789+12+380) \bmod 1000 = 637$. The resulting token sequence is
\vspace{0.35em}
$$
\texttt{123 <eot> 456 <eot> 789 <eot> 12 <eot>}
\ 
\underbrace{\texttt{380 <eot> 637 <eot> }\cdots}_{\text{supervised}} \ ,
$$

\vspace{-1.35em}
where $\langle\texttt{eot}\rangle$ denotes the EOT token.

\subsection{NIAH Data Construction}\label{app:niah_construction}

\textbf{Data Construction.} In this task, each sample hides a ``magic number'' (a seven-digit integer drawn uniformly at random from $[10^6, 10^7-1]$) inside a haystack of irrelevant text sampled from OpenWebText. A sample concatenates (i) a fixed instruction stating that a magic number exists, (ii) a segment of background tokens containing the sentence ``The special magic number is $\langle\text{number}\rangle$'' inserted at a uniformly random position, (iii) the fixed instruction ``The special magic number mentioned in the text above is:'', and (iv) the target number followed by a final EOT token. Only the target number and the final EOT token contribute to the training loss. The dataset comprises $500{,}000$ samples, split $99{:}1$ into training and validation sets, with sequence lengths of $2{,}048$ and $4{,}096$ tokens, respectively.

\textbf{Example.} With the magic number $5{,}432{,}167$, a sample has the layout
\begingroup
\small
$$
\begin{aligned}
&\texttt{[instruction]}
\ \ \texttt{[background tokens]}\\
&\texttt{\textbackslash{}n The special magic number is 5432167.\textbackslash{}n}
\ \ \texttt{[background tokens]}\\
&\texttt{\textbackslash{}n The special magic number mentioned in the text above is:}
\ \ \underbrace{\texttt{5432167 <eot>}}_{\text{supervised}}.
\end{aligned}
$$
\endgroup

\section{Pretraining Hyperparameters}\label{app:pretraining_experiments}

\begin{table}[htbp]
\centering
{\fontsize{8}{9.6}\selectfont
\setlength{\tabcolsep}{3pt}
\renewcommand{\arraystretch}{1.1}
{%
\begin{tabular}{l@{\hspace{5pt}}|@{\hspace{8pt}}ccc}
    \toprule
    {\textbf{Hyperparameter}} & {\textbf{GPT2-base}} & {\textbf{GPT2-medium}} & {\textbf{GPT2-large}} \\
    \midrule
    Total Params & 117M & 345M & 774M \\
    Layers & 12 & 24 & 36 \\
    Heads & 12 & 16 & 20 \\
    Dimension & 768 & 1024 & 1280 \\
    Head Dimension & 64 & 64 & 64 \\
    \midrule
    Attention Module & \multicolumn{3}{c}{Multi-Head Attention~\citep{NIPS2017_3f5ee243}} \\
    Positional Encoding & \multicolumn{3}{c}{RoPE~\citep{su2024roformer}} \\
    RoPE $\theta$ & \multicolumn{3}{c}{10{,}000} \\
    \midrule
    Dataset & \multicolumn{3}{c}{OpenWebText} \\
    Vocab Size & \multicolumn{3}{c}{50{,}304} \\
    Train Sequence Length & \multicolumn{3}{c}{2{,}048} \\
    Val Sequence Length & \multicolumn{3}{c}{4{,}096} \\
    \midrule
    Micro Batch Size & \multicolumn{3}{c}{3} \\
    Gradient Accum. Steps & \multicolumn{3}{c}{40} \\
    Effective Batch Size (seqs) & \multicolumn{3}{c}{120} \\
    Training Steps & \multicolumn{3}{c}{64{,}000} \\
    Total Train Tokens & \multicolumn{3}{c}{$\sim$15.7B} \\
    \midrule
    Optimizer & \multicolumn{3}{c}{AdamW~\citep{loshchilov2018decoupled}} \\
    Adam $(\beta_1, \beta_2)$ & \multicolumn{3}{c}{$(0.9, 0.95)$} \\
    Base Learning Rate & \multicolumn{3}{c}{$6 \times 10^{-4}$} \\
    Min Learning Rate & \multicolumn{3}{c}{$6 \times 10^{-5}$} \\
    LR Scheduler & \multicolumn{3}{c}{Cosine decay w/ linear warmup} \\
    Warmup Steps & \multicolumn{3}{c}{2{,}000} \\
    Weight Decay & \multicolumn{3}{c}{0.1} \\
    Gradient Clipping & \multicolumn{3}{c}{1.0} \\
    Dropout & \multicolumn{3}{c}{0.0} \\
    Precision & \multicolumn{3}{c}{\texttt{bfloat16}} \\
    \bottomrule
\end{tabular}%
}}
\caption{Training hyperparameters for the GPT-2 series pretraining experiments.}
\label{tab:pretraining_hyperparameters}
\renewcommand{\arraystretch}{1}
\end{table}

\newpage
\section{Supplementary Figures and Tables}
This section collects the figures and tables referenced in the main text.

\begin{figure}[htbp]
\centering
\begin{subfigure}{0.45\textwidth}
\centering
\includegraphics[width=\textwidth]{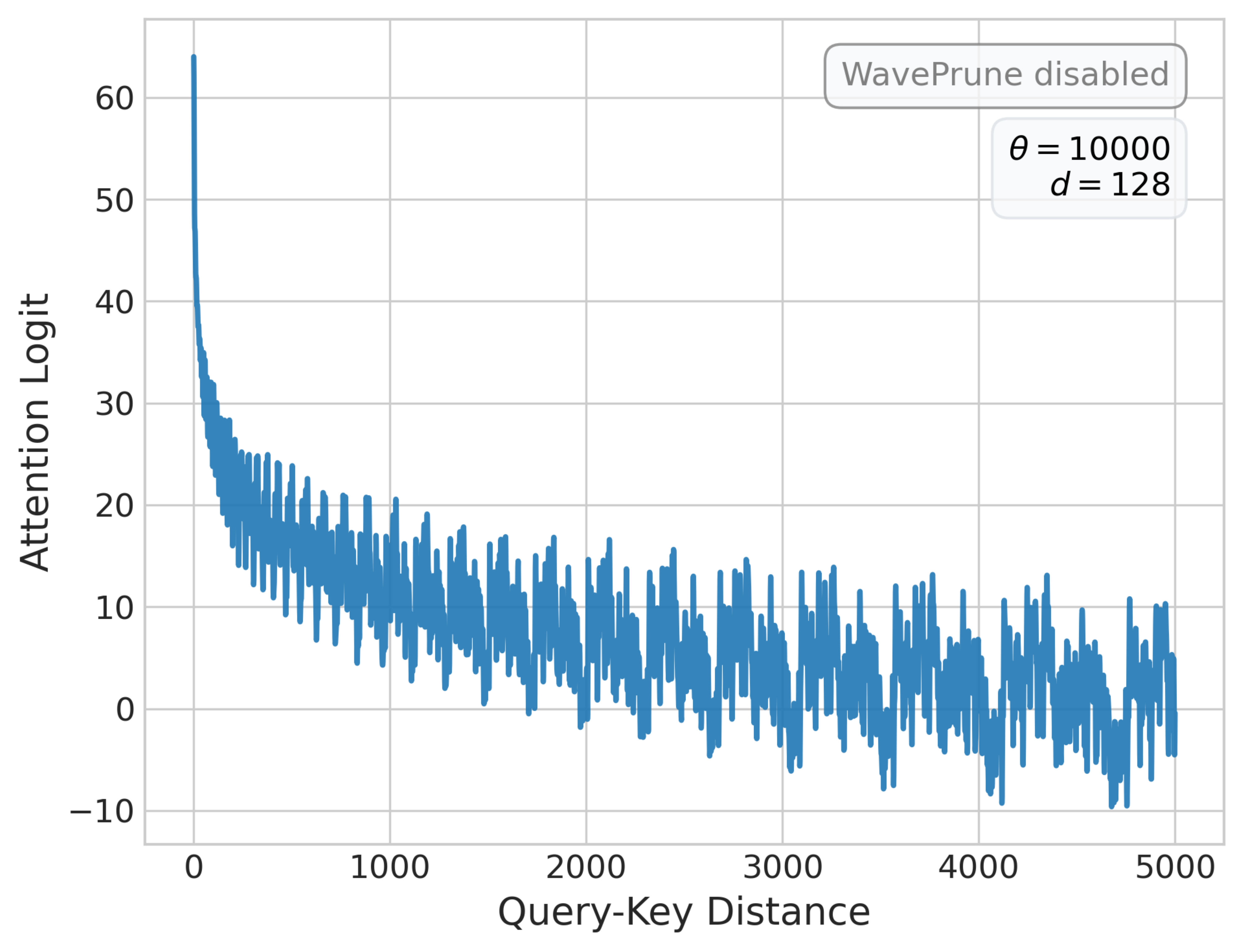}
\caption{Attention logit decay of standard RoPE.}
\label{fig:long_term_decay_rope}
\end{subfigure}
\begin{subfigure}{0.45\textwidth}
\centering
\includegraphics[width=\textwidth]{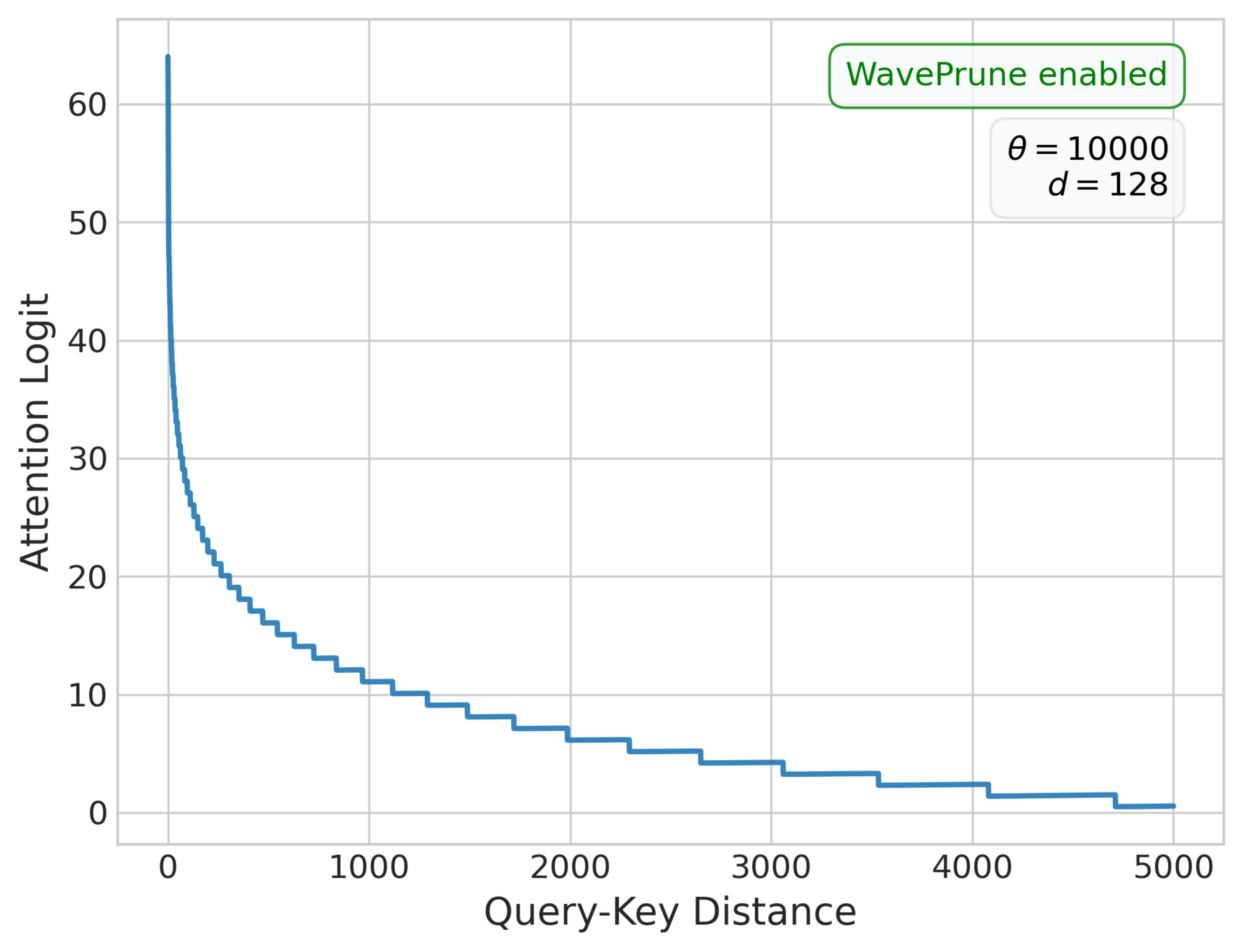}
\caption{Attention logit decay of WavePrune.}
\label{fig:long_term_decay_waveprune}
\end{subfigure}
\caption{Attention logit decay over distance for standard RoPE (left) and WavePrune (right) under the setting of \citet{su2024roformer}. Standard RoPE exhibits chaotic fluctuations that hinder long-range modeling, whereas WavePrune yields a stable, near-monotonic decay.}
\label{fig:long_term_decay}
\end{figure}

\vspace{1em}
\begin{figure}[htbp]
\centering
\begin{subfigure}{0.45\textwidth}
\centering
\includegraphics[width=\textwidth]{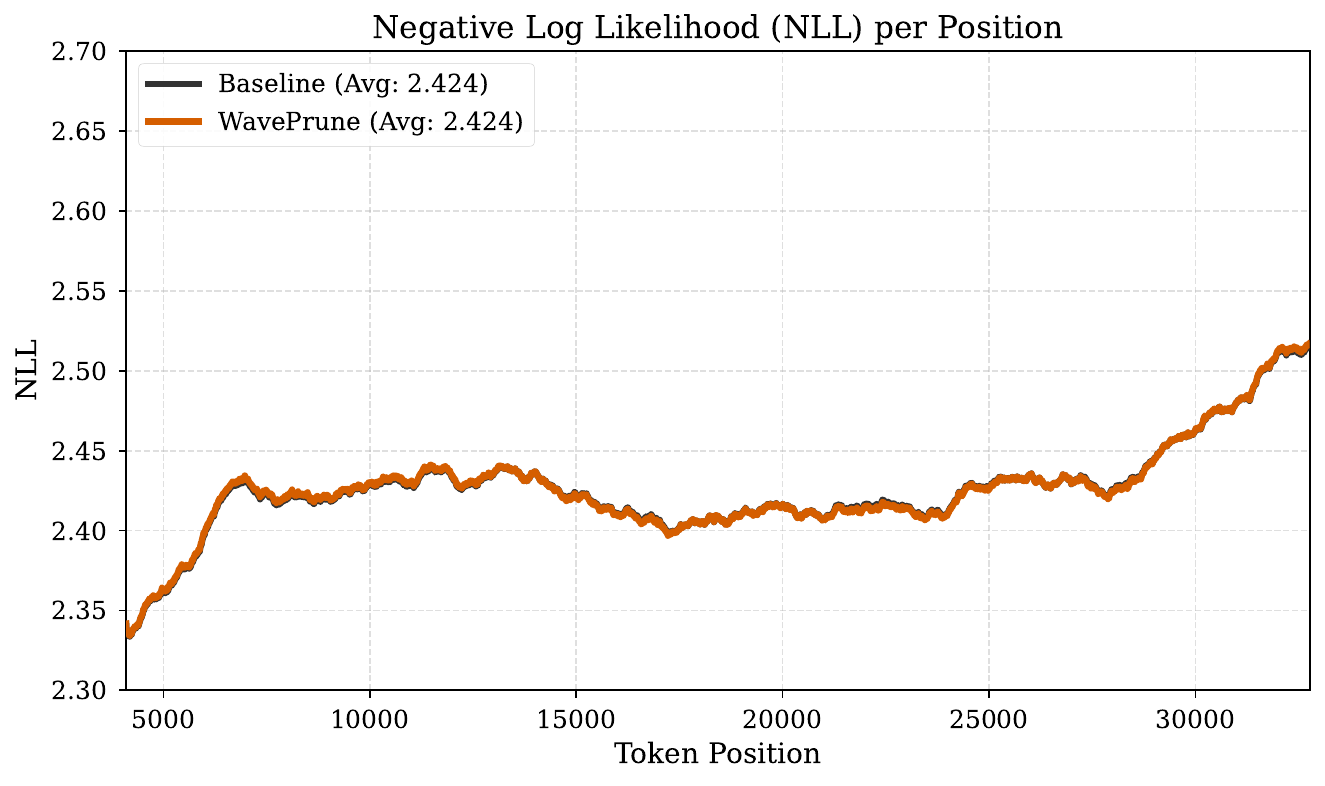}
\caption{NLL without YaRN.}
\label{fig:qwen3_nll_without_yarn}
\end{subfigure}
\begin{subfigure}{0.45\textwidth}
\centering
\includegraphics[width=\textwidth]{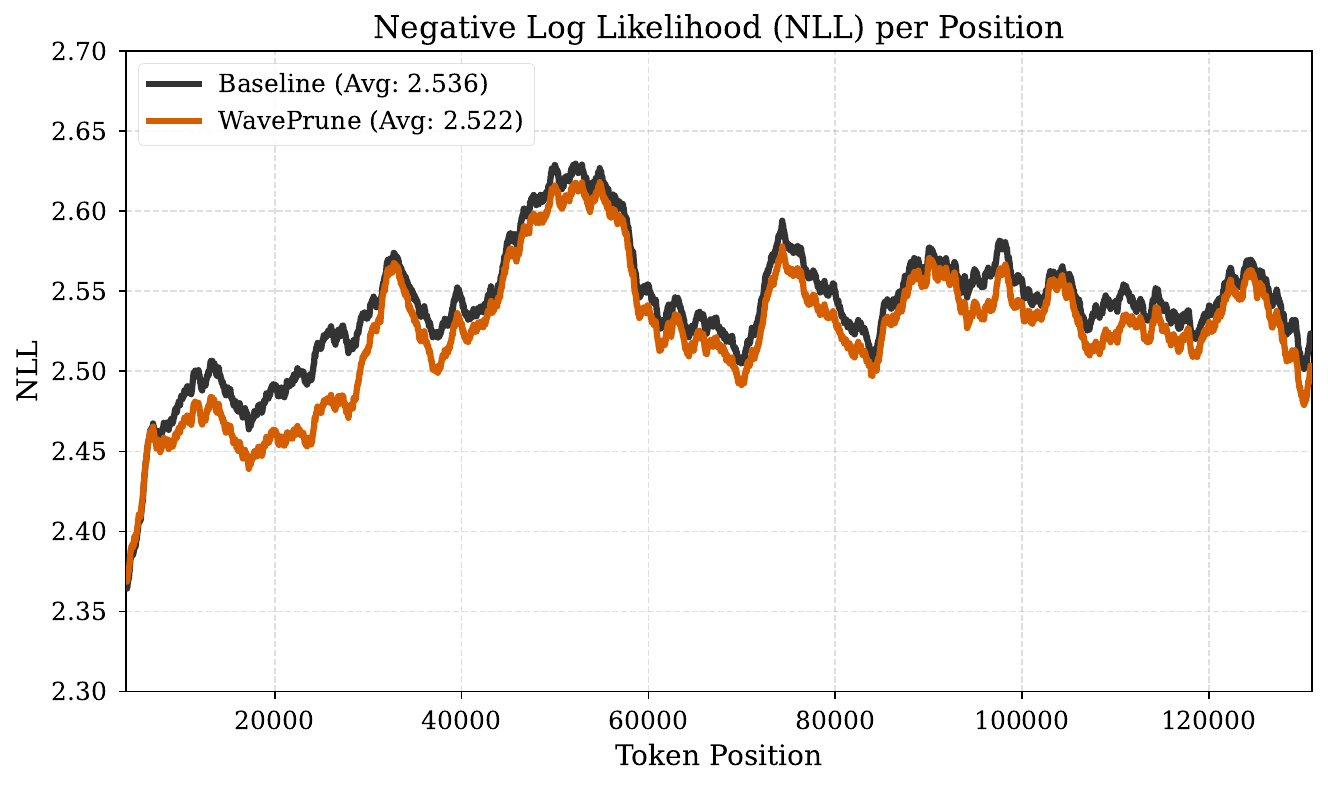}
\caption{NLL under YaRN.}
\label{fig:qwen3_nll_yarn}
\end{subfigure}
\caption{Negative log-likelihoods (NLL) of Qwen3-8B averaged over 100 PG19 samples at \textbf{(a)} 32K context length without YaRN and \textbf{(b)} 128K context length with YaRN. In \textbf{(a)}, the NLL curves nearly coincide; in \textbf{(b)}, WavePrune has a lower NLL than the original model.}
\label{fig:qwen3_nll_comparison}
\end{figure}

\vspace{1em}
\begin{figure}[htbp]
\begin{subfigure}{0.45\textwidth}
\centering
\includegraphics[width=\textwidth]{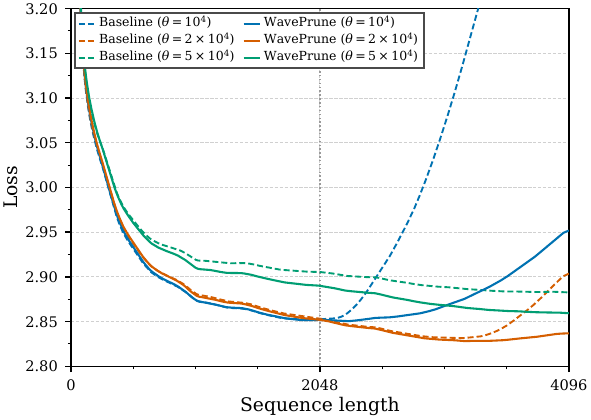}
\caption{Loss curves under GPT2-large.}
\end{subfigure}
\centering
\begin{subfigure}{0.45\textwidth}
\centering
\includegraphics[width=\textwidth]{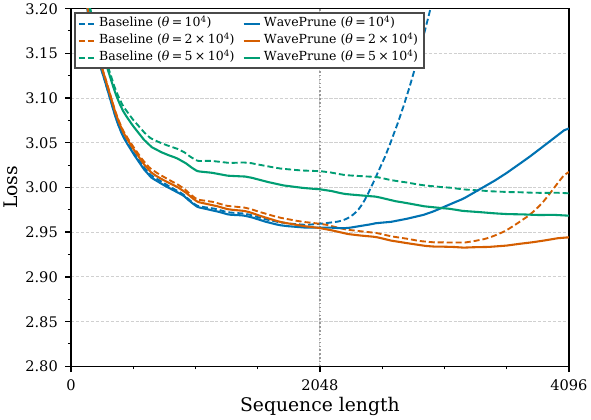}
\caption{Loss curves under GPT2-medium.}
\end{subfigure}
\caption{Validation loss versus sequence length under different RoPE base frequencies $\theta$. WavePrune attains lower validation loss than the baseline across all tested $\theta$.}
\label{fig:ppl_vs_length}
\end{figure}

\newpage
\begin{table}[htbp]
\centering
\caption{Results on general benchmarks.}
\vspace{-0.5em}
\label{tab:short_context_results}
{\fontsize{9}{10.8}\selectfont
\setlength{\tabcolsep}{2.5pt}
\renewcommand{\arraystretch}{1.1}
{%
\begin{tabular}{l@{\hspace{5pt}}|@{\hspace{5pt}}cccccccc@{\hspace{5pt}}|@{\hspace{5pt}}c}
\toprule
\multicolumn{1}{c}{} & \multicolumn{8}{c}{{\textbf{General Benchmarks}}} & \multicolumn{1}{c}{} \\
\cmidrule(lr){2-9}
{\textbf{Method}} & {\textbf{MMLU}} & {\textbf{CMMLU}} & {\textbf{BBH}} & {\textbf{DROP}} & {\textbf{GSM8K}} & {\textbf{MATH}} & {\textbf{MBPP}} & {\textbf{HumanEval}} & {\textbf{Avg.}} \\
\midrule
\multicolumn{10}{l}{\textit{Ministral3-3B}} \\
\hspace{0.6em}Baseline  & 67.41 & 60.18 & 30.52 & 6.06 & 76.27 & 44.66 & 43.40 & 51.22 & 47.47 \\
\hspace{0.6em}WavePrune & 67.55 & 60.31 & 31.15 & 6.02 & 77.10 & 45.22 & 45.00 & 52.44 & \textbf{48.10} \\
\midrule
\multicolumn{10}{l}{\textit{Qwen3-8B}} \\
\hspace{0.6em}Baseline  & 72.92 & 76.90 & 79.47 & 10.91 & 88.32 & 52.84 & 65.40 & 63.41 & 63.77 \\
\hspace{0.6em}WavePrune & 72.90 & 76.89 & 80.05 & 10.94 & 88.48 & 53.00 & 66.40 & 64.02 & \textbf{64.09} \\
\midrule
\multicolumn{10}{l}{\textit{Gemma3-12B}} \\
\hspace{0.6em}Baseline  & 71.51 & 61.38 & 80.49 & 13.93 & 88.48 & 47.76 & 71.00 & 11.59 & 55.77 \\
\hspace{0.6em}WavePrune & 71.54 & 61.34 & 80.22 & 13.97 & 88.70 & 47.60 & 70.60 & 15.24 & \textbf{56.15} \\
\midrule
\multicolumn{10}{l}{\textit{Qwen3-32B}} \\
\hspace{0.6em}Baseline  & 80.79 & 84.16 & 47.38 & 12.06 & 63.68 & 54.14 & 35.80 & 38.41 & 52.05 \\
\hspace{0.6em}WavePrune & 80.55 & 84.17 & 48.38 & 11.84 & 64.22 & 53.90 & 41.80 & 38.41 & \textbf{52.91} \\
\bottomrule
\end{tabular}%
}}
\renewcommand{\arraystretch}{1}
\end{table}

\begin{table}[H]
\centering
\caption{HELMET results at the 64K context split.}
\vspace{-0.5em}
\label{tab:helmet_64k}
{\fontsize{9}{10.8}\selectfont
\setlength{\tabcolsep}{3pt}
\renewcommand{\arraystretch}{1.1}
{%
\begin{tabular}{l@{\hspace{5pt}}|@{\hspace{5pt}}l@{\hspace{5pt}}|@{\hspace{5pt}}cccccc@{\hspace{5pt}}|@{\hspace{5pt}}c}
\toprule
\multicolumn{2}{c}{} & \multicolumn{6}{c}{{\textbf{HELMET}}} & \multicolumn{1}{c}{} \\
\cmidrule(lr){3-8}
{\textbf{Model}} & {\textbf{Method}} & {\textbf{Recall}} & {\textbf{RAG}} & {\textbf{Re-rank}} & {\textbf{Cite}} & {\textbf{LongQA}} & {\textbf{ICL}} & {\textbf{Avg.}} \\
\midrule
\multirow{2}{*}{Ministral3-3B} & Baseline & 88.25 & 59.00 & 28.39 & 13.37 & 36.97 & 26.20 & 42.03 \\
 & WavePrune & 89.06 & 58.75 & 28.10 & 13.76 & 37.05 & 32.40 & \textbf{43.19} \\
\midrule
\multirow{2}{*}{Qwen3-8B} & Baseline & 66.19 & 57.00 & 18.93 & 19.27 & 41.63 & 23.60 & 37.77 \\
 & WavePrune & 71.56 & 58.50 & 25.32 & 16.66 & 42.08 & 35.80 & \textbf{41.65} \\
\midrule
\multirow{2}{*}{Gemma3-12B} & Baseline & 36.25 & 59.00 & 34.91 & 1.04 & 44.16 & 76.20 & 41.93 \\
 & WavePrune & 37.69 & 58.00 & 34.72 & 1.99 & 44.97 & 76.20 & \textbf{42.26} \\
\midrule
\multirow{2}{*}{Qwen3-32B} & Baseline & 76.38 & 63.25 & 37.54 & 30.95 & 46.74 & 78.00 & 55.48 \\
 & WavePrune & 87.06 & 63.00 & 38.78 & 27.60 & 47.10 & 81.20 & \textbf{57.46} \\
\midrule\\[-1.55em]
\midrule
\multirow{2}{*}{Llama3.1-8B} & Baseline & 79.56 & 58.75 & 31.42 & 4.32 & 39.83 & 31.40 & \textbf{40.88} \\
 & WavePrune & 74.81 & 57.50 & 24.20 & 4.29 & 39.43 & 22.20 & 37.07 \\
\bottomrule
\end{tabular}%
}}
\renewcommand{\arraystretch}{1}
\end{table}

\begin{table}[H]
\centering
\caption{HELMET results at the 128K context split.}
\vspace{-0.5em}
\label{tab:helmet_128k}
{\fontsize{9}{10.8}\selectfont
\setlength{\tabcolsep}{3pt}
\renewcommand{\arraystretch}{1.1}
{%
\begin{tabular}{l@{\hspace{5pt}}|@{\hspace{5pt}}l@{\hspace{5pt}}|@{\hspace{5pt}}cccccc@{\hspace{5pt}}|@{\hspace{5pt}}c}
\toprule
\multicolumn{2}{c}{} & \multicolumn{6}{c}{{\textbf{HELMET}}} & \multicolumn{1}{c}{} \\
\cmidrule(lr){3-8}
{\textbf{Model}} & {\textbf{Method}} & {\textbf{Recall}} & {\textbf{RAG}} & {\textbf{Re-rank}} & {\textbf{Cite}} & {\textbf{LongQA}} & {\textbf{ICL}} & {\textbf{Avg.}} \\
\midrule
\multirow{2}{*}{Ministral3-3B} & Baseline & 83.12 & 57.00 & 19.19 & 6.09 & 36.75 & 14.80 & 36.16 \\
 & WavePrune & 83.19 & 56.75 & 18.33 & 5.79 & 36.37 & 17.20 & \textbf{36.27} \\
\midrule
\multirow{2}{*}{Qwen3-8B} & Baseline & 61.19 & 51.50 & 10.42 & 12.17 & 43.67 & 22.20 & 33.52 \\
 & WavePrune & 67.88 & 52.75 & 17.87 & 13.67 & 47.51 & 30.00 & \textbf{38.28} \\
\midrule
\multirow{2}{*}{Gemma3-12B} & Baseline & 26.69 & 49.25 & 12.03 & 0.47 & 42.20 & 76.60 & 34.54 \\
 & WavePrune & 28.44 & 48.00 & 12.57 & 0.11 & 43.95 & 77.80 & \textbf{35.15} \\
\midrule
\multirow{2}{*}{Qwen3-32B} & Baseline & 81.81 & 59.50 & 30.39 & 23.18 & 52.24 & 81.00 & 54.69 \\
 & WavePrune & 86.00 & 59.50 & 30.00 & 20.16 & 53.10 & 82.00 & \textbf{55.13} \\
\midrule\\[-1.55em]
\midrule
\multirow{2}{*}{Llama3.1-8B} & Baseline & 90.31 & 54.00 & 8.63 & 0.04 & 38.67 & 21.60 & \textbf{35.54} \\
 & WavePrune & 76.19 & 53.75 & 8.09 & 0.04 & 39.24 & 22.80 & 33.35 \\
\bottomrule
\end{tabular}%
}}
\renewcommand{\arraystretch}{1}
\end{table}

\newpage
\begin{table}[htbp]
\centering
\caption{Validation loss of GPT2-medium and GPT2-large on OpenWebText at lengths 2048 and 4096. ``Pretraining'' indicates whether WavePrune is enabled during pretraining, and ``Eval'' indicates whether WavePrune is enabled during evaluation.}
\label{tab:gpt2_train_eval}
{\fontsize{8}{9.6}\selectfont
\setlength{\tabcolsep}{4pt}
\renewcommand{\arraystretch}{1.1}
\begin{subtable}{0.49\textwidth}
\centering
\caption{Eval @2048}
\label{tab:gpt2_train_eval_2048}
\begin{tabular}{ll|cc}
\toprule
\multirow{2}{*}{Model} & \multirow{2}{*}{Pretraining} & \multicolumn{2}{c}{Eval} \\
\cmidrule(lr){3-4}
 & & Baseline & WavePrune \\
\midrule
\multirow{2}{*}{GPT2-medium} & Baseline & 3.0270 & 3.0261 \\
 & WavePrune & 3.6210 & \textbf{3.0196} \\
\midrule
\multirow{2}{*}{GPT2-large} & Baseline & 2.9009 & 2.9003 \\
 & WavePrune & 3.7816 & \textbf{2.8974} \\
\bottomrule
\end{tabular}%
\end{subtable}
\begin{subtable}{0.49\textwidth}
\centering
\caption{Eval @4096}
\label{tab:gpt2_train_eval_4096}
\begin{tabular}{ll|cc}
\toprule
\multirow{2}{*}{Model} & \multirow{2}{*}{Pretraining} & \multicolumn{2}{c}{Eval} \\
\cmidrule(lr){3-4}
 & & Baseline & WavePrune \\
\midrule
\multirow{2}{*}{GPT2-medium} & Baseline & 3.8564 & 3.3314 \\
 & WavePrune & 4.3849 & \textbf{3.0953} \\
\midrule
\multirow{2}{*}{GPT2-large} & Baseline & 3.5618 & 3.1797 \\
 & WavePrune & 4.5162 & \textbf{2.9536} \\
\bottomrule
\end{tabular}%
\end{subtable}
}
\end{table}

\vspace{1em}
\begin{table}[htbp]
\centering
\caption{Full comparison between FA2 and WavePrune latency across prefill and decode. We use the same setting as Section~\ref{sec:system_efficiency}, except that sequence lengths extend to 128K.}
\label{tab:system_efficiency_full}
{\fontsize{9}{10.8}\selectfont
\setlength{\tabcolsep}{6pt}
\renewcommand{\arraystretch}{1.1}
{%
\begin{tabular}{{c|c|r@{\hspace{10pt}}rr}}
\toprule
\multicolumn{1}{c|}{\textbf{Setting}} & \textbf{Length} & \textbf{FA2} & \textbf{WavePrune} & \textbf{Speedup} \\
\midrule
\multirow{5}{*}{Prefill, H64} & 8K & 46.96 & 46.07 & \textbf{1.02$\times$} \\
 & 16K & 183.54 & 174.37 & \textbf{1.05$\times$} \\
 & 32K & 731.00 & 673.71 & \textbf{1.09$\times$} \\
 & 64K & 2934.19 & 2659.50 & \textbf{1.10$\times$} \\
 & 128K & 12096.79 & 10775.51 & \textbf{1.12$\times$} \\
\midrule
\multirow{5}{*}{Prefill, H128} & 8K & 82.17 & 75.03 & \textbf{1.10$\times$} \\
 & 16K & 325.02 & 287.66 & \textbf{1.13$\times$} \\
 & 32K & 1306.79 & 1132.27 & \textbf{1.15$\times$} \\
 & 64K & 5337.95 & 4490.98 & \textbf{1.19$\times$} \\
 & 128K & \textcolor{gray}{OOM} & \textcolor{gray}{OOM} & \textcolor{gray}{N/A $\, \, \,$} \\
\midrule
\multirow{5}{*}{Decode, H64} & 8K & 0.448 & 0.432 & \textbf{1.04$\times$} \\
 & 16K & 0.803 & 0.767 & \textbf{1.05$\times$} \\
 & 32K & 1.567 & 1.490 & \textbf{1.05$\times$} \\
 & 64K & 3.095 & 2.938 & \textbf{1.05$\times$} \\
 & 128K & 6.155 & 5.838 & \textbf{1.05$\times$} \\
\midrule
\multirow{5}{*}{Decode, H128} & 8K & 0.826 & 0.734 & \textbf{1.13$\times$} \\
 & 16K & 1.538 & 1.277 & \textbf{1.20$\times$} \\
 & 32K & 3.059 & 2.459 & \textbf{1.24$\times$} \\
 & 64K & 6.111 & 4.816 & \textbf{1.27$\times$} \\
 & 128K & 12.193 & 9.528 & \textbf{1.28$\times$} \\
\bottomrule
\end{tabular}%
}}
\renewcommand{\arraystretch}{1}
\end{table}

\vspace{1em}
\begin{table}[htbp]
\centering
\caption{Channel IDs and their wavelengths in Qwen3-8B. These wavelengths serve as the pruning thresholds in FlashWavePrune.}
\label{tab:channel_wavelengths}
{\fontsize{9}{10.8}\selectfont
\begin{tabular}{c|cccccccc}
    \toprule
    \textbf{Channel ID} & 7 & 15 & 23 & 31 & 39 & 47 & 55 & 63 \rule{0pt}{2.2ex} \\
    \midrule\\[-1.35em]
    \textbf{Wavelength (rounded down)} & 28 & 160 & 900 & 5063 & 28472 & 160114 & 900388 & 5063255 \rule{0pt}{2.3ex} \\[0ex]
    \bottomrule
\end{tabular}}
\end{table}

\newpage
\vspace{1em}
\begin{table}[htbp]
\centering
\caption{Average attention scores to neighboring tokens (distance $\leq 15$), averaged across all 12 GPT2-base layers, evaluated on the Tetranacci training (length 2048) and validation (length 4096) sets. WavePrune assigns more attention mass to neighboring tokens at both lengths and retains more of it at the extrapolated length, indicating it better preserves local attention under length extrapolation.}
\label{tab:att_scores_tetranacci}
{\fontsize{9}{10.8}\selectfont
\begin{tabular}{l|cc|c}
    \toprule
    \textbf{Method} & \textbf{Train (len 2048)} & \textbf{Val (len 4096)} & \textbf{Retention (Val / Train)} \rule{0pt}{2.2ex} \\
    \midrule\\[-1.35em]
    WavePrune & \textbf{0.179} & \textbf{0.148} & \textbf{82.8\%} \rule{0pt}{2.3ex} \\[0ex]
    Baseline  & 0.161 & 0.123 & 76.7\% \rule{0pt}{2.3ex} \\[0ex]
    \bottomrule
\end{tabular}}
\end{table}

\vspace{1em}
\begin{table}[htbp]
    \centering
    \caption{Average attention scores to the ``magic number'' for each retrieval head \citep{wu2025retrieval}, evaluated on samples from the NIAH validation set. We select the first eight retrieval heads of each model, sorted by layer and head index. For each validation set sample, we report the average attention allocated to tokens near the magic number (distance $\leq 5$) from tokens in the final answer span. Notably, the WavePrune model's retrieval heads are consistently activated across different samples, while the baseline model's heads frequently fail to attend to the magic number at all.}
    \label{tab:att_scores_niah}
    {\fontsize{9}{10.8}\selectfont
    \setlength{\tabcolsep}{3pt}
    \begin{tabular}{c|cccc|cccc}
        \toprule
        \multirow{2}{*}{\textbf{Head}} & \multicolumn{4}{c|}{\textbf{WavePrune}} & \multicolumn{4}{c}{\textbf{Baseline}} \rule{0pt}{2.2ex} \\
        \cmidrule(lr){2-5} \cmidrule(lr){6-9}
        & \textbf{Sample 1} & \textbf{Sample 2} & \textbf{Sample 3} & \textbf{Sample 4} & \textbf{Sample 1} & \textbf{Sample 2} & \textbf{Sample 3} & \textbf{Sample 4} \\[0.3ex]
        \midrule\\[-1.35em]
        1  & 0.750 & 0.750 & 0.798 & 0.751 & 0.245 & \uwave{0.000} & \uwave{0.000} & \uwave{0.000} \rule{0pt}{2.3ex} \\[0ex]
        2  & 0.749 & 0.742 & 0.523 & 0.516 & 0.750 & \uwave{0.000} & \uwave{0.000} & \uwave{0.000} \rule{0pt}{2.3ex} \\[0ex]
        3  & 0.748 & 0.412 & 0.800 & 0.250 & \uwave{0.000} & \uwave{0.000} & \uwave{0.000} & 0.373 \rule{0pt}{2.3ex} \\[0ex]
        4  & 0.413 & 0.516 & 0.603 & 0.502 & \uwave{0.000} & 0.250 & 0.600 & \uwave{0.000} \rule{0pt}{2.3ex} \\[0ex]
        5  & 0.944 & \uwave{0.000} & \uwave{0.000} & 0.986 & \uwave{0.021} & 1.000 & 1.000 & \uwave{0.003} \rule{0pt}{2.3ex} \\[0ex]
        6  & 0.750 & 0.750 & 0.800 & 0.750 & \uwave{0.000} & 0.500 & 0.400 & 0.500 \rule{0pt}{2.3ex} \\[0ex]
        7  & 0.751 & 0.750 & 0.800 & 0.750 & \uwave{0.001} & 0.750 & 0.984 & 0.500 \rule{0pt}{2.3ex} \\[0ex]
        8  & 0.394 & 0.500 & 0.800 & 0.250 & \uwave{0.000} & 1.000 & \uwave{0.008} & \uwave{0.000} \rule{0pt}{2.3ex} \\[0ex]
        \midrule\\[-1.35em]
        \textbf{Loss} & \textbf{0.030} & \textbf{0.005} & \textbf{0.003} & \textbf{0.001} & 6.328 & 4.734 & 10.840 & 5.767 \rule{0pt}{2.3ex} \\[0ex]
        \bottomrule
    \end{tabular}}
\end{table}

\vspace{1em}
\begin{figure}[htbp]
\centering
\includegraphics[width=0.895\textwidth]{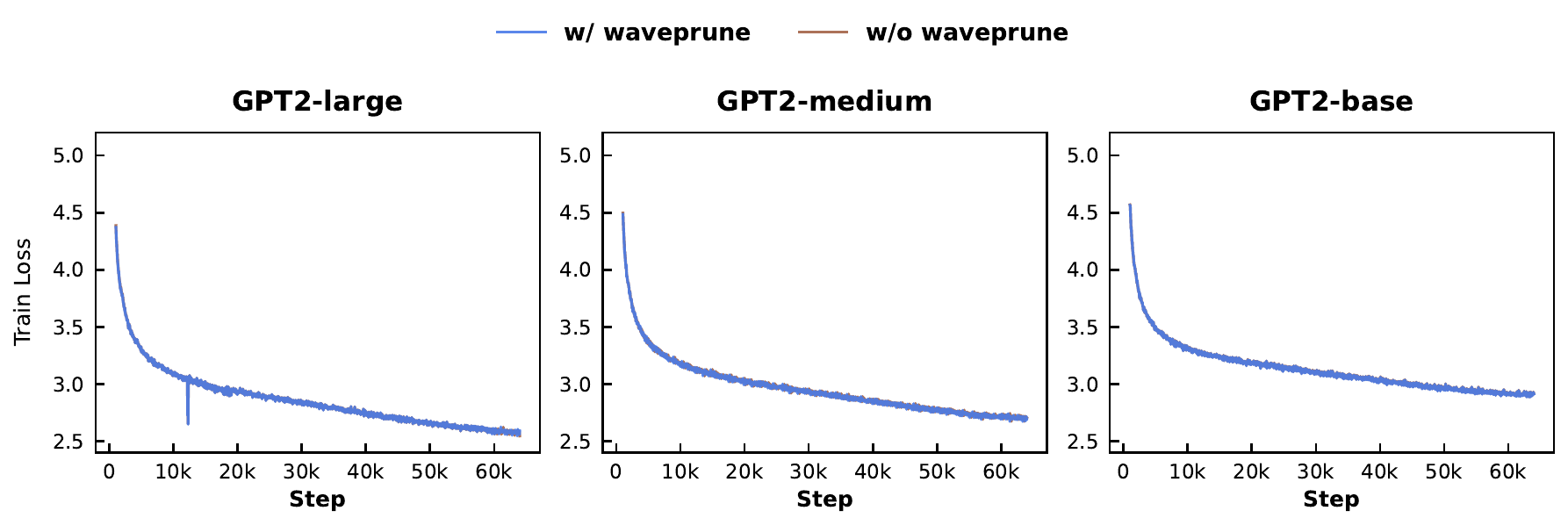}
\caption{Training loss of GPT2-large (left), GPT2-medium (middle), and GPT2-base (right).}
\label{fig:pretrain_results_train}
\end{figure}

\newpage
\vspace{2em}
\begin{figure}[t]
    \centering
    \includegraphics[width=0.7\textwidth]{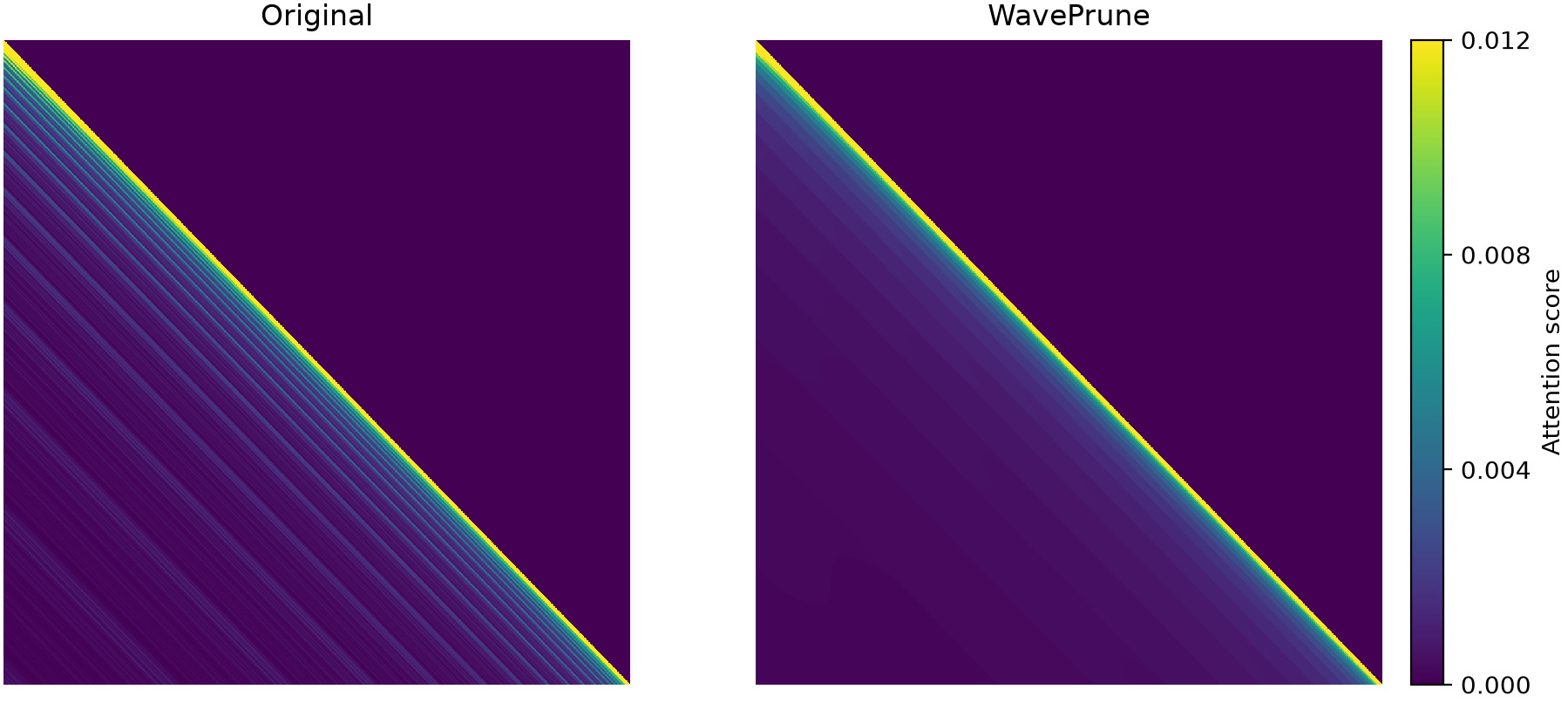}
    \caption{Attention maps under constant queries and keys ($q_i = k_i = \mathbf{1}$; $N = 1024$, $d_h = 64$, $\theta = 10^4$), where the intended lag is $0$ (i.e., each token should attend only to itself). RoPE alone produces many ghost sub-diagonals (left); WavePrune suppresses these artifacts and recovers a clean attention map (right).}
    \label{fig:controlled_qk_attn}
\end{figure}

\section{Attention Visualizations}\label{app:additional_attention_score_visualizations}
\subsection{Experimental Details}\label{app:attention_visualization_details}

We produce the attention maps by running Qwen3-8B and Ministral3-3B on 100 sequences sampled from PG19. Each sequence is truncated to 4096 tokens. For each model, we extract the attention scores and average these maps across the 100 sequences.

\subsection{Layer-wide and Head-level Comparisons}
Our visualization of attention maps includes four layer-wide attention score and two head-level per-channel logit comparisons. The example in the main text is Qwen3-8B layer 0, head 20 (Figure~\ref{fig:qwen3_layer0_attn_scores_comparison}); additional layer-wise visualizations below use Qwen3-8B layer 32 head 12 (Figure~\ref{fig:qwen3_layer32_attn_scores_comparison}), Ministral3-3B layer 0 head 23 (Figure~\ref{fig:ministral_layer0_attn_scores_comparison}), and Ministral3-3B layer 1 head 17 (Figure~\ref{fig:ministral_layer1_attn_scores_comparison}). In each visualization, ``original'' denotes the unmodified attention computation and ``after WavePrune'' denotes attention with each channel truncated at its wavelength. The full-map panels show layer-wide scores, while the cropped panels show the indicated head in greater detail. The per-channel figures show centered logit contributions for Qwen3-8B layer 0, heads 20 (Figure~\ref{fig:qwen3_layer0_head20_channel_visualization_original} vs. Figure~\ref{fig:qwen3_layer0_head20_channel_visualization_waveprune}) and 6 (Figure~\ref{fig:qwen3_layer0_head6_channel_visualization_original} vs. Figure~\ref{fig:qwen3_layer0_head6_channel_visualization_waveprune}), each before and after applying WavePrune.

\begin{figure}[!htbp]
    \centering
    \begin{minipage}[b]{0.44\textwidth}
        \centering
        \appendixfig[width=\textwidth,height=2\textwidth]{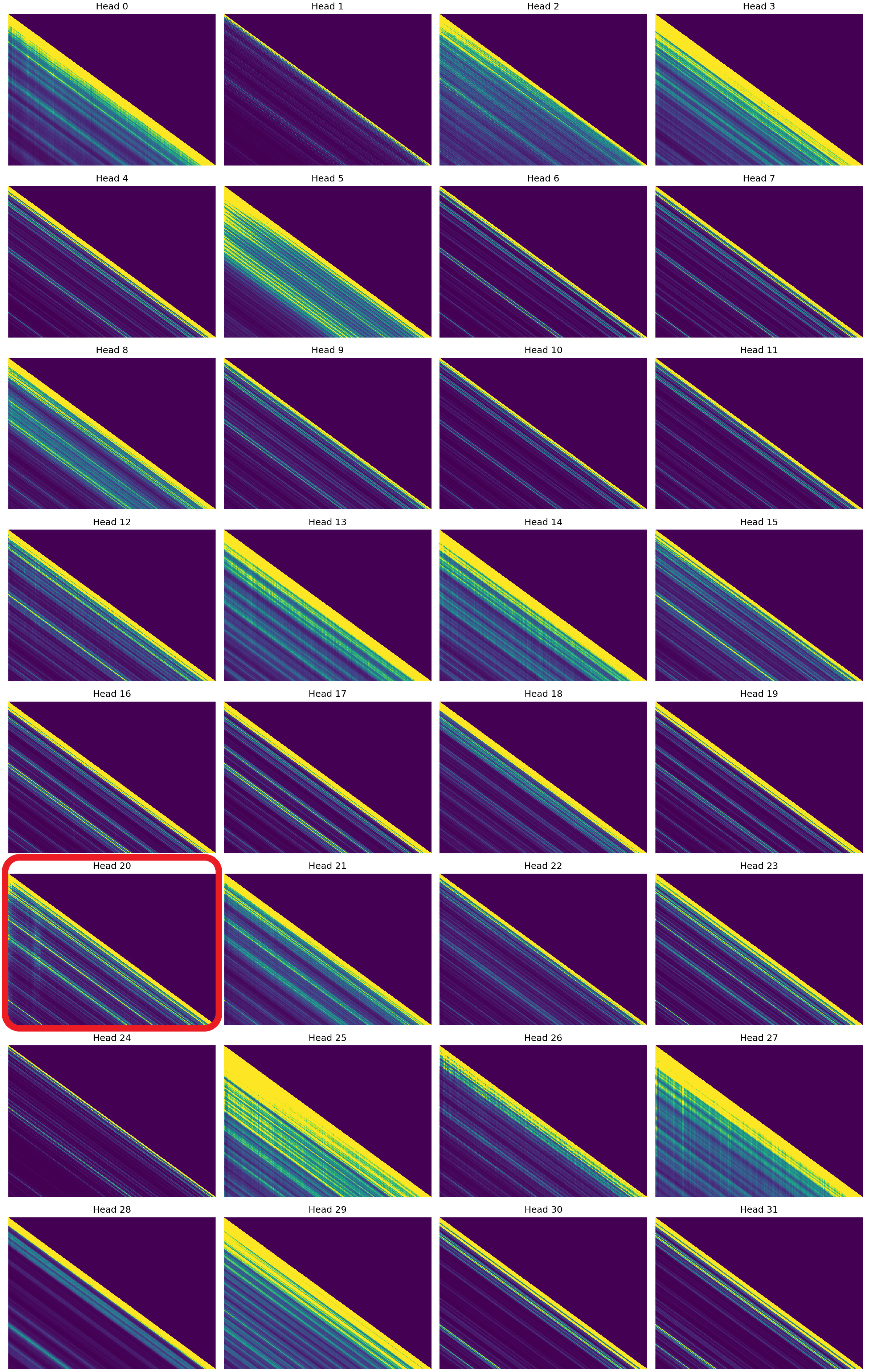}
        \subcaption{Attention scores (original)}
        \label{fig:qwen3_original_attn_scores}
    \end{minipage}
    \hspace{0.08\textwidth}
    \begin{minipage}[b]{0.44\textwidth}
        \centering
        \appendixfig[width=\textwidth,height=2\textwidth]{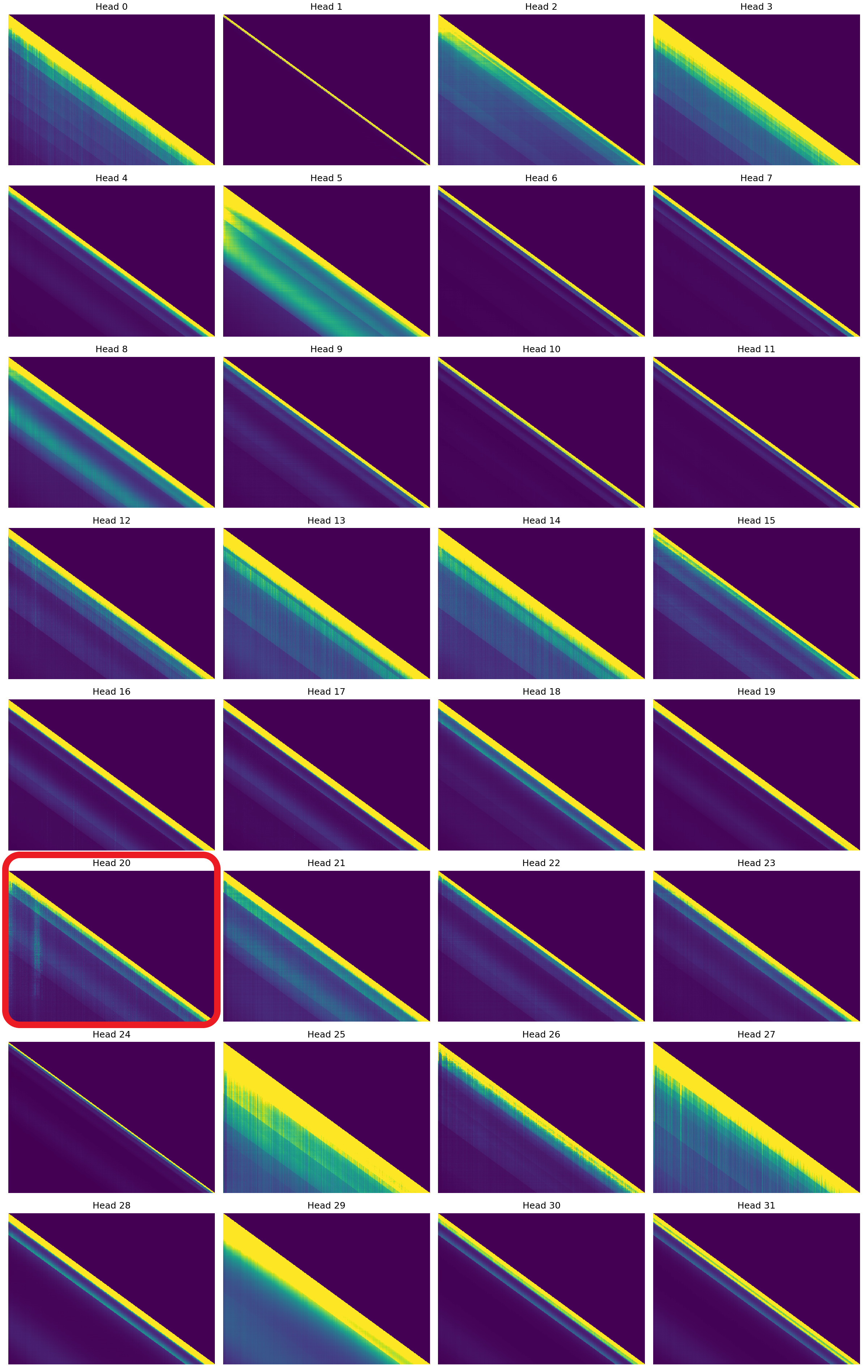}
        \subcaption{Attention scores (after WavePrune)}
        \label{fig:qwen3_waveprune_attn_scores}
    \end{minipage}
    \\[0.5em]
    \begin{minipage}[b]{0.44\textwidth}
        \centering
        \appendixfig[width=\textwidth,height=\textwidth]{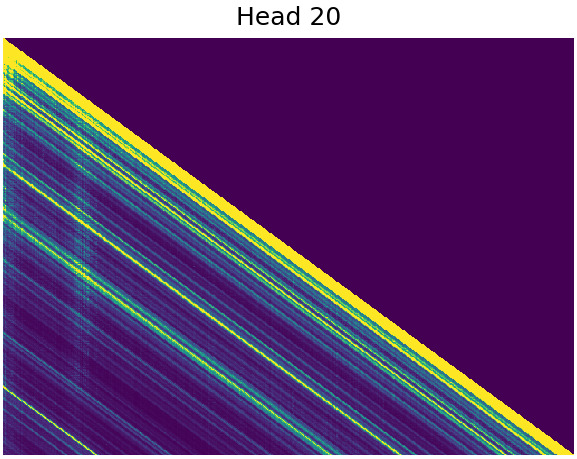}
        \subcaption{Attention scores (original, head 20)}
        \label{fig:qwen3_original_attn_scores_head20_crop}
    \end{minipage}
    \hspace{0.08\textwidth}
    \begin{minipage}[b]{0.44\textwidth}
        \centering
        \appendixfig[width=\textwidth,height=\textwidth]{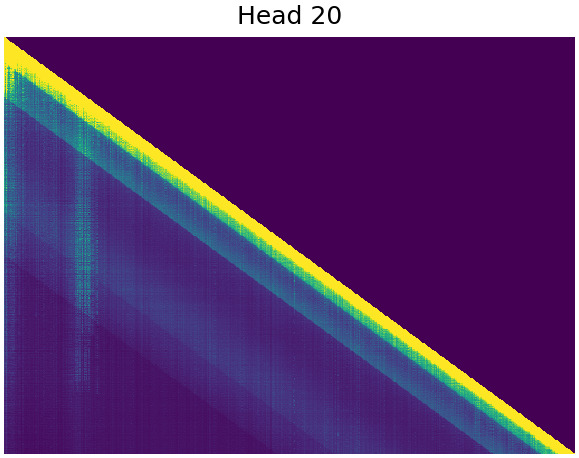}
        \subcaption{Attention scores (after WavePrune, head 20)}
        \label{fig:qwen3_waveprune_attn_scores_head20_crop}
    \end{minipage}
    \caption{Attention scores in Qwen3-8B's layer 0. (a-b) Layer-wide attention scores. In the original model (a), attention is scattered across multiple ghost sub-diagonals. WavePrune (b) effectively eliminates these artifacts, allowing attention to concentrate on the intended lags. (c-d) We use head 20 as an example to illustrate the elimination of ghost sub-diagonals after applying WavePrune.}
    \label{fig:qwen3_layer0_attn_scores_comparison}
\end{figure}

\newpage
\begin{figure}[!htbp]
    \centering
    \begin{minipage}[b]{0.44\textwidth}
        \centering
        \appendixfig[width=\textwidth,height=2\textwidth]{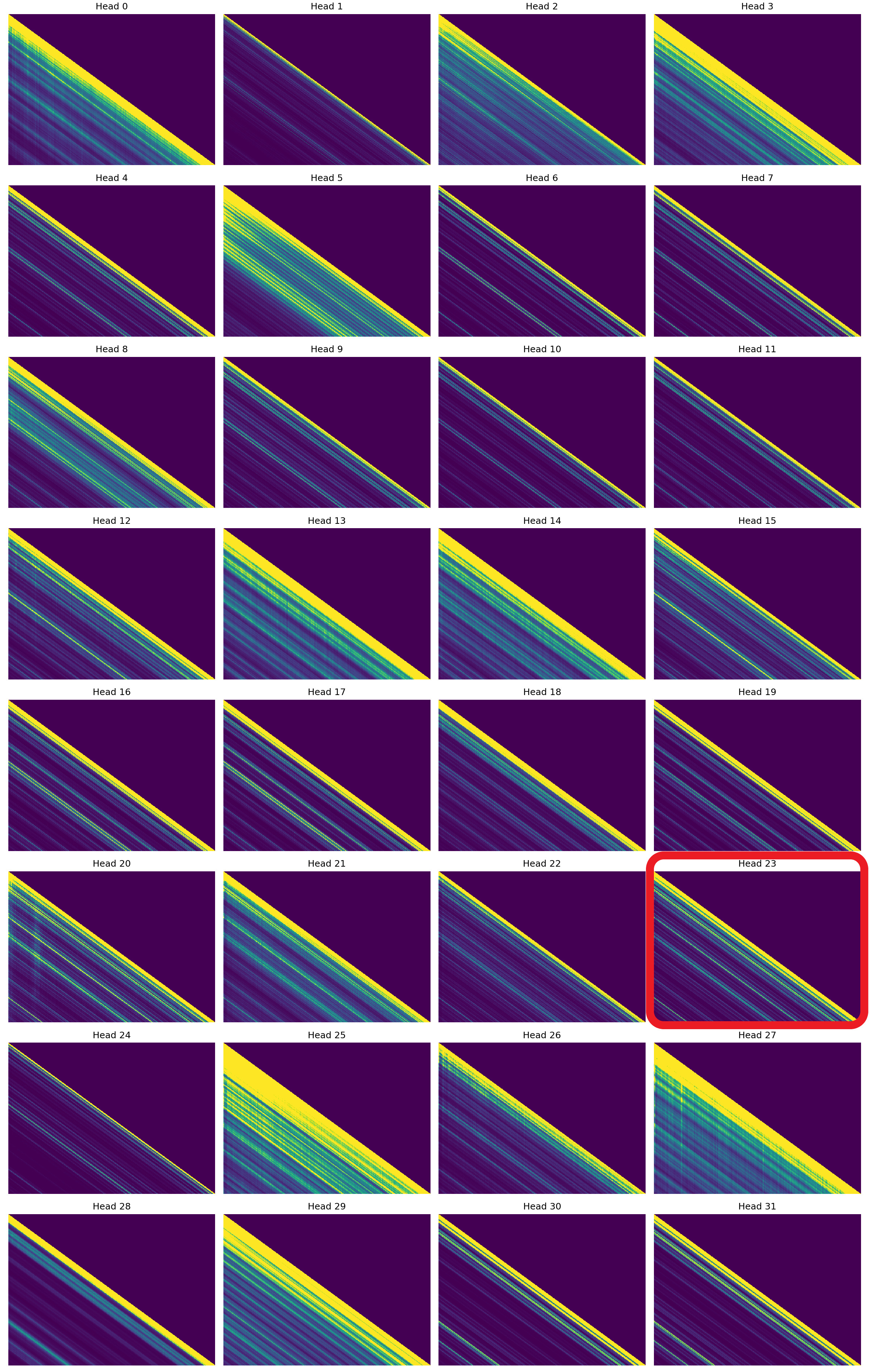}
        \subcaption{Attention scores (original)}
        \label{fig:ministral_original_attn_scores}
    \end{minipage}
    \hspace{0.08\textwidth}
    \begin{minipage}[b]{0.44\textwidth}
        \centering
        \appendixfig[width=\textwidth,height=2\textwidth]{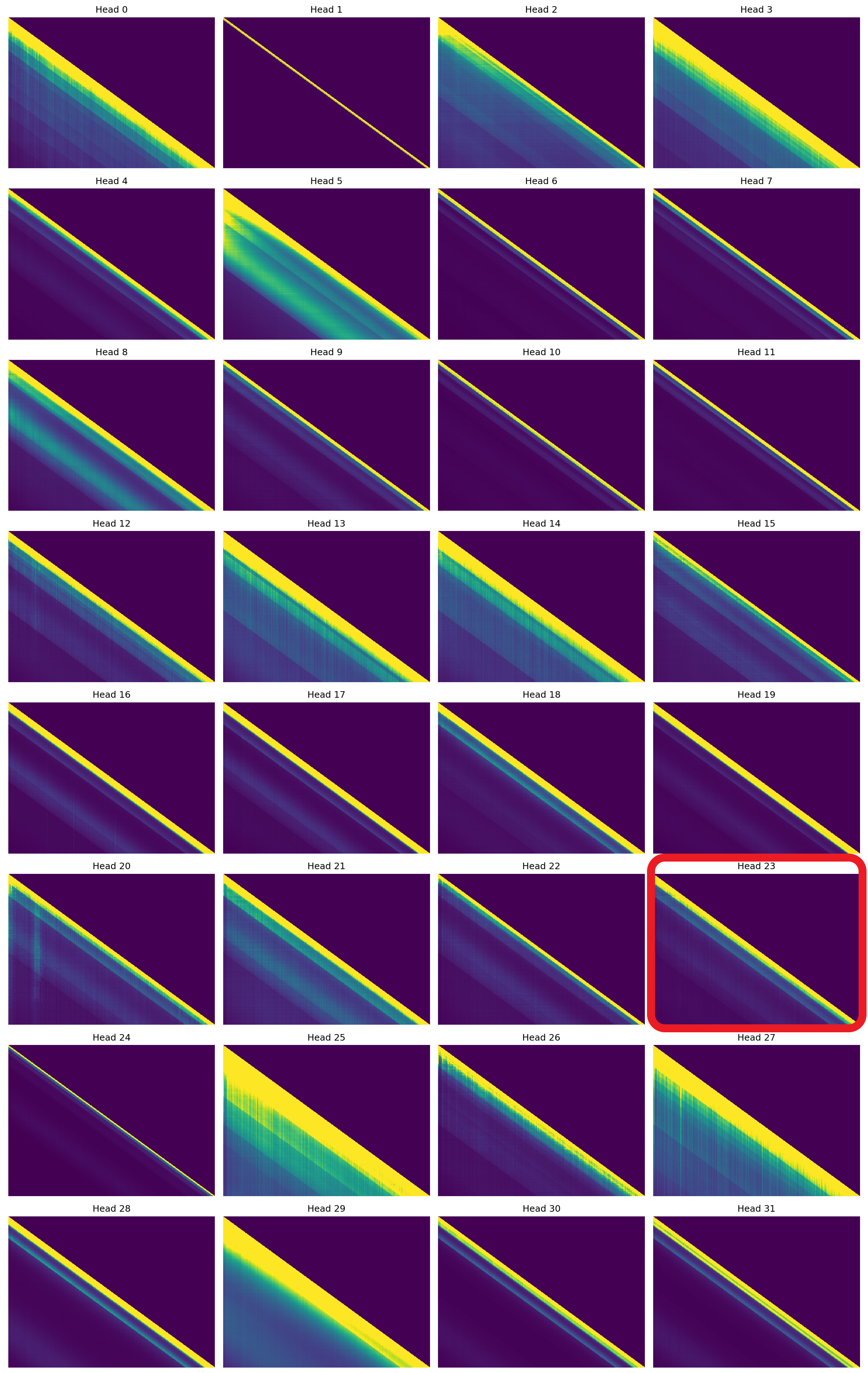}
        \subcaption{Attention scores (after WavePrune)}
        \label{fig:ministral_waveprune_attn_scores}
    \end{minipage}
    \\[0.5em]
    \begin{minipage}[b]{0.44\textwidth}
        \centering
        \appendixfig[width=\textwidth,height=\textwidth]{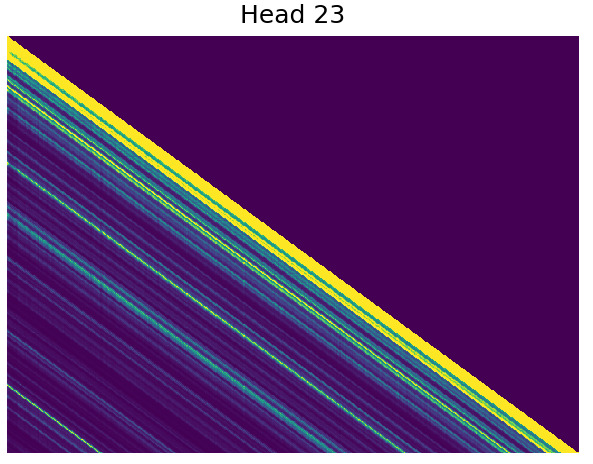}
        \subcaption{Attention scores (original, head 23)}
        \label{fig:ministral_original_attn_scores_head23_crop}
    \end{minipage}
    \hspace{0.08\textwidth}
    \begin{minipage}[b]{0.44\textwidth}
        \centering
        \appendixfig[width=\textwidth,height=\textwidth]{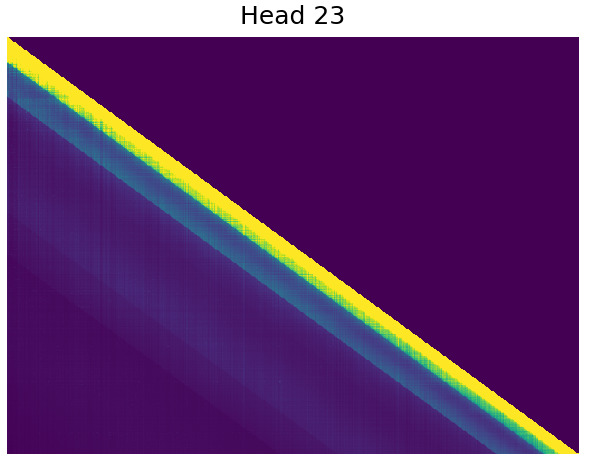}
        \subcaption{Attention scores (after WavePrune, head 23)}
        \label{fig:ministral_waveprune_attn_scores_head23_crop}
    \end{minipage}
    \caption{Attention scores in Ministral3-3B's layer 0. (a-b) Layer-wide attention scores. In the original model (a), attention is scattered across multiple ghost sub-diagonals. WavePrune (b) effectively eliminates these artifacts, allowing attention to concentrate on the intended lags. (c-d) We use head 23 as an example to illustrate the elimination of ghost sub-diagonals after applying WavePrune.}
    \label{fig:ministral_layer0_attn_scores_comparison}
\end{figure}

\newpage
\begin{figure}[!htbp]
    \centering
    \begin{minipage}[b]{0.44\textwidth}
        \centering
        \appendixfig[width=\textwidth,height=2\textwidth]{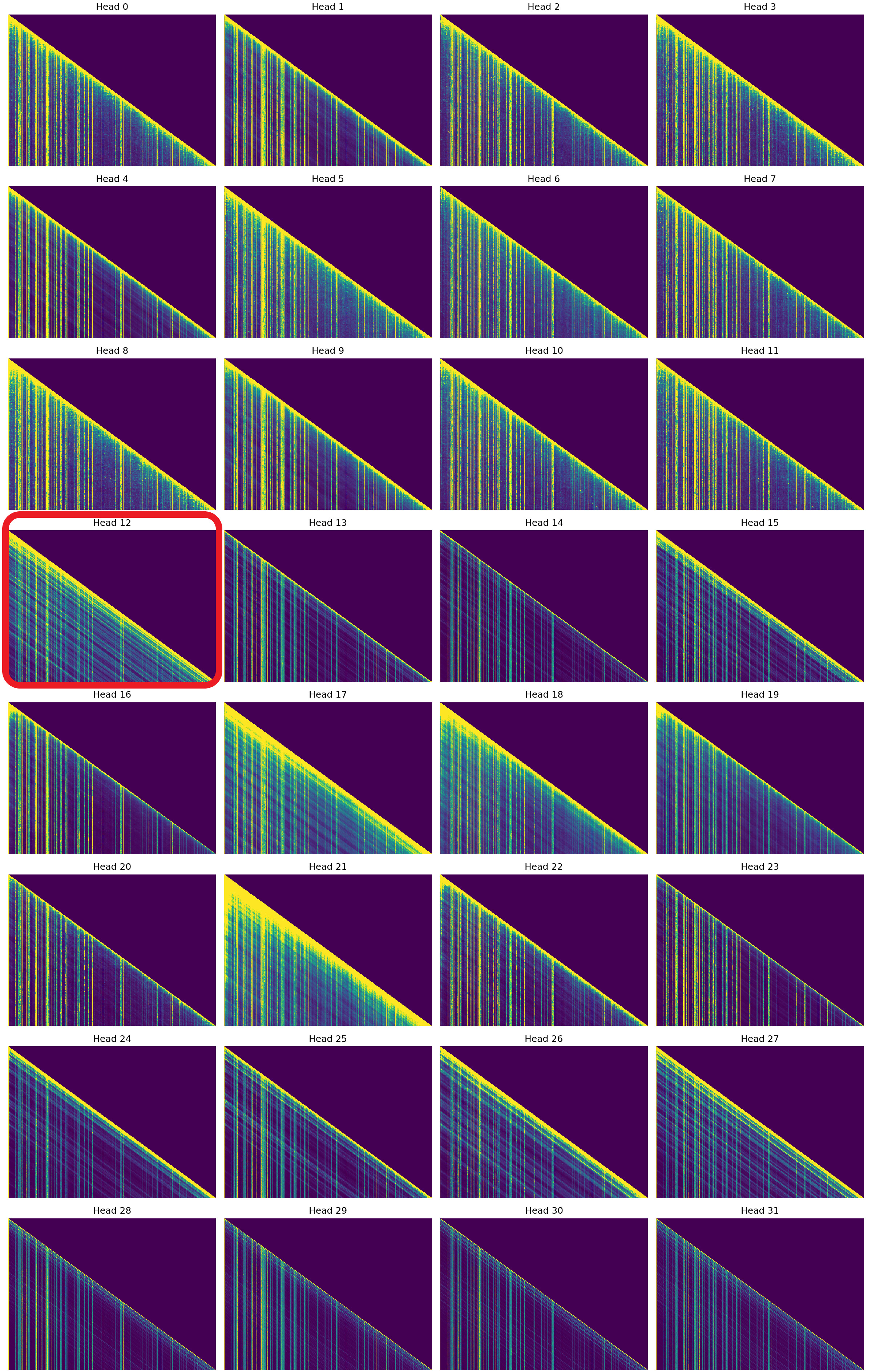}
        \subcaption{Attention scores (original)}
    \end{minipage}
    \hspace{0.08\textwidth}
    \begin{minipage}[b]{0.44\textwidth}
        \centering
        \appendixfig[width=\textwidth,height=2\textwidth]{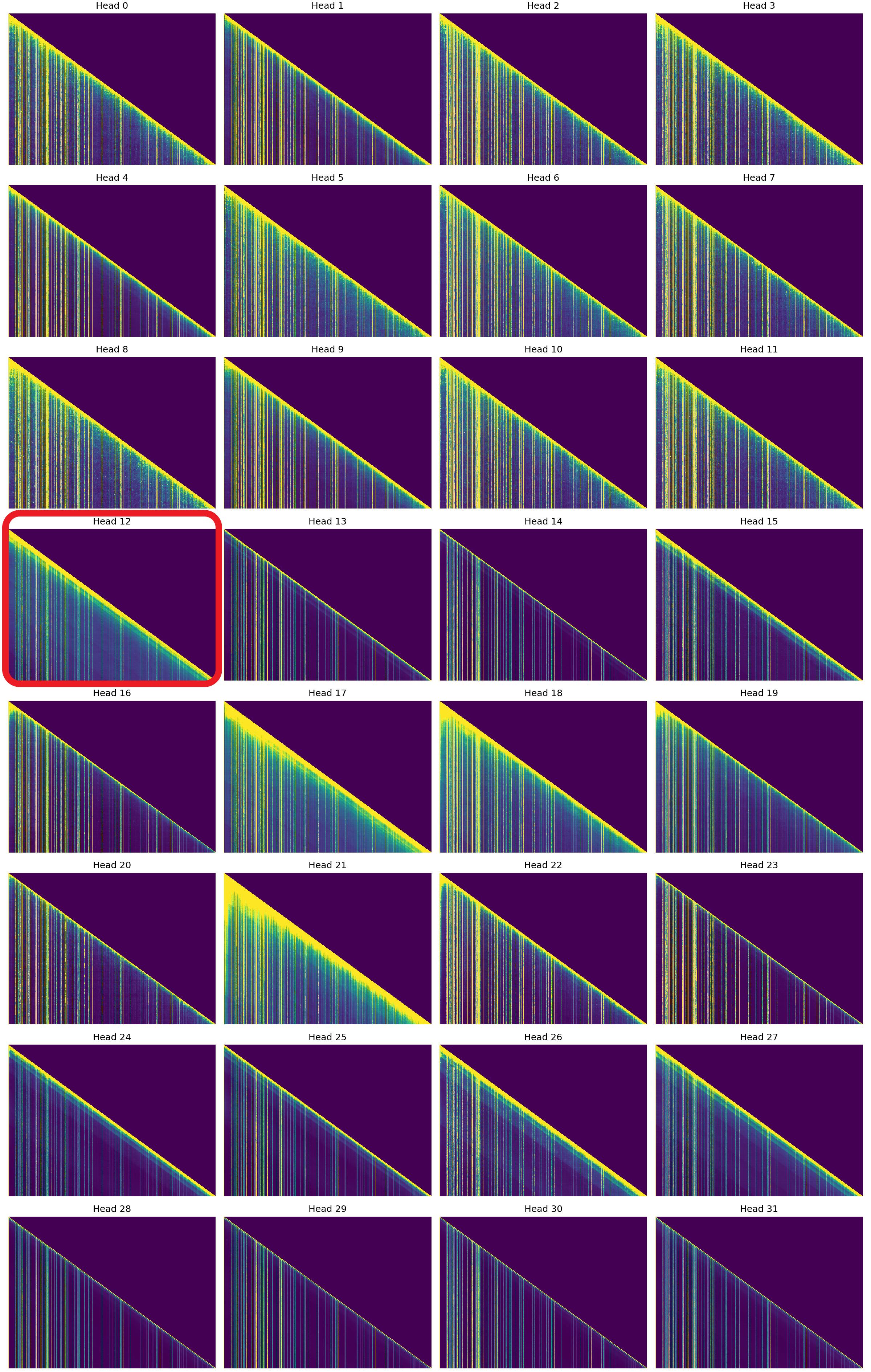}
        \subcaption{Attention scores (after WavePrune)}
    \end{minipage}
    \\[0.5em]
    \begin{minipage}[b]{0.44\textwidth}
        \centering
        \appendixfig[width=\textwidth,height=\textwidth]{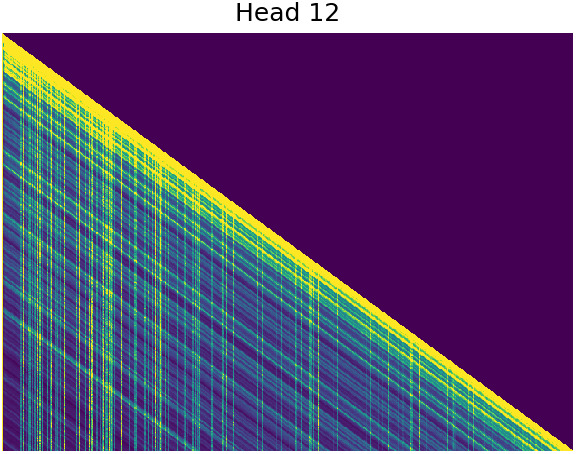}
        \subcaption{Attention scores (original, head 12)}
    \end{minipage}
    \hspace{0.08\textwidth}
    \begin{minipage}[b]{0.44\textwidth}
        \centering
        \appendixfig[width=\textwidth,height=\textwidth]{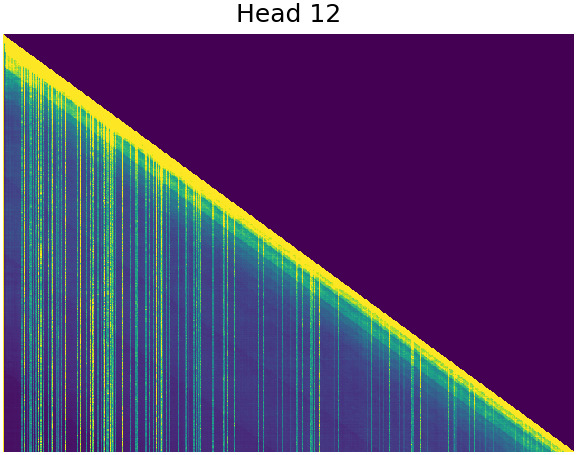}
        \subcaption{Attention scores (after WavePrune, head 12)}
    \end{minipage}
    \caption{Attention scores in Qwen3-8B's layer 32. (a-b) Layer-wide attention scores. In the original model (a), attention is scattered across multiple ghost sub-diagonals. WavePrune (b) effectively eliminates these artifacts, allowing attention to concentrate on the intended lags. (c-d) We use head 12 as an example to illustrate the elimination of ghost sub-diagonals after applying WavePrune.}
    \label{fig:qwen3_layer32_attn_scores_comparison}
\end{figure}

\newpage
\begin{figure}[!htbp]
    \centering
    \begin{minipage}[b]{0.44\textwidth}
        \centering
        \appendixfig[width=\textwidth,height=2\textwidth]{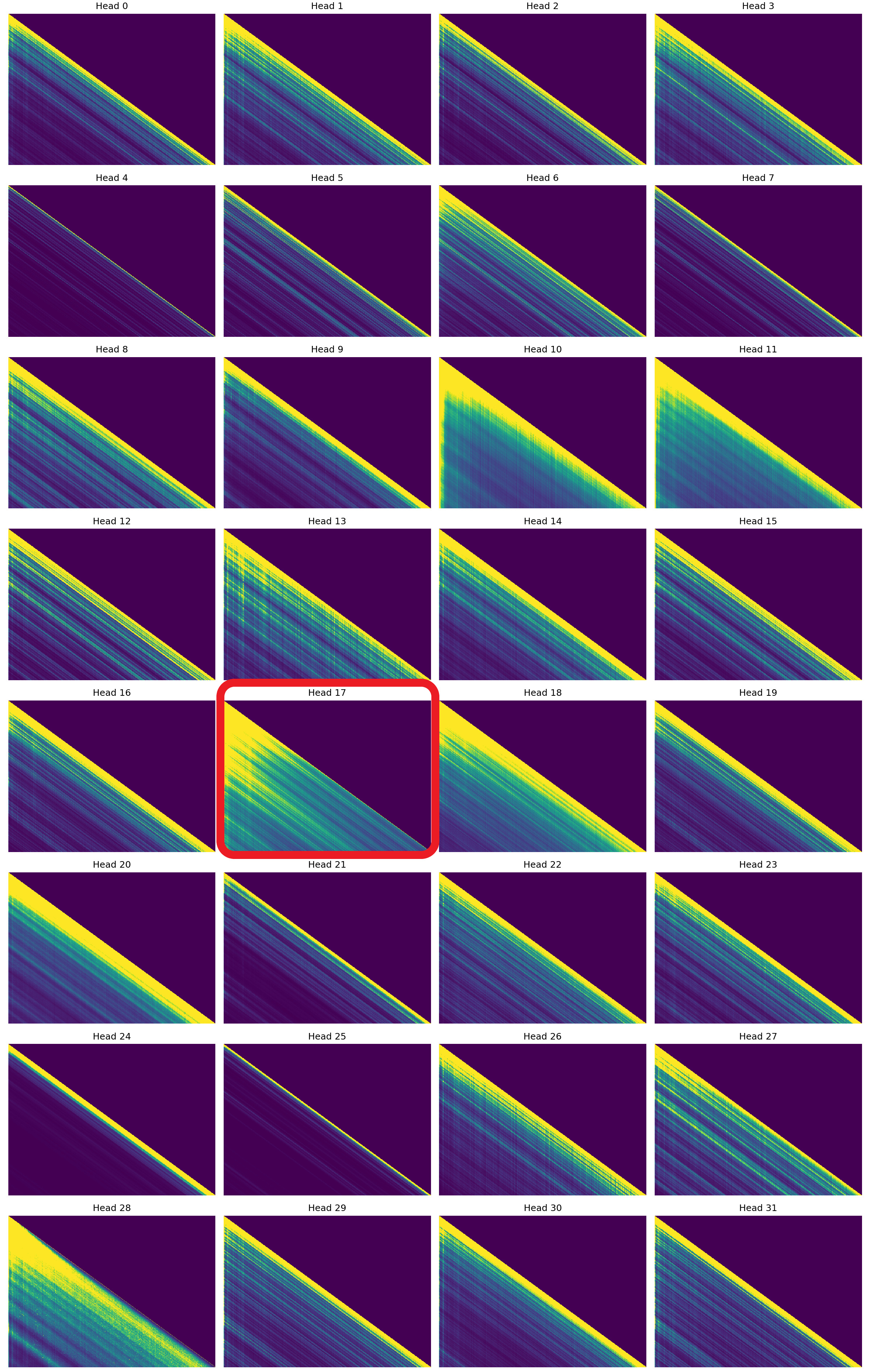}
        \subcaption{Attention scores (original)}
    \end{minipage}
    \hspace{0.08\textwidth}
    \begin{minipage}[b]{0.44\textwidth}
        \centering
        \appendixfig[width=\textwidth,height=2\textwidth]{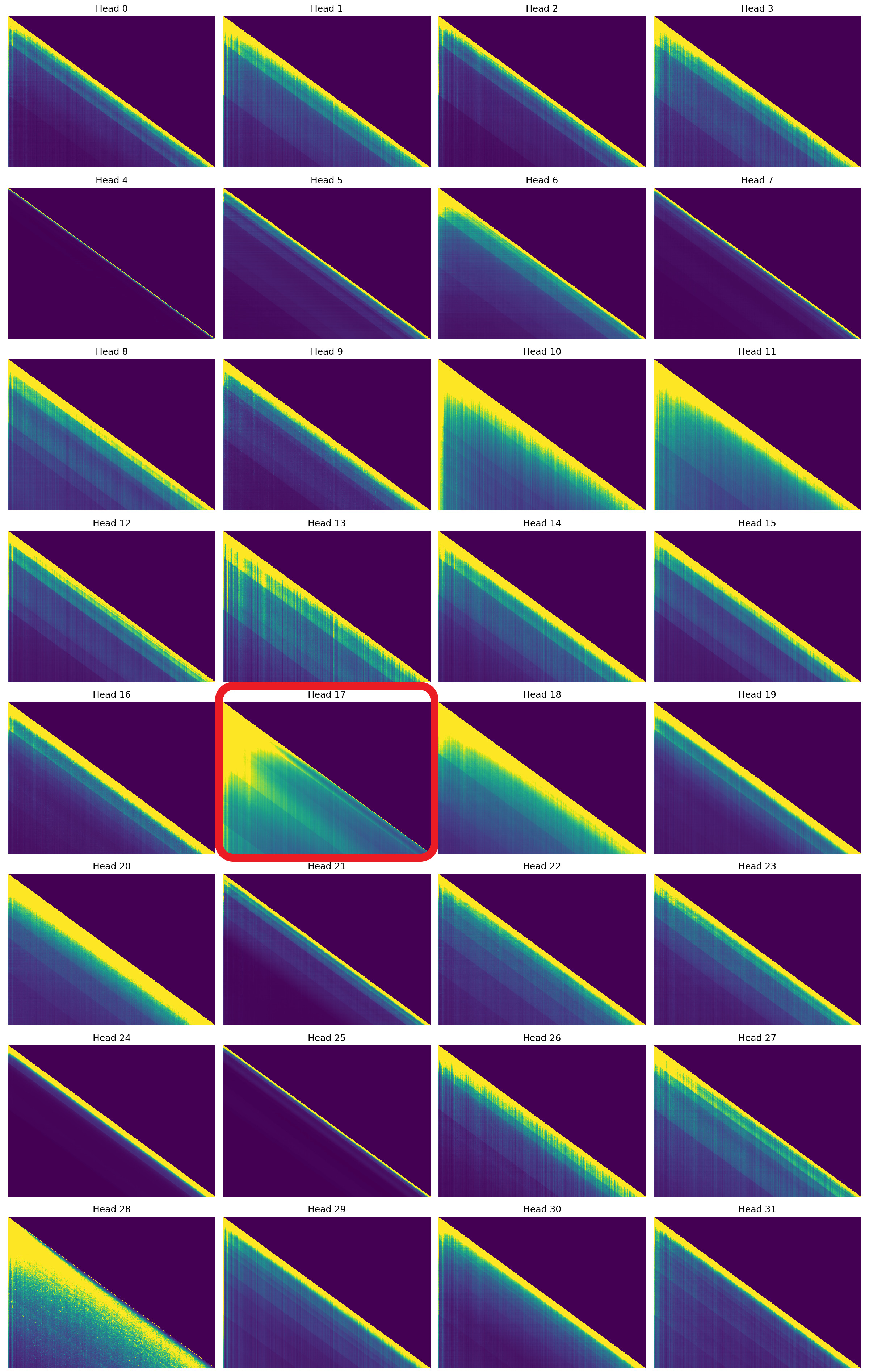}
        \subcaption{Attention scores (after WavePrune)}
    \end{minipage}
    \\[0.5em]
    \begin{minipage}[b]{0.44\textwidth}
        \centering
        \appendixfig[width=\textwidth,height=\textwidth]{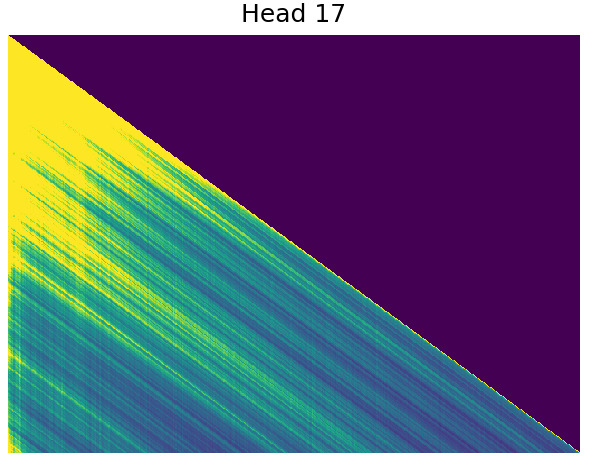}
        \subcaption{Attention scores (original, head 17)}
    \end{minipage}
    \hspace{0.08\textwidth}
    \begin{minipage}[b]{0.44\textwidth}
        \centering
        \appendixfig[width=\textwidth,height=\textwidth]{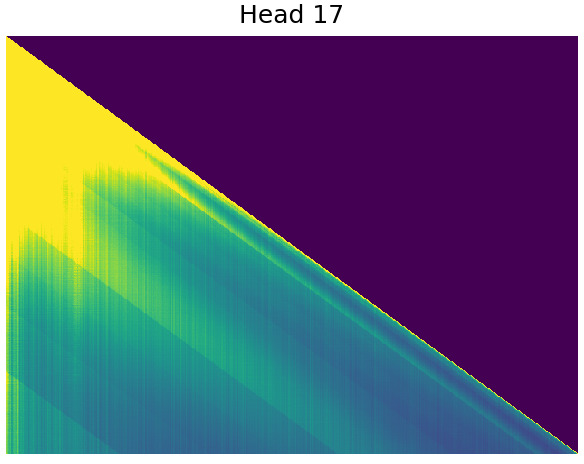}
        \subcaption{Attention scores (after WavePrune, head 17)}
    \end{minipage}
    \caption{Attention scores in Ministral3-3B's layer 1. (a-b) Layer-wide attention scores. In the original model (a), attention is scattered across multiple ghost sub-diagonals. WavePrune (b) effectively eliminates these artifacts, allowing attention to concentrate on the intended lags. (c-d) We use head 17 as an example to illustrate the elimination of ghost sub-diagonals after applying WavePrune.}
    \label{fig:ministral_layer1_attn_scores_comparison}
\end{figure}

\newpage

\begin{figure}[!htbp]
    \centering
    \appendixfig[width=0.66\textwidth,height=0.66\textwidth]{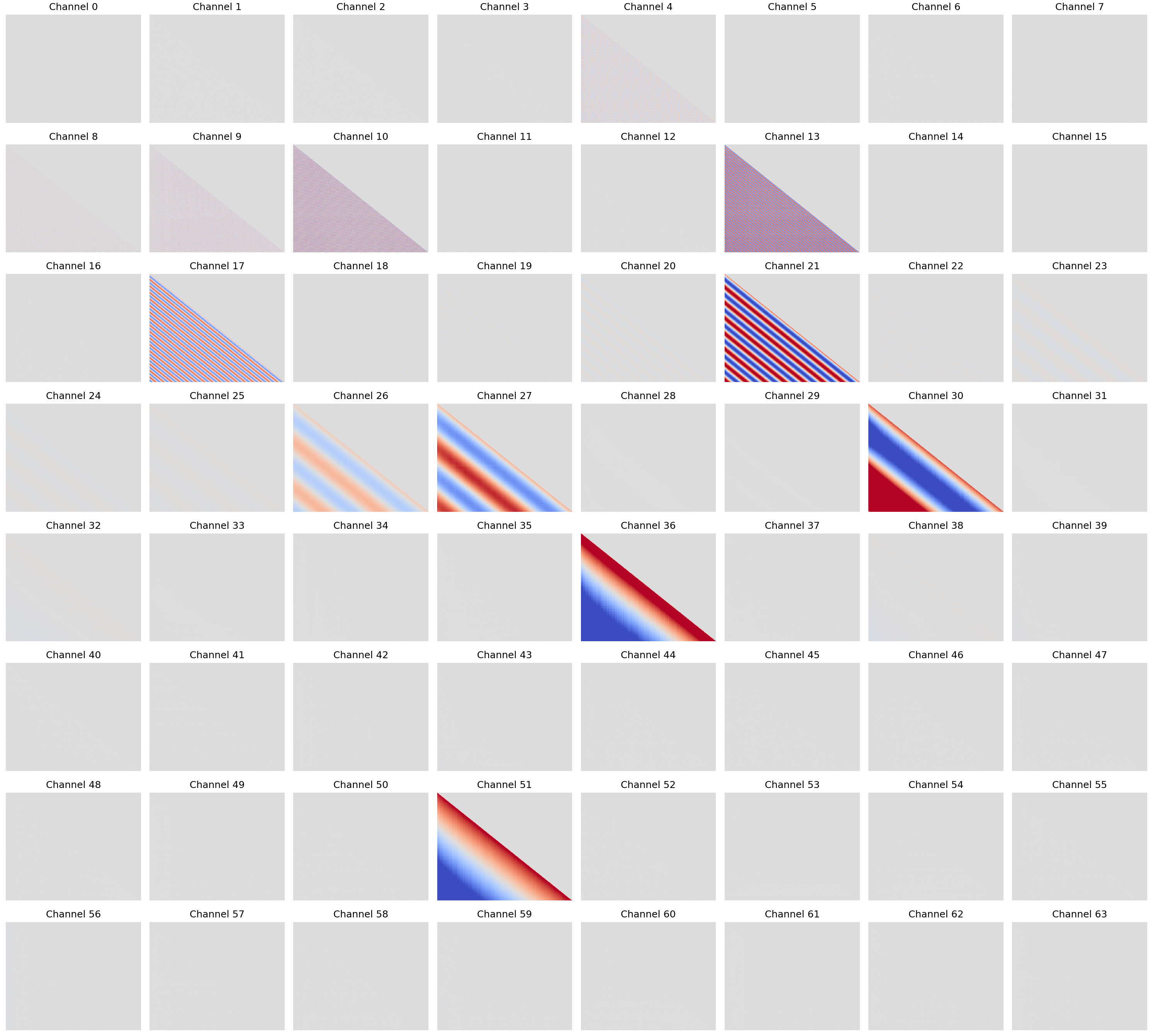}
    \caption{Channel-wise decomposition of attention logits for Qwen3-8B, layer 0 head 20 (centered by subtracting the mean logit value).}
    \label{fig:qwen3_layer0_head20_channel_visualization_original}
\end{figure}
\begin{figure}[!htbp]
    \centering
    \appendixfig[width=0.66\textwidth,height=0.66\textwidth]{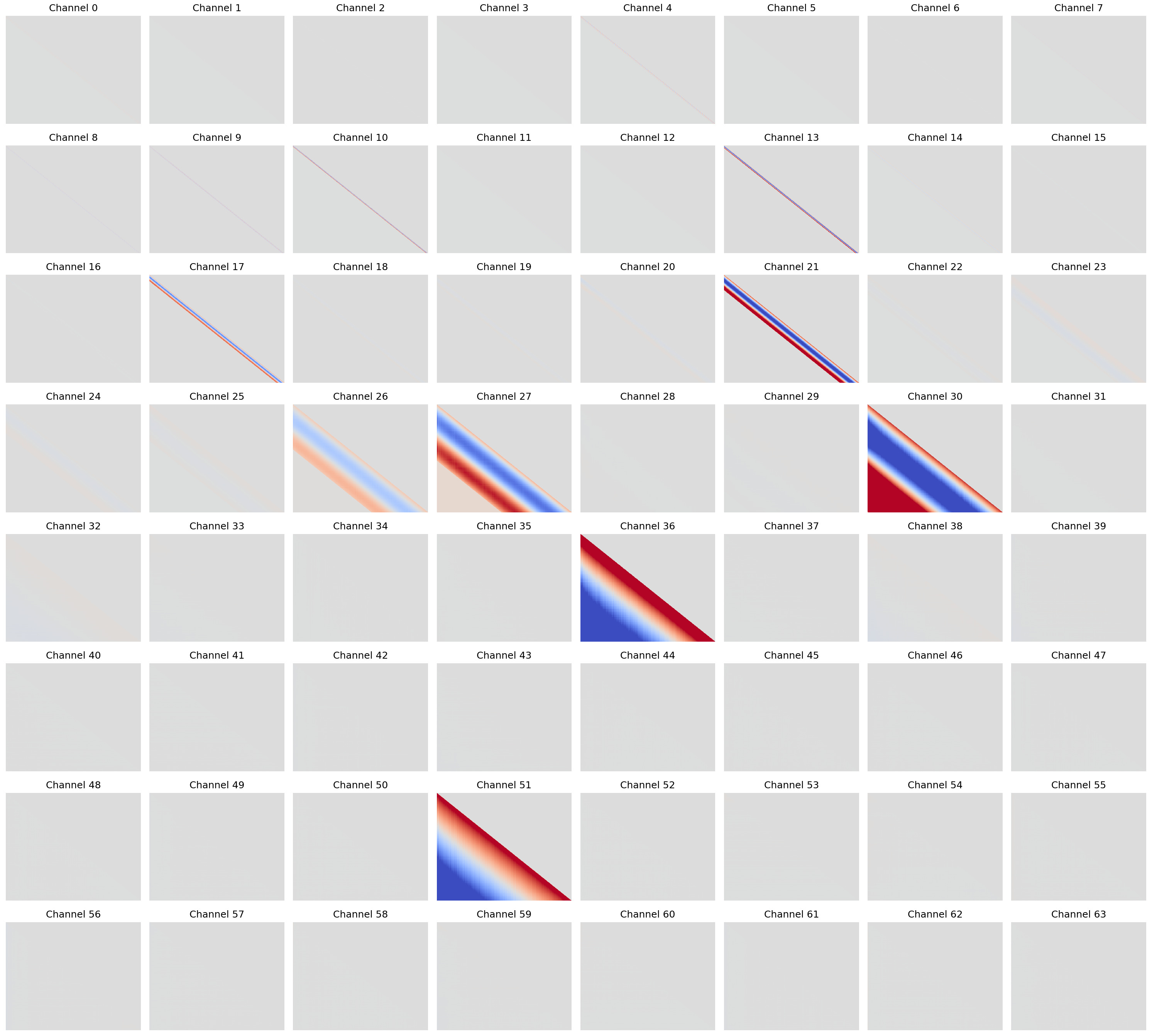}
    \caption{Channel-wise decomposition of attention logits for Qwen3-8B, layer 0 head 20 after applying WavePrune (centered by subtracting the mean logit value).}
\label{fig:qwen3_layer0_head20_channel_visualization_waveprune}
\end{figure}

\newpage
\begin{figure}[!htbp]
    \centering
    \appendixfig[width=0.66\textwidth,height=0.66\textwidth]{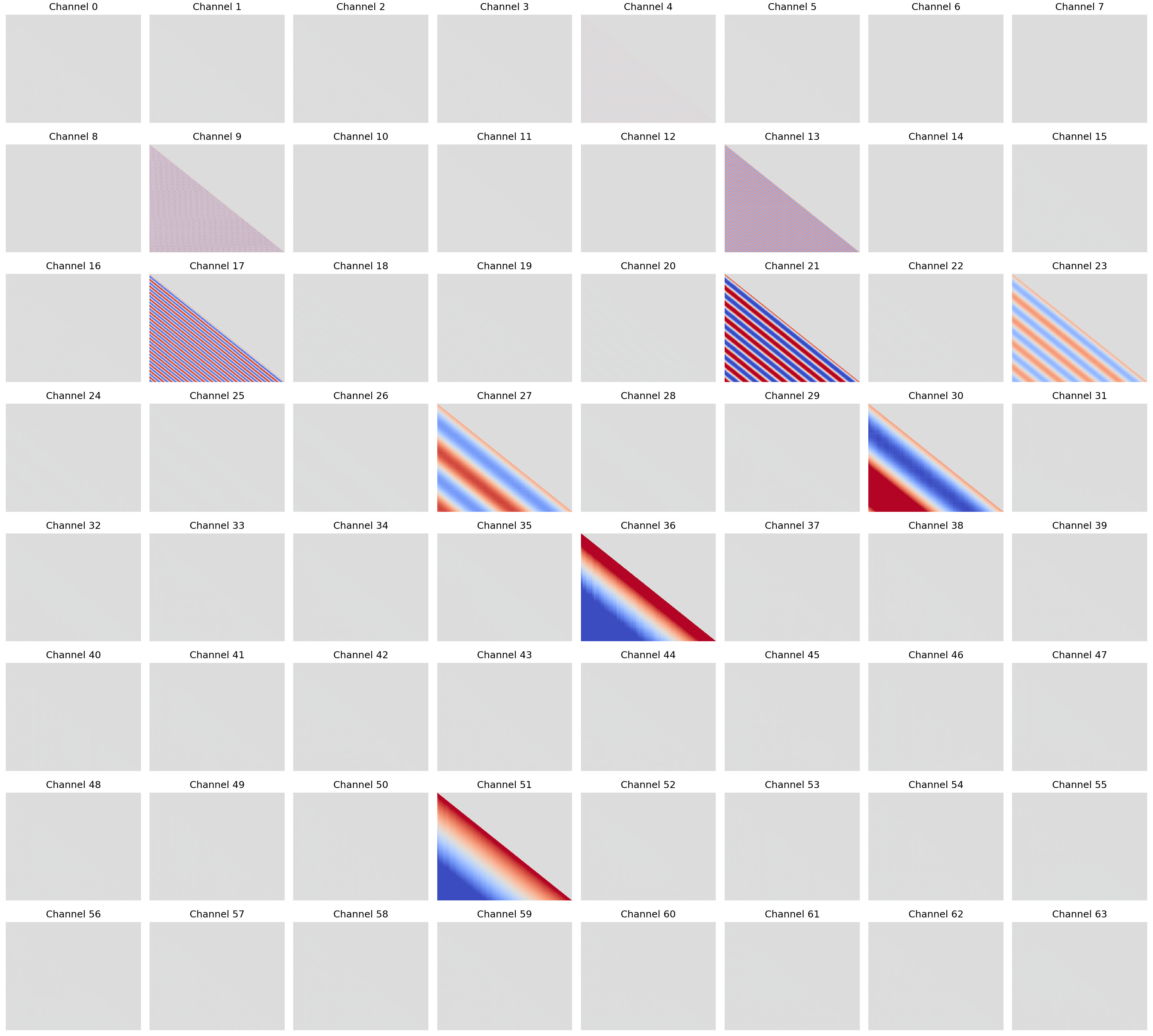}
    \caption{Channel-wise decomposition of attention logits for Qwen3-8B, layer 0 head 6 (centered by subtracting the mean logit value).}
    \label{fig:qwen3_layer0_head6_channel_visualization_original}
\end{figure}
\begin{figure}[!htbp]
    \centering
    \appendixfig[width=0.66\textwidth,height=0.66\textwidth]{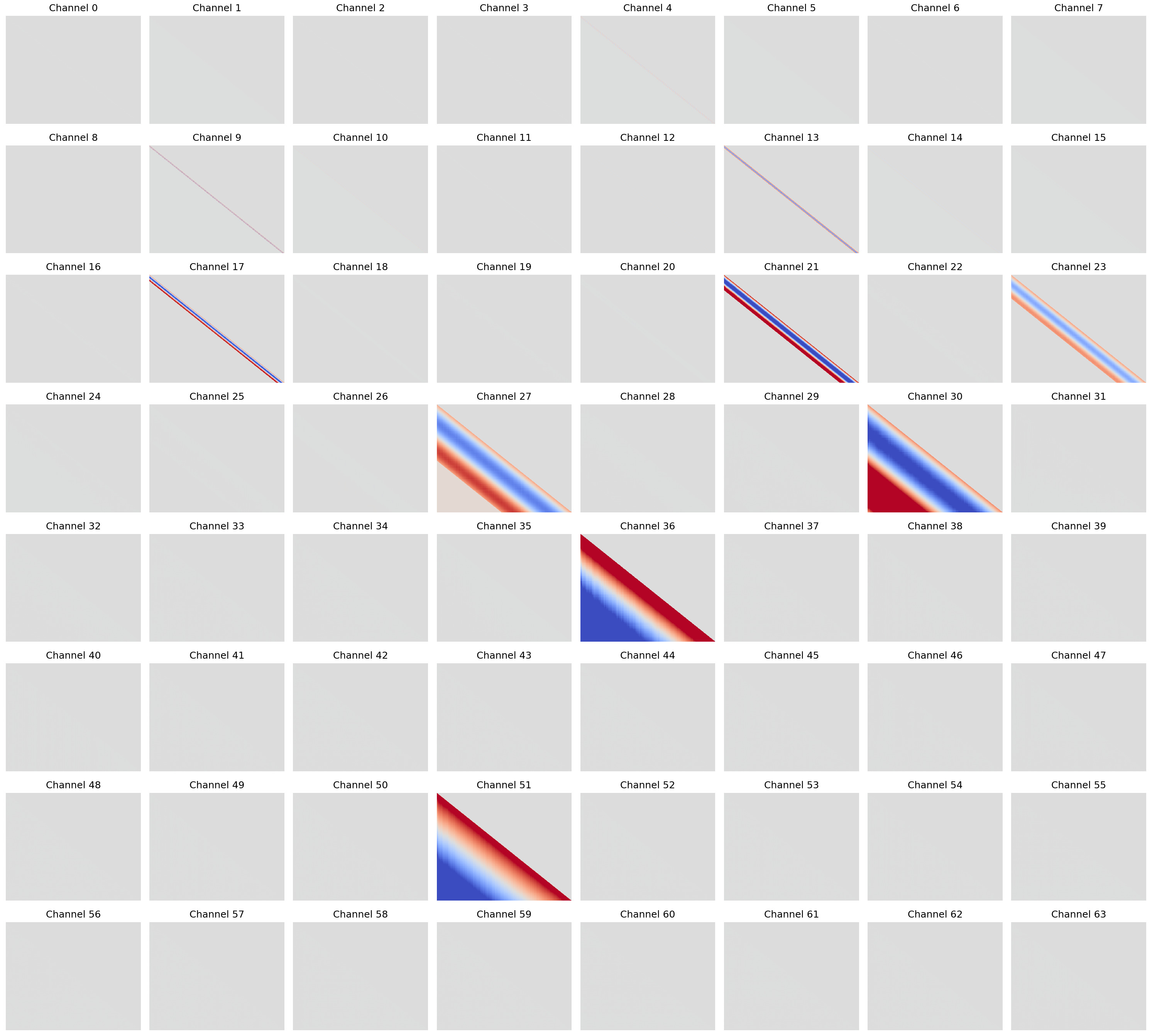}
    \caption{Channel-wise decomposition of attention logits for Qwen3-8B, layer 0 head 6 after applying WavePrune (centered by subtracting the mean logit value).}
    \label{fig:qwen3_layer0_head6_channel_visualization_waveprune}
\end{figure}

\end{document}